\newcommand{\depth}{k}
\def\R{\mathbb{R}}
\def\E{\mathbb{E}}
\def\U{\mathbb{U}}
\def\P{\mathbb{P}}
\def\v{{\rm Var}}
\def\cov{{\rm Cov}}
\newtheorem{thm}{Theorem}[section]
\newtheorem{cor}[thm]{Corollary}
\newtheorem{lem}[thm]{Lemma}
\newtheorem{prp}[thm]{Proposition}
\newtheorem{rem}[thm]{Remark}
\theoremstyle{definition}
\newtheorem{dfn}[thm]{Definition}
\theoremstyle{example}
\newcommand{\1}{\mathds{1}}
\newcommand{\bm}[1]{\mathbf{#1}}
\title{Asymptotic Normality of Infinite Centered Random Forests - Application to Imbalanced Classification}
\author{Moria Mayala, Erwan Scornet,  Charles Tillier and  Olivier Wintenberger }
\date{}
\begin{document}
	\maketitle
	\begin{abstract}
 Many classification tasks involve imbalanced data, in which a class is largely underrepresented. Several techniques consists in creating a rebalanced dataset on which a classifier is trained. In this paper, we study theoretically such a procedure, when the classifier is a Centered Random Forests (CRF).
We establish a Central Limit Theorem (CLT) on the infinite CRF with explicit rates and exact constant. We then prove that the CRF trained on the rebalanced dataset exhibits a bias, which can be removed with appropriate techniques. Based on an importance sampling (IS) approach, the resulting debiased estimator, called IS-ICRF, satisfies a CLT centered at the prediction function value. For high imbalance settings, we prove that the IS-ICRF estimator enjoys a variance reduction compared to the ICRF trained on the original data. Therefore, our theoretical analysis highlights the benefits of training random forests on a rebalanced dataset (followed by a debiasing procedure) compared to using the original data. 
Our theoretical results, especially the variance rates and the variance reduction, appear to be valid for Breiman's random forests in our experiments.\\
		
		\noindent\textbf{Keywords:}  binary classification, imbalanced classification, centered random forests, asymptotic normality.
	\end{abstract}

	\section{Introduction} 
	\subsection{Motivation and context}
In binary classification, the target variable \( Y \in \{0,1\} \) is predicted based on a covariate \( X \) taking values in the feature space \( \mathcal{X} \) and the objective is to estimate accurately the regression function 
\begin{align}\label{eq:mu_intro}
\mu (\bm{x}) := \mathbb{P}(Y=1 | X=\bm{x}), \quad \bm{x} \in \mathcal{X}.
\end{align}
Class imbalance occurs when one class has substantially more instances than the other, typically referred to as the majority and minority classes, respectively. Under such conditions, standard machine learning algorithms, including decision trees, neural networks, Bayesian networks, k-nearest neighbors, and support vector machines, often exhibit poor performance when predicting minority class instances \citep[][]{krawczyk2016learning}. 

In many real-world applications, the minority class corresponds to rare but critical events. Although collected datasets continue to grow in size, these rare events remain severely underrepresented, thereby exacerbating the challenges associated with class imbalance. Common examples include predicting manufacturing equipment failures \citep[][]{lee2021early}, detecting spam emails \citep[][]{liu2017addressing}, identifying fraudulent credit card transactions \citep[][]{dhankhad2018supervised}, diagnosing rare medical conditions \citep[][]{mena2006machine}, and detecting cyber-attacks \citep[][]{okutan2018forecasting}. 

Introduced in  \cite{breiman2001random}, Random Forests (RF) are a powerful and widely used ensemble learning method that combine multiple decision trees, where each tree is trained on a randomly sampled subset of the data. 
Among their key advantages are their ability to handle high-dimensional data, capture complex patterns, provide feature importance and handle a variety of supervised learning tasks. Due to their versatility and strong performance across various domains, random forests have become a fundamental tool in machine learning applications \citep[see, e.g.,][]{fernandez2014we}. 
There exist several Central Limit Theorems (CLT) in the literature that have been derived for random forests, drawing from the connection between random forests and U-statistics \citep[][]{lee2019u,hoeffding1948class}. In this line of work, to facilitate statistical analysis, bootstrap is often replaced by subsampling  without replacement, and the number of trees in the forest is assumed to tend to infinity, thus leading to the 
\emph{subsampling Infinite Random Forest} estimator, defined from i.i.d.\ training data \( \bm Z_n = (Z_1, \ldots, Z_n) \) by
\begin{align}\label{eq:intro}
	\widehat{\mu}^{s}(\bm x):= \binom{n}{s}^{-1} \sum_{S \subset \{ 1,\ldots,n\}, |S|=s} \mathbb{E}[T^s \mid \bm Z_S], \qquad \bm x \in \mathcal{X},
\end{align}
where \( T^s := T^s(\bm{x}; \U; \bm Z_S) \) denotes the prediction at point $\bm x$ of a tree trained on a subsample \( \bm Z_S \), drawn uniformly without replacement from \( \bm Z_n \), and where \( \U \) governs the randomization of the tree structure. Note that $S \subset \{ 1,\ldots,n\}$ is a random subset of indices of cardinality $s$, corresponding to the observations selected uniformly without replacement during the subsampling step. Within this framework, most existing CLTs in the literature do not provide explicit convergence rates. This limitation stems from their dependence on the quantity
\[
V_1^s = \v\big( \mathbb{E}[T^s \mid Z_1] - \mathbb{E}[T^s] \big),
\]
whose explicit characterization is challenging for Breiman's random forests due to their intricate construction.

\citet{wager2018estimation} and \citet{mentch2016quantifying} were the first to use the connection between random forests estimates and U-statistics to establish central limit theorem. CLT established in \citet{mentch2016quantifying} hold for different asymptotic regimes in the number of trees with a subsample size $s=o(\sqrt{n})$. Similar results have been established for subsampling with replacement \citep[see][]{zhou2021v}.
\citet{wager2018estimation} modify the construction of Breiman's forests by considering honest trees (for which typically one part of the data set is used to construct the trees and the other part to estimate the prediction inside each cell) with a positive probability of splitting along each direction and enforcing that at least a fraction $\alpha$ of observations fall into both children for each split. Assuming that the subsample size $s = n^\beta$, where $\beta$ tends to one when the number of variables tends to infinity, \citet{wager2018estimation} prove a CLT centered at the regression function for the modified forest. However, the rate of consistency depends on the forest variance, which is not explicit.
\citet{peng2022rates} investigate the rate of convergence of CLT for random forests and establish upper bound on the convergence rate of forests whose tree construction is independent of the labels and whose number of observations per leaf is controlled. 

In this work, we adopt a different perspective by analyzing a simplified random forest model, namely the Centered Random Forests (CRF) introduced by \citet{Breiman2004CONSISTENCYFA} (see also \citet{biau2012analysis}). More specifically, we focus on subsampling infinite CRFs (ICRF) as defined in \Cref{eq:intro} where individual trees \( T^s \) are constructed as follows: at each node, a feature is selected uniformly at random, and the split occurs at the midpoint along the chosen feature. Compared to Breiman's original forests, CRFs are more amenable to theoretical analysis since the tree construction is independent of both the covariates and the target variables. We refer the reader to the related work section below for a detailed review of CRFs.

The remainder of the paper is organized as follows. We conclude this introductory section by outlining our main contributions and briefly reviewing the relevant literature in \Cref{subsec:contribution} and \Cref{subsec:relatedwork}, respectively. In \Cref{section:3}, we formalize the problem setting, introduce the centered random forest model, and establish the asymptotic normality of the subsampling ICRF estimator in \Cref{th2}. Building on additional assumptions regarding the random forest hyperparameters, we derive in \Cref{th: TCL} a CLT centered at the target quantity $\mu(\bm{x})$. \Cref{sec: underICRF} focuses on ICRFs applied to imbalanced classification problems. We first define the rebalanced data setting, and then derive a CLT for the ICRF trained on this dataset in \Cref{th: TCLB}, along with a corresponding result for its debiased version in \Cref{cor: TCLB}. This section concludes with a comparison of the asymptotic variances of ICRF and its debiased version in \Cref{thm_comparison_variances}. Numerical experiments illustrating our theoretical results are presented in \Cref{sec:numerical}. The proofs of \Cref{th2} and \Cref{th: TCL} are provided in \Cref{sec:Proof_th_TCL}, while those of \Cref{th: TCLB} and \Cref{cor: TCLB} are deferred to \Cref{sec: proof_TCLB}. The proof of \Cref{thm_comparison_variances} is given in \Cref{sec:proof_th_compare}.

\subsection{Main contributions}\label{subsec:contribution} 

We start by establishing the asymptotic normality of the subsampling ICRF estimator under mild assumptions, specifically assuming that \( s^2/n = o(2^{\depth} \depth^{-(d-1)/2}) \), where \( s \) denotes the subsample size, \( n \) the training sample size, \( d \) the dimension of the covariate \( X \), and \( \depth \) the tree depth. In this first CLT, we obtain an explicit convergence rate of \( O\left( n^{1/2} \depth^{(d-1)/4} / 2^{\depth/2} \right) \). However, the subsampling ICRF estimator may be biased, since the centering term does not necessarily coincide with the true value of the regression function \( \mu \).

Under stronger conditions on the hyperparameters, specifically by choosing the tree depth to satisfy \( 2^{\depth} \depth^{- \frac{d-1}{2 }} n^{- \frac{d \log 2}{1 + d \log 2}} \to \infty \), we show that the bias vanishes asymptotically. In this regime, the subsampling ICRF estimator becomes an asymptotically unbiased estimator of \( \mu(\bm{x}) \), assumed to be a Lipschitz continuous function over $\mathcal X$. To the best of our knowledge, this is the first result establishing a CLT for random forests with an explicit convergence rate (including explicit constants) under clearly stated assumptions on the forest parameters.

Then, to address class imbalance, we consider an ICRF estimator trained on a rebalanced dataset obtained through a sampling procedure. Under this setting, we establish a second CLT for the rebalanced ICRF estimator. However, due to the distributional shift introduced by rebalancing, this estimator is biased with respect to the original regression function \( \mu \). To correct for this bias, we apply a debiasing procedure proposed by \citet{mayala2024infinite} and establish a third CLT for the resulting debiased estimator, called IS-ICRF. The main difficulty across these three CLTs lies in obtaining an explicit expression for the variance term \( V_1^s \) and verifying the associated Lindeberg condition \( n V_1^s \to \infty \) as $n\to \infty$. For high imbalance settings, we prove that the IS-ICRF estimator enjoys a variance reduction compared to the ICRF trained on the original data. Therefore, our theoretical analysis highlights the benefits of training random forests on a rebalanced data set (followed by a debiasing procedure) compared to using the original data.  
 
\subsection{Related Work}\label{subsec:relatedwork}
\paragraph{Centered Random Forests.} Several works study Centered Random Forests (CRF), which were introduced by \cite{Breiman2004CONSISTENCYFA} and rigorously analyzed in  an influential paper of \citet{biau2012analysis}. These works primarily examine the mean squared prediction error of CRF but do not address its asymptotic normality. For example, \citet{biau2012analysis} demonstrated a mean squared prediction error of  $O(n^{- 1/(d (4/3)\log 2 + 1)})$. \citet{scornet2016random} modified the RF definition to create Centered Kernel Random Forests,  where trees are grown according to the same selection and splitting procedure as CRF, retrieving the same bound as \citet{biau2012analysis}. Recently,  \cite{klusowski2021sharp} improved the error bound of CRF  to  the order
 $O(n^{- (1+\delta_d)/(d\log 2 + 1)})$, up to a $\log n$ factor, where $\lim_{d \to \infty} \delta_d =0$. The interested reader can refer to \citet{biau2016random} and \citet{scornet2025theory} for a more comprehensive understanding of random forests, from a theoretical perspective.

 \paragraph{Imbalanced learning.} Several strategies have been proposed to address the challenges posed by imbalanced data, ranging from algorithm-level to data-level approaches. Algorithm-level methods involve modifying the learning process to handle class imbalance. This includes assigning higher misclassification costs to the minority class (cost-sensitive learning), adjusting decision thresholds, or using specialized models such as ensemble methods (e.g., boosting techniques like AdaBoost) that focus on difficult-to-classify minority class samples. In contrast, data-level approaches focus on modifying the dataset itself to balance class distribution. Standard approaches include oversampling (repeatedly drawing  samples from the minority class) or undersampling (subsampling the majority class) so that both classes have the same number of elements. Both methods have drawbacks: oversampling can cause overfitting, while undersampling may lose valuable majority class information. 
One of the most famous and commonly used method is SMOTE \citep[Synthetic Minority Over-sampling Technique][]{chawla2002smote} that generates synthetic samples for the minority class  by interpolating between existing observations. For a comprehensive review of methods used in imbalanced classification, we refer to  \cite{ramyachitra2014imbalanced,krawczyk2016learning}. 

\paragraph{Imbalanced learning with random forests.} Many studies address imbalanced classification using RFs. \citet{chen2004using} propose two ways to deal with imbalanced data. For the first one, each tree is built by  bootstrapping the minority class and subsampling the same number of samples from the majority class, a method called Balanced Random Forests (BRFs). For the second one,  cost-sensitive learning is applied by reweighting each sample according to its class probability. \citet{o2019random} propose a density-based Random Forests quantile classifier, showing a connection to BRFs despite not being a data-level method. Additionally, \citet{lee2022downsampling} explore downsampling with active learning to select the most informative samples, mitigating class imbalance effects. Other recent approaches involving deep learning generative techniques such as  CTGAN \citep[][]{xu2019modeling}, transformers \citep[][]{hegselmann2023tabllm} or diffusions \citep[][]{kotelnikov2023tabddpm} have been proposed to solve class imbalance problems. However, they have not yet achieved state-of-the-art performances \citep[see e.g.][]{dsouza2025synthetictabulardatageneration} and often lack theoretical guarantees. 
\newline
 
\noindent \textbf{Notation}: The indicator function of the set $A$ is denoted by $\1\{A\}$ and its cardinality 
is denoted by $|A|$. 
We denote by $\log$ the natural logarithm. For any vector $z \in \mathds{R}^n$ and any subset $J \subset \{1, \hdots, n\}$, $z_J$ denotes the subvector of $z$ composed of the components of $z$ indexed by $J$. We denote by $\mathcal{N}(\mu, \sigma^2)$ the Gaussian distribution with mean $\mu$ and variance $\sigma^2$,  $\Gamma(a)$ the gamma function evaluated at $a$, where $a$ is  a positive constant, $ U([a,b])$ the Uniform distribution on $[a,b]$, $a<b$. 

\section{Infinite Centered Random Forests (ICRF)}\label{section:3}

\subsection{Preliminaries}\label{subsec:Notation}
Let \( Z_i := (X_i, Y_i) \) be  independent and identically (i.i.d. for short) copies of a random pair \( (X, Y) \), where \( X \in \mathcal{X}=[0,1]^d \) (with \( d \geq 2 \)) denotes the covariate vector used to predict the binary response variable \( Y \in \{0,1\} \). We consider a binary classification setting in which the class distribution is imbalanced: the label \( 0 \) corresponds to the majority class, and the imbalance ratio is defined as \( \textup{IR} = n_0 / n_1 \gg 1 \), where \( n_0 \) and \( n_1 \) denote the number of observations in class \( 0 \) and class \( 1 \), respectively. Given a training sample \( \mathbf{Z}_n = (Z_1, \ldots, Z_n) \), this paper focuses on the estimation of the \textit{regression function}
\begin{equation}\label{eq:px}
    \mu (\bm{x}):=\mathbb{P}(Y=1 | X=\bm{x}) ,\, \qquad \bm{x}\in[0,1]^d,
    \end{equation} 
    using random forests. 

\citet{breiman2001random} Random Forests (RF) are the most widely used RF algorithms. Unfortunately, they are notoriously difficult to analyze theoretically notably due to their tree construction which involves a data-dependent splitting criterion. In order to establish CLT with explicit rates of convergence, we consider instead the Centered Random Forests (CRF). CRF were originally proposed by \citet{Breiman2004CONSISTENCYFA} and later studied by \citet{biau2012analysis} and \citet{klusowski2021sharp}. They fall within the broader class of \textit{Purely Random Forests}, which do not rely on the dataset for building individual trees.  Centered Random Forests are built from Centered Random Trees (CRT), whose construction is detailed below. 

\vspace{0.5cm}
\noindent \textbf{Centered Random Trees algorithm:}
\vspace{0.3cm}
\begin{enumerate}
    \item[(i)] Select uniformly at random without replacement $s$ observations among the $n$ original ones. Denote by $S \subset \{1, \hdots, n\}$ the corresponding set of indices.
    \item[(ii)] Initialize with $[0,1]^d $ as the root.
     \item[(iii)] At each node, a coordinate among $\{1, \hdots, d\}$ is chosen  uniformly at random.
     \item[(iv)] Split the node at the midpoint of the interval along the direction of the selected coordinate in step (iii).
     \item[(v)] Repeat steps (iii) and (iv) for the two daughter nodes until each node has been split exactly $\depth$ times, where $\depth$ is an integer. 
     \item[(vi)] The prediction of the tree at a point ${\bf x}$ is computed as the average of the labels of the points that are selected in step (i) \textit{and} that fall in the leaf containing ${\bf x}$. This prediction is denoted $T^s(\bm{x}; \U; \bm Z_S)$ where $\U$ stands for the randomization of the eligible direction for splitting (step (iii)). 
\end{enumerate}

\begin{rem}\label{rem:trees}
 Since step (v) of the CRT algorithm is repeated $\depth$ times, the resulting tree contains $2^{\depth}$ leaves (terminal nodes). 
 \end{rem} 

\begin{rem}
Note that the CRT algorithm defined above relies on subsampling without replacement, rather than the standard bootstrap (sampling with replacement), to facilitate statistical analysis for several reasons. First, subsampling reduces dependencies between trees, as each subset is drawn independently, whereas bootstrap samples induce correlations through repeated observations. Thus, variance decomposition becomes more tractable. Besides, the i.i.d. nature of the training samples resulting from subsampling eases the application of classical limit theorems, such as CLT.  
\end{rem}

We now formalize the prediction of a CRT. Let \( S = \{i_1, \hdots, i_s\} \subset \{1, \ldots, n\} \) denote the random set of indices selected during sampling step (i) of the CRT algorithm, with \( |S| = s \), and define \( \bm Z_S := (Z_{i_1}, \ldots, Z_{i_s}) \) and \( \bm X_S := (X_{i_1}, \ldots, X_{i_s}) \) as the corresponding subsamples and covariate vectors, respectively.
 The CRT prediction at a point \( \bm{x} \in [0,1]^d \) based on the subsample \( \bm Z_S \), denoted by \( T^s(\bm{x}; \U; \bm Z_S) \), is given by

\begin{equation}\label{eq:T_ICRF}
		T^s(\bm{x}; \U; \bm Z_S):= \frac{\sum_{i \in S} Y_{i}\1{\{X_{i} \in L_{\U}(\bm{x}) \}}}{N_{L_{\U}(\bm{x})}( \bm X_S)} , 
	\end{equation}
where \( L_\U(\bm{x}) \) is the leaf of the tree built with randomness \( \U \) that contains \( \bm{x} \in [0,1]^d \) and 
	$N_{L_{\U}(\bm{x})}(\bm X_{S}):= 
	\sum_{i \in S} \1{\{X_{i} \in L_{\U}(\bm{x}) \}} \,$ 
is the number of selected observations (indexed by $S$) falling into the leaf $L_{\U}(\bm{x})$. By convention,  the prediction of the tree $T^s(\bm{x}; \U; \bm Z_S)$ is arbitrarily set to zero when there is no observation falling into the leaf $L_{\U}(\bm{x})$. The prediction of $B$ different trees built with subsamples indexed by $S_1, \hdots, S_B$ and split randomization $\U_1, \hdots, \U_B$ can then be averaged to form the forest prediction:
\begin{align}\label{eq: random_sub}
	\widehat{\mu}_{B,s}( \bm{x}; \mathbf{Z}_n):= \frac{1}{B} \sum_{b=1}^B T^s(\bm{x}; \U_{b};\bm Z_{S_b}),  \qquad  \bm{x} \in [0,1]^d.
\end{align}

We are now ready to define the Infinite Centered Random Forests (ICRF) estimator. 
\begin{dfn}[Infinite Centered Random Forests (ICRF)]\label{def:ICRF}
The ICRF at point $\bm x\in [0,1]^d$ is defined as 	\begin{align}
 \label{def:subs_classif}
		\widehat{\mu}_{s}^{\textup{ICRF}}(\bm x):= \binom{n}{s}^{-1} \sum_{S \subset \{1, \hdots, n\}, |S|=s} \mathbb{E}[T^s(\bm{x}; \U;\bm Z_S) \mid \bm Z_S]\,,
	\end{align}
 where $T^s(\bm{x}; \U; \bm Z_S)$ are the tree predictions at point  $\bm x\in [0,1]^d$, as defined in  \Cref{eq:T_ICRF}. 
 \end{dfn}

The term \textit{Infinite} refers to the theoretical setting where the number of trees \( B \) in \Cref{eq: random_sub} tends to infinity.
By the Law of Large Numbers, the forest prediction converges to its expectation, removing the additional Monte Carlo variability.

In this setting, the infinite forest takes the form described in Equation~\eqref{def:subs_classif} (see \citet{mayala2024infinite} for more details), where the summation extends over all subsets \( S \subset \{1, \hdots, n\} \) of size \( |S| = s \), thus turning into 
a complete \textit{U-statistic}, for which convenient tools such as the Hoeffding decomposition and exchangeability can be applied. Although real-world implementations rely on a finite number of trees, the \( B \to \infty \) regime provides a foundational framework for understanding the statistical properties of such RF. We refer to \cite{scornet2016asymptotics, wager2018estimation, peng2022rates} for more details on infinite random forests.
 
\subsection{Asymptotic normality of ICRF}\label{subsec:ICRF}

In the sequel, we need the following three conditions.

\begin{enumerate}
\item [\textbf{(H0)}] \label{hyp3}   \textbf{Covariate Condition:} The covariate vector $X$ is uniformly distributed on $[0,1]^d$.  

\item [\textbf{(H1)}] \label{hyp1} \textbf{Smoothness Condition:}
The regression function $\mu$ is a $L$-lipschitz function with respect to  the max norm.
\item [\textbf{(G1)}] \textbf{Tree complexity Condition:} The subsample size $s$ and the tree depth $\depth$ tend to infinity and satisfy $s/(\depth 2^{\depth})\to \infty$,\, as $n\to \infty$.
\end{enumerate}

The next theorem establishes a CLT for the ICRF estimator, when the sample size $n$, the subsample size $s$ and the tree depth $\depth$ tend to infinity. 

\begin{thm}\label{th2}
	Let  $\bm x\in [0,1]^d$, $d\ge 2$ and $\widehat{\mu}_{s}^{\textup{ICRF}}({\bf x})$ be the ICRF estimator at point  $\bm x$ (see \Cref{def:ICRF}). Assume \textbf{(H0)}, \textbf{(H1)} and \textbf{(G1)} hold and  
	\begin{align}
	    \frac{ n 2^{\depth}}{s^2 \depth^{(d-1)/2}} \to \infty, \label{condition_th1}
	\end{align}
as $n,s, \depth \to \infty$.  Then, for $\bm x\in [0,1]^d$ we have
	$$
\sqrt{\dfrac {n \depth^{(d-1)/2}}{2^{\depth}}} \left( \widehat{\mu}_{s}^{\textup{ICRF}}(\bm x)-\E[\widehat{\mu}_{s}^{\textup{ICRF}}(\bm x)]\right)\overset{d}{\underset{n \to \infty}{\longrightarrow}} \mathcal N(0,C(d)\mu(\bm x)(1-\mu(\bm x)))\,,
$$
with
\begin{align}
C(d)&=\dfrac{2\Gamma(d-1)}{(\log 2)^{d-1}\Gamma((d-1)/2)}\E\left[\left( \frac{\|(\bm{N}-\overline{\bm{N}} \1)\|_2}{\|(\bm{N}-\overline{\bm{N}} \1)\|_1}\right)^{d-1} \right]\,,
\end{align}
where $\bm N = ( N_1, \hdots,  N_d)$ with $ N_1, \hdots,  N_d$ independent Gaussian random variables $\mathcal{N}(0,1)$ and $\overline{\bm N} = (1/d) \sum_{j=1}^d  N_j$. 
\end{thm}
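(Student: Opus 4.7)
The estimator $\widehat{\mu}_s^{\textup{ICRF}}(\bm x)$ is a complete U-statistic of order $s$ with symmetric kernel $h(z_1,\ldots,z_s) := \mathbb{E}[T^s(\bm{x};\U;(z_1,\ldots,z_s))]$, so the natural plan is to apply the Hoeffding decomposition and isolate the Hajek projection $T_n^{(1)} := (s/n)\sum_{i=1}^n h_1(Z_i)$, where $h_1(z) := \mathbb{E}[h\mid Z_1=z]-\mathbb{E}[h]$ and $V_1^s := \mathrm{Var}(h_1(Z_1))$. Orthogonality of the decomposition gives that the residual $R_n := \widehat{\mu}_s^{\textup{ICRF}}(\bm x) - \mathbb{E}[\widehat{\mu}_s^{\textup{ICRF}}(\bm x)] - T_n^{(1)}$ has variance dominated by $\binom{s}{2}^2\binom{n}{2}^{-1}V_2^s$ plus higher-order terms, and the near-additivity of $h$ derived below will force the $V_j^s$ for $j\ge 2$ to be small enough that, under \eqref{condition_th1} combined with \textbf{(G1)}, $R_n$ is negligible relative to $T_n^{(1)}$. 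Since $|h_1|\le 1$ (because $T^s\in[0,1]$), the Lindeberg condition for the i.i.d.\ sum $T_n^{(1)}$ is automatic as soon as $nV_1^s\to\infty$, which is precisely the content of \eqref{condition_th1} once the explicit expansion of $V_1^s$ is in hand; the classical CLT then delivers the target limit with the variance $C(d)\mu(\bm x)(1-\mu(\bm x))$.

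The heart of the proof is the explicit asymptotic of $V_1^s$. Condition on the tree randomness $\U$, set $W_i := \1\{X_i\in L_\U(\bm x)\}$ and $N_{-1}:=\sum_{i=2}^sW_i$. Equation \eqref{eq:T_ICRF} together with the exact identity $\mathbb{E}[1/(1+N_{-1})\mid\U]=2^{\depth}(1-(1-2^{-\depth})^s)/s$ and the analogous formula for $\mathbb{E}[1/(2+N'')\mid\U]$ (both equivalent to $2^{\depth}/s$ under \textbf{(G1)}, which enforces $s\cdot 2^{-\depth}\to\infty$) produces the expansion
\[
\mathbb{E}[T^s\mid\U, Z_1] - \mathbb{E}[T^s\mid\U] = \frac{2^{\depth}}{s}\,W_1\bigl(Y_1-\mu_L(\U)\bigr)(1+o(1)) + O(1/s),
\]
with $\mu_L(\U) := \mathbb{E}[Y\mid X\in L_\U(\bm x)]$. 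Averaging over $\U$, using \textbf{(H1)} to replace $\mu_L(\U)$ by $\mu(\bm x)$ up to an error of order the typical leaf diameter $\sim 2^{-\depth/d}$, and discarding the $O(1/s)$ piece (whose variance is dominated by the leading term) yields $h_1(Z_1) \sim (2^{\depth}/s)(Y_1-\mu(\bm x))\pi(X_1)$ with $\pi(x):=\mathbb{P}_{\U}(x\in L_\U(\bm x))$. Squaring, using \textbf{(H0)} to recognize $\mathbb{E}[\pi(X_1)^2]=\mathbb{E}_{\U,\U'}\bigl[|L_\U(\bm x)\cap L_{\U'}(\bm x)|\bigr]$ for independent copies $\U,\U'$, together with $\mathrm{Var}(Y\mid X)\to\mu(\bm x)(1-\mu(\bm x))$ as $X\to\bm x$, one arrives at
\[
V_1^s \sim \Bigl(\tfrac{2^{\depth}}{s}\Bigr)^2 \mu(\bm x)(1-\mu(\bm x))\,\mathbb{E}_{\U,\U'}\!\bigl[|L_\U(\bm x)\cap L_{\U'}(\bm x)|\bigr].
\]

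Let $K_j,K_j'$ count the splits of $\U,\U'$ along coordinate $j$; these form independent $\mathrm{Multinomial}(\depth;1/d,\ldots,1/d)$ vectors. Dyadic nesting of the cells containing $\bm x$ gives $|L_\U(\bm x)\cap L_{\U'}(\bm x)| = \prod_j 2^{-\max(K_j,K_j')}$, and the identity $\sum_j\max(K_j,K_j') = \depth + \tfrac{1}{2}\|K-K'\|_1$ reduces the intersection expectation to the Laplace transform $2^{-\depth}\mathbb{E}[\exp(-\tfrac{\log 2}{2}\|K-K'\|_1)]$. By the multinomial CLT, $\sqrt{d/\depth}(K-(\depth/d)\1)\Rightarrow\bm N-\overline{\bm N}\1$; combined with a small-ball Laplace asymptotic on the hyperplane $\{u\in\mathbb R^d:\sum_j u_j=0\}$ (evaluate the limiting Gaussian density at $0$, rescale $u\mapsto u/\sqrt{\depth}$, pass to polar coordinates on the $(d-1)$-dimensional unit $\ell_2$-sphere of the hyperplane, and perform the radial integration $\int_0^\infty r^{d-2}e^{-r\|\omega\|_1}dr=\Gamma(d-1)\|\omega\|_1^{-(d-1)}$), this produces the rate $\depth^{-(d-1)/2}$ together with the factors $\Gamma(d-1)/\Gamma((d-1)/2)$, $(\log 2)^{-(d-1)}$, and the spherical expectation which reassembles into $\mathbb{E}[(\|\bm N-\overline{\bm N}\1\|_2/\|\bm N-\overline{\bm N}\1\|_1)^{d-1}]$. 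This yields $V_1^s \sim C(d)\mu(\bm x)(1-\mu(\bm x))\cdot 2^{\depth}/(s^2\depth^{(d-1)/2})$, and the CLT applied to the i.i.d.\ sum $T_n^{(1)}$ then gives the claimed limit after rescaling by $\sqrt{n\depth^{(d-1)/2}/2^{\depth}}$.

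The principal technical obstacle is the small-ball Laplace asymptotic just invoked: the limiting law is singular (supported on the hyperplane $\{\sum u_j=0\}$) and the rescaled multinomial approximates it in total variation only at rate $\depth^{-1/2}$, so interchanging the expectation with the $\depth\to\infty$ limit inside the Laplace transform must be justified via an Edgeworth-type refinement of the multinomial CLT or a direct characteristic-function bound uniform on a ball of radius $\depth^{-1/2}$. A secondary difficulty is quantifying the $V_j^s$ for $j\ge 2$: the decomposition $(Y_1W_1+Y_2W_2)f_{W_1+W_2} = Y_1W_1 f_1 + Y_2W_2 f_1 + W_1W_2(Y_1+Y_2)(f_2-f_1)$ isolates the genuinely nonlinear part of the kernel, and the estimates $f_2-f_1=O((2^{\depth}/s)^2)$ together with $\mathbb{P}(W_1W_2=1)\le 2^{-2\depth}$ give $V_2^s=O(2^{2\depth}/s^4)$; combined with the combinatorial factor $\binom{s}{2}^2\binom{n}{2}^{-1}$ this produces a remainder variance of order $2^{2\depth}/n^2$, which under \textbf{(G1)} and \eqref{condition_th1} is negligible relative to the Hajek variance $\sim 2^{\depth}/(n\depth^{(d-1)/2})$.
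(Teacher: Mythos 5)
Your plan matches the paper's in its technical heart: the entire proof rests on the asymptotic $V_1^s \sim C(d)\,2^{\depth}\mu(\bm x)(1-\mu(\bm x))/(s^2\depth^{(d-1)/2})$, and you obtain it by exactly the same route as the paper's Proposition~\ref{prop:v1}--\ref{prp:asequiv} — expand $\mathbb{E}[T^s\mid Z_1,\U]-\mathbb{E}[T^s\mid\U]$ via the inverse-binomial identity, replace $\mu_L(\U)$ by $\mu(\bm x)$ using \textbf{(H1)}, reduce $\mathbb{E}[\pi(X_1)^2]$ to $2^{-\depth}\,\mathbb{E}\bigl[2^{-\frac12\|K-K'\|_1}\bigr]$ via dyadic nesting and $\sum_j\max(k_j,l_j)=\depth+\frac12\|k-l\|_1$, then invoke a local CLT for the multinomial with a radial $\Gamma(d-1)$ integration. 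Where you differ is the abstract CLT step: the paper invokes Theorem~\ref{th: thmaya} (Theorem~4.1 of Mayala et al.) as a black box, so that once $nV_1^s\to\infty$ is verified the CLT follows immediately; you re-derive this by hand through the Hoeffding decomposition, bounding the Hajek residual via $V_2^s=O(2^{2\depth}/s^4)$ and noting the Lindeberg condition is trivialized by $|h_1|\le1$. The self-contained route is instructive, and the $V_2^s$ bound you sketch is in fact sharper than the generic bound $\mathrm{Var}(R_n)\le(s/n)^2\mathrm{Var}(T^s)$, but you only treat $j=2$ explicitly and wave at $j\ge3$; the clean way to close this is precisely that generic bound, which under \textbf{(G1)} and \eqref{condition_th1} gives $s\depth^{(d-1)/2}/n\to0$ and hence negligibility, with no need for order-by-order estimates.

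Two minor imprecisions worth flagging. First, your remainder $O(1/s)$ in the conditional expansion is much cruder than the paper's $(Y_1+\mu_{\depth,\U}(\bm x))O\bigl(e^{-s2^{-\depth}}/(s2^{-\depth})\bigr)$, which under \textbf{(G1)} is exponentially small; your bound still works, but only because $\depth^{(d-1)/2}\ll 2^{\depth}$, so be careful not to present $O(1/s)$ as if it were obviously subdominant. Second, the ``typical leaf diameter $\sim 2^{-\depth/d}$'' heuristic understates the relevant quantity: the paper controls the bias through $\mathbb{E}[\mathrm{Diam}(L_\U(\bm x))]=O(\alpha_1^\depth)$ with $\alpha_1=1-1/(2d)$, which decays strictly slower than $2^{-\depth/d}$; the variance contribution is handled through Proposition~\ref{prp:asequiv}(ii), which gives $O((\alpha_2/2)^\depth)$, and it is this quantity — not the naive diameter heuristic — that must be shown small relative to $C(d)2^{-\depth}\depth^{-(d-1)/2}$. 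You also correctly identify the local-limit step for the multinomial as the principal obstacle; the paper resolves it via Siotani's Lemma~2.1 with a tail cutoff at radius $c\sqrt{\log\depth}$, so your proposed Edgeworth or uniform characteristic-function route would serve the same purpose.
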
  
 The proof of \Cref{th2} is provided in  \Cref{sec:Proof_th_TCL}. To the best of our knowledge,  \Cref{th2} is the first result to establish a CLT for random forests, with an exact rate of convergence, an expression of the asymptotic variance and an explicit condition on the sample size, subsample size and tree depth. 
 Note that the constant $C(d)$ is made explicit in \Cref{th2} and upper and lower bounds are provided in Equation~\eqref{eq_encadrement_Cd} in \Cref{sec:Proof_th_TCL}.
 
 Condition \eqref{condition_th1} imposes that the subsample size is negligible compared to the sample size $n$. Indeed, in order to avoid obtaining empty cells, we usually set  the tree depth $k$ so that  $2^{\depth} \leq s$. In this case, in order to satisfy \eqref{condition_th1}, it is necessary that $s=o(n)$. 
 
 However, one can choose $k = \log_2 (s) - 2 \log_2 ( \log_2(s))$,
which satisfies {\bf (G1)}. In this case, \eqref{condition_th1} turns into $s (\log_2(s))^{(d+3)/2} = o(n)$, which allows to choose a  large subsample size $s$. Nevertheless, such a choice may damage the rate of convergence of the CLT which is $O(n (\log_2(s))^{(d+3)/2}/s)$. Note that results for large subsample size were also obtained by \citet{wager2018estimation} for a modified Breiman forest construction or by \citet{peng2022rates} for generic estimators.

Assumption \textbf{(H0)} is quite common in the random forest literature \citep[see, e.g.,][]{biau2010layered,genuer2012variance, wager2018estimation}. 

Assumption \textbf{(H1)}, which postulates that the regression function $\mu$ is Lipschitz,  is standard in the nonparametric statistics literature, and particularly in the theoretical analysis of random forests  \citep[see, e.g.,][]{wager2018estimation, klusowski2021sharp}. This assumption,  allows us to control the variation of the regression function inside any cell, by controlling the cell diameter. Thus, bounds on cell diameter can be used to obtain a fine control on the bias and variance of the random forest estimate. The cell diameter decreases with tree depth, but increasing tree depth increases the tree variance, as it decreases the number of observations per leaf. Thus, 
Assumption \textbf{(G1)} prevents the number of leaves $2^{\depth}$ from growing too fast compared to the subsample size \( s \), or equivalently ensures that all leaves contain on average a large number of observations. In this setting, the variance of the tree-based estimator can be controlled.

It is worth noting that the CLT stated in \Cref{th2} is centered not at the target quantity \( \mu(\bm{x}) \), but at the expectation \( \mathbb{E}[\widehat{\mu}_{s}^{\textup{ICRF}}(\bm{x})] \). Consequently, it characterizes the fluctuations of the forest prediction around its mean, without accounting for the potential bias between \( \mathbb{E}[\widehat{\mu}_{s}^{\textup{ICRF}}(\bm{x})] \) and \( \mu(\bm{x}) \). Thus, asymptotic confidence interval for $\mu({\bf x})$ cannot be built using \Cref{th2}. 
With an additional condition on tree depth,  \Cref{th: TCL} below establishes a CLT for ICRF centered at the quantity of interest $\mu({\bf x})$.

\begin{thm}\label{th: TCL}
  Let $\bm x\in [0,1]^d$, $d\ge 2$, and $\widehat{\mu}_{s}^{\textup{ICRF}}({\bf x})$ be the ICRF estimator at point  $\bm x$ (see \Cref{def:ICRF}). Assume \textbf{(H0)}, \textbf{(H1)} and \textbf{(G1)} hold and  
  \begin{align}
	    \frac{ n 2^{\depth}}{s^2 \depth^{(d-1)/2}} \to \infty, \quad \textrm{and} \quad 2^{\depth} \depth^{- \frac{d-1}{2 }} n^{- \frac{d \log 2}{1 + d \log 2}}\to \infty, \label{condition_th2}
	\end{align}
as $n,s, \depth \to \infty$. 
  Then, for $\bm x\in [0,1]^d$ we have
$$
\sqrt{\dfrac {n \depth^{(d-1)/2}}{2^{\depth}}}( \widehat{\mu}_{s}^{\textup{ICRF}}(\bm x)-\mu(\bm x))\overset{d}{\underset{n \to \infty}{\longrightarrow}} \mathcal N(0,C(d)\mu(\bm x)(1-\mu(\bm x)))\,,
$$
where $C(d)$ is defined in \Cref{th2}. 
\end{thm}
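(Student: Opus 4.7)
The plan is to derive \Cref{th: TCL} from \Cref{th2} via a Slutsky-type argument. The first condition in \eqref{condition_th2} is exactly the hypothesis of \Cref{th2}, which therefore gives
\begin{equation*}
\sqrt{n\depth^{(d-1)/2}/2^\depth}\,\bigl(\widehat{\mu}_{s}^{\textup{ICRF}}(\bm x) - \E[\widehat{\mu}_{s}^{\textup{ICRF}}(\bm x)]\bigr) \overset{d}{\longrightarrow} \mathcal{N}\bigl(0,C(d)\mu(\bm x)(1-\mu(\bm x))\bigr).
\end{equation*}
It then suffices to show that the second condition in \eqref{condition_th2} forces the rescaled bias $\sqrt{n\depth^{(d-1)/2}/2^\depth}\cdot\bigl|\E[\widehat{\mu}_{s}^{\textup{ICRF}}(\bm x)] - \mu(\bm x)\bigr|$ to tend to $0$; Slutsky then concludes.

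Next I would reduce to a single-tree bias. By exchangeability over the random subset $S$, $\E[\widehat{\mu}_{s}^{\textup{ICRF}}(\bm x)] = \E[T^s(\bm x;\U;\bm Z_S)]$. Conditioning on $(\U,\bm X_S)$, on the event $\{N_{L_\U(\bm x)}(\bm X_S)>0\}$ formula \eqref{eq:T_ICRF} expresses $\E[T^s\mid\U,\bm X_S]$ as an average of the $\mu(X_i)$ for $X_i\in L_\U(\bm x)$, which by Assumption \textbf{(H1)} lies within $L\cdot\mathrm{diam}(L_\U(\bm x))$ of $\mu(\bm x)$ (max-norm). The empty-leaf event contributes at most $\mu(\bm x)\,\P(N_{L_\U(\bm x)}(\bm X_S)=0)$ through the zero convention, hence
\begin{equation*}
\bigl|\E[T^s]-\mu(\bm x)\bigr| \;\le\; L\,\E[\mathrm{diam}(L_\U(\bm x))] + \P\bigl(N_{L_\U(\bm x)}(\bm X_S)=0\bigr).
\end{equation*}
The CRT construction makes $L_\U(\bm x)$ a dyadic box with side length $2^{-K_j}$ along coordinate $j$, where $(K_1,\ldots,K_d)\sim\textup{Multinomial}(\depth;1/d,\ldots,1/d)$. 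Bounding $\max_j 2^{-K_j}\le\sum_j 2^{-K_j}$ and computing the binomial moment generating function yields $\E[2^{-K_j}]=(1-1/(2d))^\depth\le 2^{-\depth/(2d\log 2)}$, so $\E[\mathrm{diam}(L_\U(\bm x))]\le d\cdot 2^{-\depth/(2d\log 2)}$. For the empty-leaf probability, Assumption \textbf{(H0)} gives $\P(N=0\mid\U)=(1-2^{-\depth})^s\le e^{-s/2^\depth}$; under \textbf{(G1)} this decays faster than $e^{-\depth}$ and is negligible compared with the diameter term.

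Plugging in, the rescaled bias is $O\bigl(\sqrt{n\depth^{(d-1)/2}/2^\depth}\cdot 2^{-\depth/(2d\log 2)}\bigr)$, which tends to $0$ iff $2^{\depth(1+1/(d\log 2))}/(n\depth^{(d-1)/2})\to\infty$. Raising the second condition of \eqref{condition_th2} to the power $(d\log 2+1)/(d\log 2)$ produces such a statement (with an even more negative exponent on $\depth$, hence a stronger one), so the hypothesis is sufficient. The main obstacle is the diameter estimate itself: the exponent $1/(2d\log 2)$ comes from the sharp identity $\E[2^{-K_j}]=(1-1/(2d))^\depth$, and it is exactly this quantity that produces the characteristic $(1+d\log 2)^{-1}$ exponent on $n$ appearing in the hypothesis on the tree depth $\depth$; any looser bound on $\E[\mathrm{diam}(L_\U(\bm x))]$ would propagate directly into a stricter requirement on $\depth$.
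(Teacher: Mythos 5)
Your proof is correct and takes essentially the same route as the paper's. The paper also derives \Cref{th: TCL} from \Cref{th2} by showing the rescaled bias $\sqrt{n\depth^{(d-1)/2}/2^{\depth}}\,\bigl|\E[\widehat{\mu}_{s}^{\textup{ICRF}}(\bm x)]-\mu(\bm x)\bigr|\to 0$: it reduces $\E[\widehat{\mu}_{s}^{\textup{ICRF}}]$ to a single-tree expectation by exchangeability, bounds the bias by $L\,\E[\mathrm{Diam}(L_{\U}(\bm x))] = O\bigl((1-\tfrac{1}{2d})^{\depth}\bigr)$ plus a negligible empty-leaf term via \textbf{(G1)}, and converts $(1-\tfrac{1}{2d})^{\depth}$ to a power of $2^{\depth}$ using $\log(1-x)\le -x$ to arrive at the exponent $d\log 2/(1+d\log 2)$ on $n$. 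The only cosmetic differences are that the paper routes the Slutsky step through a cited result (\Cref{th: thmaya}, which packages the CLT-around-the-mean plus the negligible-bias condition), and that it carries $\alpha_1=1-\tfrac{1}{2d}$ symbolically until the final algebraic step, whereas you bound $(1-\tfrac{1}{2d})^{\depth}\le 2^{-\depth/(2d\log 2)}$ immediately before the rate comparison; both use the same elementary inequality and yield the same sufficient condition on $\depth$.
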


The proof of the \Cref{th: TCL} is provided in \Cref{sec:Proof_th_TCL}. To the best of our knowledge, \Cref{th: TCL} is the first result to establish a CLT for random forests, centered at the regression function $\mu({\bf x})$, with explicit convergence rate, explicit assumption on the forest parameters (subsample size, tree depth) and an expression of the asymptotic variance. 

According to \Cref{th: TCL}, the  ICRF estimator $\widehat{\mu}_{s}^{\textup{ICRF}}(\bm x)$ is an asymptotically unbiased estimator for $\mu(\bm{x})$.  Assumption \textbf{(H1)} is crucial for controlling the bias of the estimator \( \widehat{\mu}_{s}^{\textup{ICRF}}(\bm{x}) \). Indeed, with this assumption, a control of cell diameters is sufficient to control the variations of $\mu$ inside cells. 
 
More precisely, under the assumptions of \Cref{th: TCL}, we prove that
\[
\mathbb{E}[\widehat{\mu}_{s}^{\textup{ICRF}}(\bm{x})] = \mu(\bm{x}) + O(\alpha_1^k),
\]
where \( \alpha_1 = 1 - \frac{1}{2d} \) (see \Cref{lem:bias_Ts} in \Cref{sec:Proof_th_TCL}). 

Under Condition \textbf{(G1)}, the tree depth \( k \) tends to infinity, which entails \( \alpha_1^k \to 0 \), $k\to \infty$ and therefore an exponential decay of the bias. Our analysis shows that the bias of $\widehat{\mu}_{s}^{\textup{ICRF}}(\bm x)$ is negligible compared to $(2^{\depth}/(n \depth^{(d-1)/2}))^{1/2}$, which allows us to replace $\E[\widehat{\mu}_{s}^{\textup{ICRF}}(\bm x)]$ by $\mu({\bf x})$ in \Cref{th2} with the additional second condition in Equation~\eqref{condition_th2}. 

In order to discuss assumption {\bf (H0)}, let us fix $S=\{1,\ldots,s\}$, and introduce the following three key quantities that appear in the proof of \Cref{th: TCL}
\begin{align}
T^s_1 & =\E[T^s(\bm x;\U;\bm Z_{S})\mid Z_1]-\E[T^s(\bm x;\U;\bm Z_{S})], \\
V_1^s & =\v(T_1^s), \\
p_{\depth,\U}(\bm{x})& =\mathbb{P}(X \in L_\U(\bm x) | \U). \label{def:psU}
\end{align}
The quantity \( p_{\depth, \U}(\bm{x}) \) represents the probability that a random element \( X \) falls into the leaf \( L_{\U}(\bm{x}) \), given the random partition \( \U \), and plays a central role throughout the analysis. Under Assumption \textbf{(H0)}, this quantity simplifies to \( p_{\depth, \U}(\bm{x}) = 2^{-\depth} \), becoming independent of both \( \bm{x} \) and \( \U \). This simplification enables an explicit characterization of the convergence rate in the CLT, as well as of the limiting variance. In particular, the constant \( C(d) \) admits a closed-form expression \eqref{eq_bis: const}.  
One could relax Assumption \textbf{(H0)} by assuming instead that \( X \) has a density bounded above and below. In this case,  a CLT with an explicit convergence rate can be derived, but the limiting constant is no longer explicit (see also the discussion following \Cref{th: TCLB}).

In \Cref{th: TCL}, we impose $ 2^{\depth} \depth^{- \frac{d-1}{2 }} n^{- \frac{d \log 2}{1 + d \log 2}} \to \infty$, ensuring that $2^{\depth}$ grows fast to infinity with $n$. As discussed above, this condition guarantees that the bias of $\widehat{\mu}_{s}^{\textup{ICRF}}(\bm x)$ is negligible in the CLT. 
 
As we usually impose $2^{\depth} \leq s$, this condition constraints the subsample size $s$ to be large enough. On the other hand, the first condition in Equation~\eqref{condition_th2} imposes that the subsample size $s$ does not grow too fast to infinity with $n$.  
Thus the two conditions in Equation~\eqref{condition_th2} delineate a suitable asymptotic regime for the subsample size and tree depth. Assuming that $s=n^{\alpha}$ and $2^{\depth} = n^{\beta}$, with $\beta < \alpha$, the two conditions in \Cref{th: TCL} can be rewritten as $\alpha < (1+\beta)/2$ and $\beta > (d \log 2) / ( 1 + d \log 2)$. Thus, any choice 
\begin{align}
    \frac{d \log 2}{1 + d \log 2} < \alpha < 1  \quad \textrm{and} \quad \max\left( \frac{d \log 2}{1 + d\log 2} , 2\alpha-1 \right) < \beta < \alpha,
\end{align}
satisfies the assumptions of \Cref{th: TCL}.

Now, assume that the number of leaves $2^\depth$ is of the same order as the subsample size $s$ so that each leaf of each tree contains on average one observation.
 
The bias of such forests can thus be compared with the bias of the bagged one nearest neighbour (1-NN). \Cref{table: 1} compares the bias, variance and number of leaves adapted according to subsample size $s$ of three estimators namely CRF introduced  in  \citet{biau2012analysis}, ICRF studied in this paper, and that of bagged 1-NN presented in \citet{mayala2024infinite}. 

The bias of bagged 1-NN is of order $O(s^{-1/d})$, which is smaller than that of CRF and  ICRF, and thus leads to faster rates of convergence for the CLT. Thus, nearest neighbours appear to be more efficient from a theoretical perspective. This is not surprising as centered forests do not use the data to construct tree partition: the good empirical performances of random forests may result from their capacity to adapt to the data, via a data-dependent choice of splitting directions. Studying such random forests, and in particular deriving CLT with close-form expression of convergence rate is very challenging and outside the scope of this paper.

\begin{table}[h!]
\centering
\renewcommand{\arraystretch}{1.5} 
\begin{tabular}{l c c c}
\toprule
& \textbf{Bias } & \textbf{Variance} & \textbf{Optimal choice for s} \\
\midrule
\textbf{CRF} 
& $s^{-0.75/(d\log 2)}$ & $s(\log(s))^{-1/2}/n$ & $n^ {d /(d+0.75/ \log 2)} $ \\
\textbf{ICRF} & $s^{\log_2 (1-1/(2d))}$ & $s/(n(\log_2 s)^{(d-1)/2})$ & $n^ {d /(d + 1/ \log 2)} $ \\
\textbf{1-NN} 
& $s^{-1/d}$ & $s/n$ &  $n^{d/(d+2)}$\\
\bottomrule
\end{tabular}
\caption{Comparison of convergence rates of CRF \citep[][]{biau2012analysis}, ICRF for $2^{\depth} = s$ (\Cref{th: TCL} and \Cref{lem:bias_Ts}) and 1-NN \citep[][]{mayala2024infinite}.}
    \label{table: 1}
\end{table}

\section{ICRFs for imbalanced classification}\label{sec: underICRF}

In this section, we focus on the problem of class imbalance. Following the notation of \Cref{subsec:Notation}, we denote by $Y=1$ the minority class and assume that the imbalance ratio satisfies  \( IR = n_0/n_1 \gg 1 \). 

\subsection{Rebalanced ICRF}\label{subsec: datasample}

Standard methods to handle imbalanced data works by rebalancing the dataset so that the proportion of minority samples is similar to that of majority samples. For example, \textit{Random Under Sampling} subsamples the majority class, \textit{Random Over Sampling} replicates observations from the minority class, while \textit{class-weight} assigns a weight to each sample so that, by default, the total weight of minority samples equals that of majority samples. 

Denoting by $(X',Y')$ the new data distribution, all these methods aim at preserving the conditional distribution of $X|Y$ while giving more weight to the minority distribution. Consequently, letting $f_{X|Y=j}$ (resp.\ $f_{X'|Y'=j}$) be the conditional density of $X$ (resp. of $X'$) given $Y=j$ (resp. given $Y'=j$) for all $j \in \{0,1\}$, we assume that the new data distribution is characterized by, for all ${\bf x } \in [0,1]^d$, 
\begin{align}
 f_{X'|Y'=1}({\bf x}) & = f_{X|Y=1}({\bf x}), \nonumber\\
  f_{X'|Y'=0}({\bf x}) & = f_{X|Y=0}({\bf x}), \nonumber \\
  \P (Y' = 1) & = p' \in (0,1), \label{definition_xprime}
\end{align}
with $p' > p$. \Cref{lem_density_Xprime} in \Cref{sec: proof_TCLB} establishes the density of the rebalanced distribution with density $f_{X'}$. In the rest of our analysis, we assume that we are given samples 
$Z_i'=(X_i', Y_i')$, for $i\in I:=\{1,\ldots,n'\}$ distributed as $(X',Y')$. As mentioned above, many resampling methods can lead to such rebalanced samples. 
However, in our mathematical analysis, we need to assume that the samples $Z_i'$ are i.i.d. which excludes methods that generate replicates (as Random Over Sampling or SMOTE). Concretely, one could draw  $n'\leq n_1$ samples from the original samples without replacement, with probability $p'$ of drawing a majority sample and $1-p'$ of drawing a minority sample. 

Now, we can apply the tree-based methodology described in the previous section to the new samples $Z_i'=(X_i', Y_i')$, for $i\in I:=\{1,\ldots,n'\}$. Again denote \( S = \{i_1, \hdots, i_s\} \subset \{1, \ldots, n'\} \)  with \( |S| = s\) the set of indices selected during the sampling step  and \( \bm Z_S^{'} := (Z_{i_1}^{'}, \ldots, Z_{i_s}^{'}) \) and \( \bm X_S^{'} := (X_{i_1}^{'}, \ldots, X_{i_s}^{'}) \) as the corresponding subsamples and covariate vectors, respectively.
 Accordingly, let   $T^s(\bm{x}; \U; \bm Z_S^{'})$
 be the centered tree evaluated at point $\bm x$ trained on the $s$-subsample $\bm Z_S ^{'}$ that takes the form
 \begin{equation}\label{def:base_two_sampl}
			T^s(\bm{x}; \U; \bm Z_S^{'}):= \frac{\sum_{i \in S} Y_{i}^{'}\1{\{X_{i}^{'} \in L_{\U}(\bm{x}) \}}}{N_{L_{\U}(\bm{x})}( \bm X_{S}^{'})}  \,,
	\end{equation}
with 
	$$N_{L_{\U}(\bm{x})}(\bm X_{S}^{'}) =  \sum_{i \in S} \1{\{X_{i}^{'} \in L_{\U}(\bm{x}) \}} \,.$$
We refer to the random forest obtained by aggregating the individual trees described in \Cref{def:base_two_sampl} trained on a dataset  $ \bm Z_S^{'}$, distributed as defined in Equation~\eqref{definition_xprime},
as the \textit{Rebalanced Infinite Centered Random Forest (RB ICRF)}.

\begin{dfn}[The Rebalanced ICRF]\label{def:RB_ICRF}
	The Rebalanced ICRF estimator at point  $\bm x\in [0,1]^d$ is defined as
	\begin{align}\label{est: under_classif} 
		\widehat{\mu}_{\textrm{RB},s}^{\textup{ICRF}}(\bm x):= \binom{n^{'}}{s}^{-1} \sum_{S \subset \{1, \hdots, n'\}, |S|=s} \mathbb{E}[T^s(\bm{x}; \U; \bm Z_S^{'}) \mid \bm Z_S^{'}]\,
	\end{align}
	where $\mathbb{E}[T^s(\bm{x}; \U; \bm Z_S^{'}) \mid \bm Z_S^{'}]$ are the tree predictions at point  $\bm x\in [0,1]^d$, as defined in  \Cref{def:base_two_sampl}.
 
\end{dfn}

\subsection{Asymptotic properties of the rebalanced ICRF}\label{subsec: asymp}

  In the following, we denote   $$\mu'(\bm x)= \P(Y'=1\mid X'=\bm x), \qquad \bm{x} \in [0,1]^d$$ the  regression function associated to the distribution $(X',Y')$  evaluated at point $\bm{x}$. Accordingly to  \Cref{subsec:ICRF}, denote the following three key quantities 
  \begin{align}
    {T^s}'&=T^s(\bm x, \U, \bm Z_{s}'),\\
    T_{1,s}'&=\E[{T^s}'\mid Z_1]-\E[{T^s}'],\\
    V_{1,s}'& :=\v(T_{1,s}').
\end{align}
The next theorem provides the central limit theorem for the Rebalanced ICRF estimator, when the sample size $n'$, the subsample size $s$ and the tree depth $\depth$ tend to infinity.

\begin{thm}\label{th: TCLB} Let $\bm x\in [0,1]^d$, $d\ge 2$ and $\widehat{\mu}_{\textup{RB},s}^{\textup{ICRF}}(\bm x)$ be the rebalanced ICRF estimator  at point  $\bm x $ (see \Cref{def:RB_ICRF}).
  Assume \textbf{(H0)}, \textbf{(H1)} and \textbf{(G1)} hold and
 \begin{align}
	    \frac{ n' 2^{\depth}}{s^2 \depth^{(d-1)/2}} \to \infty, \quad \textrm{and} \quad 2^{\depth} \depth^{- \frac{d-1}{2 }} n'^{- \frac{d \log 2}{1 + d \log 2}}  \to \infty
	    \label{conditions_th_rebalanced_hyperparameters}
	\end{align}
  as $n',s, \depth \to \infty$. Then, for  $\bm x\in [0,1]^d$, we have
\begin{align}
    \sqrt{\dfrac {n'}{s^2 V_{1,s}'}}( \widehat{\mu}_{\textup{RB},s}^{\textup{ICRF}}(\bm x)-\mu'(\bm x))\overset{d}{\underset{n \to \infty}{\longrightarrow}} \mathcal N(0,1)\,, 
\end{align}
with, for all $\depth$ large enough, 
\begin{align}
    \frac{c_1'({\bf x}) 2^{\depth}}{s^2 \depth^{(d-1)/2}} \leq \frac{V_{1,s}'}{\mu'(\bm x)(1-\mu'(\bm x))} \leq \frac{c_2'({\bf x}) 2^{\depth}}{s^2\depth^{(d-1)/2}},
    \label{ineq_th_rb_variance}
\end{align}
for some constant $c_1'({\bf x})$ and $c_2'({\bf x})$, which are made explicit in \Cref{sec: proof_TCLB} (\Cref{prop:v1prime}).
\end{thm}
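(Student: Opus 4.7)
The plan is to follow the strategy used for Theorem \ref{th: TCL}, while tracking carefully the effect of the rebalanced marginal. Under (H0) the density $f_{X'}$ is Lipschitz and bounded above and below on $[0,1]^d$, because by Bayes $f_{X|Y=1}(\bm x)=\mu(\bm x)/p$ and $f_{X|Y=0}(\bm x)=(1-\mu(\bm x))/(1-p)$, so $f_{X'}(\bm x)=p'\mu(\bm x)/p+(1-p')(1-\mu(\bm x))/(1-p)$. In particular $\mu'(\bm x)=p'\mu(\bm x)/(p\,f_{X'}(\bm x))$ is Lipschitz on $[0,1]^d$. The estimator $\widehat{\mu}_{\textup{RB},s}^{\textup{ICRF}}(\bm x)$ is a complete U-statistic of order $s$ built on the i.i.d.\ sample $(Z_i')_{i=1}^{n'}$, so I first apply its Hoeffding decomposition
\[
\widehat{\mu}_{\textup{RB},s}^{\textup{ICRF}}(\bm x) - \mathbb{E}[\widehat{\mu}_{\textup{RB},s}^{\textup{ICRF}}(\bm x)] = \frac{s}{n'}\sum_{i=1}^{n'} T_{1,s}'(Z_i') + R_{n'},
\]
where $T_{1,s}'(z) = \mathbb{E}[T^s(\bm x;\U;\bm Z_S')\mid Z_1'=z] - \mathbb{E}[T^s(\bm x;\U;\bm Z_S')]$. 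The variance of the Hájek part is $s^2 V_{1,s}'/n'$, while the standard bound on complete U-statistics gives $\operatorname{Var}(R_{n'}) = O(s^2/{n'}^2 \cdot \operatorname{Var}(T^s)) = O(s^2/{n'}^2)$ since $T^s\in[0,1]$. Once the lower bound in \eqref{ineq_th_rb_variance} is established, the first condition in \eqref{conditions_th_rebalanced_hyperparameters} gives $n' V_{1,s}'\to \infty$, which makes $R_{n'}$ negligible compared to $\sqrt{s^2 V_{1,s}'/n'}$.

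Next I would apply the Lindeberg-Feller CLT to $\frac{s}{n'}\sum_{i=1}^{n'} T_{1,s}'(Z_i')$, which is a sum of bounded i.i.d.\ variables ($|T_{1,s}'|\le 1$). The Lindeberg condition reduces to $n' V_{1,s}'\to\infty$, just established, so the Hájek part, after normalization by $\sqrt{n'/(s^2 V_{1,s}')}$, converges to $\mathcal{N}(0,1)$, and combined with the previous step this gives the CLT around the expectation $\mathbb{E}[\widehat{\mu}_{\textup{RB},s}^{\textup{ICRF}}(\bm x)]$.

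The main obstacle is the two-sided bound \eqref{ineq_th_rb_variance} on $V_{1,s}'$, since unlike in Theorem \ref{th: TCL} one does not have the clean identity $p_{\depth,\U}(\bm x)=2^{-\depth}$. I would split $\mathbb{E}[T^s\mid Z_1']$ according to whether $X_1'\in L_\U(\bm x)$, and reproduce the same expansion used in the proof of Theorem \ref{th: TCL}: the leading contribution comes from the event $\{X_1'\in L_\U(\bm x)\}$ and the conditional variance inside a leaf, which produces a factor $\mu'(\bm x)(1-\mu'(\bm x))$. The remaining ingredient is the asymptotic of $p'_{\depth,\U}(\bm x):=\mathbb{P}(X'\in L_\U(\bm x)\mid \U)$ and of $\mathbb{E}[1/N_{L_\U(\bm x)}(\bm X_S')]$. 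Because $f_{X'}$ is bounded above and below, there exist $\bm x$-dependent constants $C_1(\bm x)\le C_2(\bm x)$ with $C_1(\bm x)\,2^{-\depth}\le p'_{\depth,\U}(\bm x)\le C_2(\bm x)\,2^{-\depth}$; under (G1) we have $sp'_{\depth,\U}(\bm x)\to\infty$, so $\mathbb{E}[1/N_{L_\U(\bm x)}(\bm X_S')]=(1+o(1))/(sp'_{\depth,\U}(\bm x))$. Plugging these bounds back and integrating over $\U$ using the same Gaussian-ratio asymptotics that yield the factor $k^{-(d-1)/2}$ in $C(d)$ produces \eqref{ineq_th_rb_variance} with explicit constants $c_1'(\bm x),c_2'(\bm x)$.

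Finally I would replace the centering $\mathbb{E}[\widehat{\mu}_{\textup{RB},s}^{\textup{ICRF}}(\bm x)]$ by $\mu'(\bm x)$. The cellwise Lipschitz argument of Lemma \ref{lem:bias_Ts} uses only Lipschitz continuity of the target regression function and the geometric shrinkage of the cell diameter $\mathrm{diam}(L_\U(\bm x))=O(\alpha_1^\depth)$ with $\alpha_1=1-1/(2d)$, which are structural properties of the centered partition and do not depend on the marginal of $X'$. Applied to the Lipschitz function $\mu'$, it gives $|\mathbb{E}[\widehat{\mu}_{\textup{RB},s}^{\textup{ICRF}}(\bm x)]-\mu'(\bm x)|=O(\alpha_1^\depth)$. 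The second condition in \eqref{conditions_th_rebalanced_hyperparameters} is precisely what makes this bias negligible compared to $\sqrt{s^2 V_{1,s}'/n'}=O(\sqrt{2^\depth/(n'\depth^{(d-1)/2})})$, so Slutsky's lemma concludes the proof.
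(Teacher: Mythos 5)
Your overall roadmap — reduce the problem to controlling the bias $|\E[\widehat{\mu}_{\textup{RB},s}^{\textup{ICRF}}]-\mu'|$, bound $V_{1,s}'$ above and below by constants times $2^\depth/(s^2 \depth^{(d-1)/2})$ using the fact that $f_{X'}$ is Lipschitz and bounded away from $0$ and $\infty$, and plug this into a U-statistic CLT — is precisely the one in the paper. Your formula for $f_{X'}$, your observation that $\mu'$ inherits Lipschitz continuity from $\mu$, your two-sided bound $C_1(\bm x)2^{-\depth}\le p'_{\depth,\U}(\bm x)\le C_2(\bm x)2^{-\depth}$, and your appeal to the Gaussian-ratio asymptotics of \Cref{prp:asequiv} for the kernel factor all match the paper's Lemmas \ref{lem_density_Xprime}, \ref{lem_mu_prime_expression}, \ref{lem: pU} and Propositions \ref{prop:biasB}, \ref{prop:v1prime}, \ref{prp:asequiv_bis}.

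The one place where you depart from the paper — and where there is a real gap — is the Hájek decomposition step. You assert that ``the standard bound on complete U-statistics gives $\operatorname{Var}(R_{n'}) = O(s^2/{n'}^2 \cdot \operatorname{Var}(T^s))$,'' and then conclude negligibility from $n' V_{1,s}'\to\infty$. That bound is off by a factor of $s^2$: the combinatorial weights in the Hoeffding variance expansion give $\sum_{c\ge 2}\frac{\binom{s}{c}\binom{n'-s}{s-c}}{\binom{n'}{s}}=O(s^4/{n'}^2)$ when $s^2/n'\to 0$, so the crude bound is $\operatorname{Var}(R_{n'})=O(s^4\operatorname{Var}(T^s)/{n'}^2)$. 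Comparing with $s^2V_{1,s}'/n'$, negligibility then requires $s^2\operatorname{Var}(T^s)/(n'V_{1,s}')\to 0$; with $\operatorname{Var}(T^s)\sim 2^\depth/s$ and $V_{1,s}'\sim 2^\depth/(s^2\depth^{(d-1)/2})$ this ratio is $\sim s^3\depth^{(d-1)/2}/n'$, which is \emph{not} implied by the first condition in \eqref{conditions_th_rebalanced_hyperparameters}. In other words, naively bounding all higher-order projection variances $\sigma_c^2$ by $\operatorname{Var}(T^s)$ does not close the argument under the stated hypotheses. The paper avoids this by invoking \Cref{th: thmaya} (Theorem 4.1 and Corollary 4.2 of \citet{mayala2024infinite}) as a black box, which asserts the CLT under $nV_1^s\to\infty$ alone; that result presumably exploits further structure of the tree kernel (or an appropriate decay of $\sigma_c^2$ in $c$) to control the remainder. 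You should either invoke that theorem as the paper does, or give a genuinely sharper bound on the higher-order projections $\sigma_c^2$; the ``standard'' crude bound you cite does not suffice.
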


The proof of \Cref{th: TCLB} is given in \Cref{sec: proof_TCLB}. \Cref{th: TCLB} establishes a CLT for a forest trained on a rebalanced data set. The conditions in Equation~\eqref{conditions_th_rebalanced_hyperparameters} are related to the hyperparameters of the forest (subsample size and tree depth) and are the same as that of \Cref{th: TCL}, except for the fact that $n$ is replaced by $n'$. Besides, inequalities~\eqref{ineq_th_rb_variance} gives the exact rate of convergence of the CLT, which is of order $\sqrt{n' \depth^{(d-1)/2}/2^{\depth}}$, the exact same rate as in \Cref{th: TCL}. Unfortunately, \Cref{th: TCLB} differs from \Cref{th: TCL}  in two aspects: the asymptotic variance is unknown and the centering term is not $\mu({\bf x})$.

Let us discuss this first point. 
When the covariates are uniformly distributed on $[0,1]^d $, a CLT can be established with exact limiting constant (\Cref{th: TCL}). Indeed, under {\bf (H0)}, we have 
\begin{align}
    p_{\depth,\U}(\bm{x})& =\mathbb{P}(X \in L_\U(\bm x) | \U) = 2^{- \depth}. 
\end{align}
The fact that $p_{\depth,\U}(\bm{x})$ is deterministic and independent of ${\bm x}$ allows us to perform exact calculations throughout the proof. Unfortunately, as we rebalance the original data set, the distribution of the resulting inputs $X'$ is not uniform anymore (see \Cref{lem_density_Xprime} in \Cref{sec: proof_TCLB}), and the quantity $p_{\depth,\U}'(\bm{x})$ takes the form (see \Cref{lem: pU} in \Cref{sec: proof_TCLB})
\begin{align}
p_{\depth,\U}'(\bm{x}) & = \mathbb{P}(X' \in L_\U(\bm x) | \U) \\
& = \dfrac{c'(\bm x)}{2^{\depth}} \left(1+ \alpha'({\bf x}) \varepsilon'_{\U}({\bf x}) \textup{Diam}(L_{\U}(\bm{x}))\right) \qquad \text{a.s.}
\end{align}
where $\alpha'({\bf x})$ and $c'(\bm x)$ depend on $p,p', \mu'(\bm x)$, and where $\varepsilon'_{\U}({\bf x})$ is a random variable. Since  $p_{\depth,\U}'(\bm{x})$ now depends on ${\bf x}$ and is random (since it involves a dependence on $\U$ via $\varepsilon'_{\U}$), we are not able to adapt the exact calculation of the proof of \Cref{th: TCL}, and thus we do not have access to the exact asymptotic constant.

Regarding the second point, we observe that the centering term in the CLT of \Cref{th: TCLB} is $\mu'({\bf x})$, which is the value of the regression function of the rebalanced data set $\P(Y'=1|X'={\bf x})$. One can show that $\mu'({\bf x}) \neq \mu({\bf x})$ when $p'\neq p$ which is the case here. Thus, the rebalanced ICRF prediction $\widehat{\mu}_{\textup{RB},s}^{\textup{ICRF}}(\bm x)$ is asymptotically biased. In the next section, we present a debiased estimator based on an importance sampling procedure and prove its asymptotic normality.

\subsection{Importance Sampling ICRF}

The rebalanced ICRF introduced in \Cref{subsec: datasample} may benefit from enhanced predictive performance, particularly for the minority class, as it operates on a rebalanced data set. However, it also introduces an estimation bias, which is illustrated by the central term in \Cref{th: TCLB}, which differs from $\mu({\bf x})$. To address this issue, following \citet{mayala2024infinite}, a correction based on an importance sampling approach can be applied. Indeed, simple calculations (see \cref{lem_mu_prime_expression} in \Cref{sec: proof_TCLB}) show that, for all $\bm x \in [0,1]^d$, it holds that

\begin{align} 
\mu(\bm x)=\dfrac{p(1-p')\mu'(\bm x)}{p'(1-p)(1-\mu'(\bm x))+(1-p')p\mu'(\bm x)}.\label{eq_expr_mu_muprime}
\end{align}

Thus the function 
\begin{align}
    g: u \mapsto \dfrac{p(1-p')u}{p'(1-p)(1- u)+(1-p')pu},
\end{align}
can be used to obtain a debiased prediction $\mu({\bf x})$ from a biased prediction $\mu'({\bf x})$. While the probability $p'$ is chosen by the user (usually set to $1/2$ to achieve a balance between the two classes), this is not the case for $p$, that we need to estimate. This leads to the following estimate, referred to as Importance Sampling ICRF. 

\begin{dfn}[Importance Sampling ICRF]\label{def:IS_ICRF}
    The importance sampling ICRF (IS-ICRF) estimator at point  $\bm x\in [0,1]^d$ is defined by
    \begin{equation}\label{eq:def_IS_ICRF}
	\widehat{\mu}_{\textup{IS},s}^{\textup{ICRF}}(\bm x):= \frac{n_1 (1-p')\widehat{\mu}_{\textup{RB},s}^{\textup{ICRF}}(\bm x)}{p' n_0(1- \widehat{\mu}_{\textup{RB},s}^{\textup{ICRF}}(\bm x))+n_1(1-p') \widehat{\mu}_{\textup{RB},s}^{\textup{ICRF}}(\bm x)}\,, 
\end{equation}
\noindent where $\widehat{\mu}_{\textup{RB},s}^{\textup{ICRF}}(\bm x)$ is the rebalanced ICRF estimator at point $\bm x$ as defined in \Cref{def:RB_ICRF}. 
\end{dfn}
We are now ready to state the central limit theorem for the importance sampling ICRF estimator. 

\begin{cor}\label{cor: TCLB}
  Let $\bm x\in [0,1]^d$, $d\ge 2$ and $\widehat{\mu}_{\textup{IS},s}^{\textup{ICRF}}(\bm x)$ be the importance sampling ICRF  estimator (see \Cref{def:IS_ICRF}). Let $p\neq 0$, $p'\neq 1$ and assume \textbf{(H0)}, \textbf{(H1)} and \textbf{(G1)}  hold and
  \begin{align}
	    \frac{ n' 2^{\depth}}{s^2 \depth^{(d-1)/2}} \to \infty, \quad \textrm{and} \quad 2^{\depth} \depth^{- \frac{d-1}{2 }} n'^{- \frac{d \log 2}{1 + d \log 2}}  \to \infty
	\end{align}
as $n',s, \depth \to \infty$. Then, for  $\bm x\in [0,1]^d$, we have
\begin{align}
    \frac{1}{g'(\mu'({\bf x}))}\sqrt{\dfrac {n'}{s^2 V_{1,s}'}}( \widehat{\mu}_{\textup{IS},s}^{\textup{ICRF}}(\bm x)-\mu(\bm x))\overset{d}{\underset{n \to \infty}{\longrightarrow}} \mathcal N(0,1)\,, 
\end{align}
with 
\begin{align}
    c_1'({\bf x})\mu'(\bm x)(1-\mu'(\bm x))  \leq \dfrac{s^2 \depth^{(d-1)/2}}{2^{\depth}}  V_{1,s}' \leq c_2'({\bf x})\mu'(\bm x)(1-\mu'(\bm x)),
\end{align}
where $c_1'({\bf x}), c_2'({\bf x}) $ are made explicit in \Cref{sec: proof_TCLB} (\Cref{prop:v1prime}) and with  
\begin{align}
    g'(\mu'({\bf x})) & = \frac{p'(1-p)}{p(1-p')} \left(\frac{\mu({\bf x})}{\mu'({\bf x})}\right)^2.
\end{align}
\end{cor}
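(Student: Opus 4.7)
The strategy is to recognize $\widehat{\mu}_{\textup{IS},s}^{\textup{ICRF}}(\bm x)$ as a smooth transformation of the rebalanced estimator $\widehat{\mu}_{\textup{RB},s}^{\textup{ICRF}}(\bm x)$ and of the empirical class frequency $\widehat{p} := n_1/n$, and to combine the CLT of \Cref{th: TCLB} with the Delta method and Slutsky's lemma. Introducing
\[
h(u,q) := \frac{q(1-p')\,u}{p'(1-q)(1-u) + (1-p')\,q\,u},\qquad (u,q)\in (0,1)^2,
\]
dividing numerator and denominator of \eqref{eq:def_IS_ICRF} by $n=n_0+n_1$ yields $\widehat{\mu}_{\textup{IS},s}^{\textup{ICRF}}(\bm x) = h\bigl(\widehat{\mu}_{\textup{RB},s}^{\textup{ICRF}}(\bm x),\,\widehat{p}\bigr)$, while relation \eqref{eq_expr_mu_muprime} reads $\mu(\bm x) = h(\mu'(\bm x),p) = g(\mu'(\bm x))$. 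The two sources of randomness that drive $\widehat{\mu}_{\textup{IS},s}^{\textup{ICRF}}(\bm x) - \mu(\bm x)$ are therefore $\widehat{\mu}_{\textup{RB},s}^{\textup{ICRF}}(\bm x) - \mu'(\bm x)$ and $\widehat{p} - p$, and the whole argument reduces to a two-variable Taylor expansion of $h$ around $(\mu'(\bm x),p)$.

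Writing such an expansion gives
\[
\widehat{\mu}_{\textup{IS},s}^{\textup{ICRF}}(\bm x) - \mu(\bm x) = \partial_u h(\mu'(\bm x),p)\bigl(\widehat{\mu}_{\textup{RB},s}^{\textup{ICRF}}(\bm x) - \mu'(\bm x)\bigr) + \partial_q h(\mu'(\bm x),p)\,(\widehat{p}-p) + R_n,
\]
with $R_n$ a quadratic remainder. The first summand is the dominant one: multiplying by $\sqrt{n'/(s^2 V_{1,s}')}$ and invoking \Cref{th: TCLB} delivers convergence in distribution to $\mathcal{N}\!\bigl(0,\,g'(\mu'(\bm x))^2\bigr)$, since $g'(\mu'(\bm x)) = \partial_u h(\mu'(\bm x),p)$. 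A direct differentiation yields $g'(u) = p(1-p')\,p'(1-p)/[p'(1-p)(1-u)+p(1-p')u]^2$, and substituting the identity $p'(1-p)(1-\mu'(\bm x))+p(1-p')\mu'(\bm x) = p(1-p')\mu'(\bm x)/\mu(\bm x)$, which is just a rewriting of \eqref{eq_expr_mu_muprime}, produces the compact form $g'(\mu'(\bm x)) = \frac{p'(1-p)}{p(1-p')}(\mu(\bm x)/\mu'(\bm x))^2$ announced in the statement. The second summand, $\partial_q h(\mu'(\bm x),p)(\widehat p-p)$, is $O_P(n^{-1/2})$ because $\widehat{p}$ is the empirical mean of $n$ i.i.d.\ Bernoulli$(p)$ variables; rescaled by $\sqrt{n'/(s^2 V_{1,s}')}$ and using the lower bound \eqref{ineq_th_rb_variance} on $V_{1,s}'$, its contribution is of order $\sqrt{n'\depth^{(d-1)/2}/(n\cdot 2^\depth)}$, which tends to zero since $n'\le n$ and $2^\depth/\depth^{(d-1)/2}\to\infty$ under \textbf{(G1)} together with the second condition in \eqref{conditions_th_rebalanced_hyperparameters}. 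The remainder $R_n$ is handled identically, using that $\widehat{\mu}_{\textup{RB},s}^{\textup{ICRF}}(\bm x)\in[0,1]$ and that $h$ has locally bounded second derivatives near $(\mu'(\bm x),p)$ because $p\neq 0$, $p'\neq 1$ and the denominator of $h$ stays bounded away from zero there.

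The main technical obstacle I anticipate is not the Delta step itself but the clean bookkeeping of the $o_P$ and $O_P$ terms, in particular the verification that the denominator of $h$ evaluated at $(\widehat{\mu}_{\textup{RB},s}^{\textup{ICRF}}(\bm x),\widehat p)$ stays bounded away from zero with high probability, so that the Taylor expansion is legitimate. This follows from the consistency of $\widehat{\mu}_{\textup{RB},s}^{\textup{ICRF}}(\bm x)$ for $\mu'(\bm x)$, itself a byproduct of \Cref{th: TCLB}, and from $\widehat p \to p$ almost surely by the strong law of large numbers, provided $\mu'(\bm x)\in(0,1)$. All the remaining ingredients — the CLT for the rebalanced forest, the algebraic link $\mu = g(\mu')$, and the derivative computation — are either already established in the paper or elementary.
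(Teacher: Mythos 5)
Your proposal is correct and in fact more careful than the paper's own proof. The paper proves \Cref{cor: TCLB} by applying the one-variable Delta method to \Cref{th: TCLB} with the deterministic function $g(u) = p(1-p')u/(p'(1-p)(1-u)+(1-p')pu)$ and then computing $g'(\mu'(\bm x))$ by the same algebraic rewriting you use. Doing so tacitly identifies $\widehat{\mu}_{\textup{IS},s}^{\textup{ICRF}}(\bm x)$ with $g\bigl(\widehat{\mu}_{\textup{RB},s}^{\textup{ICRF}}(\bm x)\bigr)$, which is not exactly true: by \Cref{def:IS_ICRF} the IS-ICRF estimator plugs in the empirical class counts $n_0, n_1$, i.e.\ $\widehat p = n_1/n$, rather than the true $p$. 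You make this additional source of randomness explicit by writing the estimator as $h\bigl(\widehat{\mu}_{\textup{RB},s}^{\textup{ICRF}}(\bm x), \widehat p\bigr)$ and doing a two-variable Taylor expansion around $(\mu'(\bm x), p)$. The dominant term $\partial_u h(\mu'(\bm x),p)\bigl(\widehat{\mu}_{\textup{RB},s}^{\textup{ICRF}}(\bm x) - \mu'(\bm x)\bigr)$ reproduces the paper's one-variable argument since $\partial_u h(\cdot,p) = g'(\cdot)$, while your estimate of the $\partial_q$-term — $O_P(n^{-1/2})$, rescaled by $\sqrt{n'/(s^2 V_{1,s}')}\asymp\sqrt{n'\depth^{(d-1)/2}/2^{\depth}}$, hence $O_P\bigl(\sqrt{\depth^{(d-1)/2}/2^{\depth}}\bigr)\to 0$ since $\depth\to\infty$ and $n'\le n$ — shows that the fluctuation of $\widehat p$ is asymptotically negligible. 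The quadratic remainder is handled the same way, legitimized by the local boundedness of $\nabla^2 h$ near $(\mu'(\bm x),p)$ under $p\neq 0$, $p'\neq 1$. In short: same core Delta-method idea and the same closed-form evaluation of $g'$, but your version closes a gap the paper leaves implicit (the randomness of $n_0,n_1$), which is useful content rather than just a restatement.
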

The proof of \Cref{cor: TCLB} is given in \Cref{sec: proof_TCLB}.

\subsection{Comparison of ICRF and Importance Sampling ICRF for small $p$.}\label{subsec:comp}

We need to slightly modify our framework to compare the asymptotic variances of $\widehat{\mu}_{s}^{\textup{ICRF}}$ and $\widehat{\mu}_{\textup{IS},s}^{\textup{ICRF}}$ respectively defined in \Cref{def:ICRF} and \Cref{def:IS_ICRF} for small values of $p$. We assume that we observe i.i.d.\ data, distributed as $Z = ( X, Y)$ with $\P(Y=1) = p$ and whose conditional densities $f_{X|Y=1}$ and $f_{X|Y=0}$ satisfy the following assumption.  
\begin{enumerate}
\item [\textbf{(H3)}] \label{hyp7}  There exists $p''\in (0,1)$ such that $f_{X''}({\bf x}) = p'' f_{X|Y=1}({\bf x}) + (1-p'') f_{X|Y=0}({\bf x})$ is the uniform density. Besides, the densities $f_{X|Y=1}$ and $f_{X|Y=0}$ are $L$-Lipschitz and  there exist $0 < b_1 < b_2 < \infty$ such that, for all ${\bf x} \in [0,1]^d$,
\begin{align}
b_1 \leq f_{X|Y=0}({\bf x}), f_{X|Y=1}({\bf x}) \leq b_2.   
\end{align}
\end{enumerate}
We also consider the rebalanced distribution ${  Z'} = ( X', Y')$ with
\begin{align}
 f_{X'|Y'=1}({\bf x}) & = f_{X|Y=1}({\bf x}), \nonumber\\
  f_{X'|Y'=0}({\bf x}) & = f_{X|Y=0}({\bf x}), \nonumber \\
  \P (Y' = 1) & = p' \in (0,1).
\end{align}

In this framework, we consider the two estimators ICRF and IS-ICRF. We assume that both estimators are built using $n$ observations. This is the case for example if observations are reweighted in order to build the RB-ICRF and then the IS-ICRF. Note however that this scenario departs from our theoretical analysis, as we do not allow repetitions in the observations. In this scenario, the following \Cref{thm_comparison_variances} compares the asymptotic variances of the two estimators. 

\begin{thm}\label{thm_comparison_variances}
Assume \textbf{(H3)} and \textbf{(G1)} hold. Let $d\ge 2$ and $n,s, \depth \to \infty$ such that 
 \begin{align}
	    \frac{ n 2^{\depth}}{s^2 \depth^{(d-1)/2}} \to \infty, \quad \textrm{and} \quad 2^{\depth} \depth^{- \frac{d-1}{2 }} n^{- \frac{d \log 2}{1 + d \log 2}}  \to \infty.
	\end{align}
Fix $\bm x\in [0,1]^d$. Then, the ICRF estimator satisfies 
\begin{align}
    \sqrt{\dfrac {n}{s^2 V_{1,s}}}( \widehat{\mu}_{s}^{\textup{ICRF}}(\bm x)-\mu(\bm x))\overset{d}{\underset{n \to \infty}{\longrightarrow}} \mathcal N(0,1)\,, 
\end{align}
and the Importance Sampling ICRF estimator satisfies
\begin{align}
    \frac{1}{g'(\mu'({\bf x}))}  \sqrt{\dfrac {n}{s^2 V_{1,s}'}}( \widehat{\mu}_{\textup{IS},s}^{\textup{ICRF}}(\bm x)-\mu(\bm x))\overset{d}{\underset{n \to \infty}{\longrightarrow}} \mathcal N(0,1)\,, 
\end{align}
with, for all $\depth$ large enough, 
\begin{align}
    \dfrac{V_{1,s}'}{V_{1,s}} g'(\mu'({\bf x}))^2 = O(p).
\end{align}
\end{thm}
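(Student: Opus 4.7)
The plan is to derive both CLTs by extending the arguments of \Cref{th: TCLB} and \Cref{cor: TCLB} to the setting of \textbf{(H3)}, and then to compare the two asymptotic variances via a small-$p$ expansion. Under \textbf{(H3)}, the marginal densities $f_X=pf_{X|Y=1}+(1-p)f_{X|Y=0}$ and $f_{X'}=p'f_{X|Y=1}+(1-p')f_{X|Y=0}$ both lie in $[b_1,b_2]$ and are Lipschitz, which is all that is used of the uniform assumption \textbf{(H0)} in the proofs of \Cref{th: TCLB} and \Cref{cor: TCLB}; in particular, the cell probability $p_{\depth,\U}(\bm x)$ retains the form $c(\bm x)2^{-\depth}(1+\text{error})$ appearing in \Cref{lem: pU}. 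The two displayed CLTs are therefore obtained by an immediate adaptation, and the substantive content of \Cref{thm_comparison_variances} is the claim on $V_{1,s}'/V_{1,s}\cdot g'(\mu'({\bf x}))^2$.

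For that comparison I would invoke the two-sided bound of \Cref{prop:v1prime}, which applied to both estimators gives
\[
c_1({\bf x})\,\mu({\bf x})(1-\mu({\bf x}))\,\frac{2^{\depth}}{s^2\depth^{(d-1)/2}}\le V_{1,s}\le c_2({\bf x})\,\mu({\bf x})(1-\mu({\bf x}))\,\frac{2^{\depth}}{s^2\depth^{(d-1)/2}},
\]
and analogously with primes. Because all the ingredients entering $c_1({\bf x}),c_2({\bf x}),c_1'({\bf x}),c_2'({\bf x})$ — the Lipschitz constants, the bounds $b_1,b_2$ on the conditional densities, and $p'$ — are fixed under \textbf{(H3)}, these constants can be chosen uniformly in $p\to 0$. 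Dividing the upper bound on $V_{1,s}'$ by the lower bound on $V_{1,s}$ yields
\[
\frac{V_{1,s}'}{V_{1,s}}=O\!\left(\frac{\mu'({\bf x})(1-\mu'({\bf x}))}{\mu({\bf x})(1-\mu({\bf x}))}\right).
\]

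The conclusion then reduces to expanding each factor in $p$. Since $p'$ is fixed, $\mu'({\bf x})=p'f_{X|Y=1}({\bf x})/(p'f_{X|Y=1}({\bf x})+(1-p')f_{X|Y=0}({\bf x}))$ is independent of $p$ and, by \textbf{(H3)}, bounded away from $0$ and $1$, so $\mu'({\bf x})(1-\mu'({\bf x}))=\Theta(1)$. On the other hand $\mu({\bf x})=pf_{X|Y=1}({\bf x})/(pf_{X|Y=1}({\bf x})+(1-p)f_{X|Y=0}({\bf x}))=\Theta(p)$ uniformly in $\bm x$, whence $\mu({\bf x})(1-\mu({\bf x}))=\Theta(p)$ and $V_{1,s}'/V_{1,s}=O(1/p)$. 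Combining with the explicit formula for $g'$ stated in \Cref{cor: TCLB},
\[
g'(\mu'({\bf x}))^{2}=\left(\frac{p'(1-p)}{p(1-p')}\right)^{\!2}\!\left(\frac{\mu({\bf x})}{\mu'({\bf x})}\right)^{\!4}=\Theta(p^{-2})\cdot\Theta(p^{4})=\Theta(p^{2}),
\]
we finally obtain $\frac{V_{1,s}'}{V_{1,s}}g'(\mu'({\bf x}))^{2}=O(1/p)\cdot\Theta(p^{2})=O(p)$, as claimed.

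The main obstacle is the uniformity in $p$ of the constants appearing in the two-sided variance bounds: were $c_2({\bf x})$ to blow up or $c_1({\bf x})$ to vanish as $p\to 0$, the previous ratio computation would be vacuous. What saves the day is that \textbf{(H3)} is formulated in a $p$-free manner — in particular the marginal density $f_X$ stays in $[b_1,b_2]$ regardless of $p$ — so the cell probability $p_{\depth,\U}({\bf x})$ remains of order $2^{-\depth}$ with non-degenerate constants, and every intermediate estimate in the proof of \Cref{prop:v1prime} carries through with $p$-independent bounds. Once this uniformity is secured, the rest is the short algebraic expansion above.
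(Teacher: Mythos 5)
Your proof is essentially correct and follows the same strategy as the paper: adapt the variance bounds to the \textbf{(H3)} setting so that the constants in the two-sided control of $V_{1,s}$ and $V_{1,s}'$ are uniform in $p$, and then combine the small-$p$ orders $\mu(\bm x)=\Theta(p)$, $\mu'(\bm x)=\Theta(1)$ and $g'(\mu'(\bm x))^2=\Theta(p^2)$ to get the claimed $O(p)$. One caveat you should flag more clearly: \Cref{prop:v1prime} as written has an upper-bound constant $c_2'(\bm x)$ that contains a factor $p'/p$, which would blow up as $p\to 0$ and make the naive ratio comparison vacuous. The paper's proof avoids this by re-deriving the bounds through \Cref{lem:pUprimeprime} and \Cref{prp:asequiv_ter}, which replace the $p'/p$ and $(1-p')/(1-p)$ factors by the $p$-free density bounds $b_1,b_2$ from \textbf{(H3)}; your remark that the "\textbf{(H3)} is formulated in a $p$-free manner'' indicates awareness of this, but you should explicitly point out that the constants in \Cref{prop:v1prime} must be re-derived (not invoked as stated) for the $p$-uniformity to hold.
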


The proof of \Cref{thm_comparison_variances} is given in \Cref{sec:proof_th_compare}. \Cref{thm_comparison_variances} establishes that the Importance Sampling ICRF enjoys a variance reduction compared to the standard ICRF, built on the original imbalanced dataset. This variance reduction occurs for highly imbalanced dataset, for which $p$ is small. This variance reduction is all the more important as $p$ is small. This phenomenon is illustrated via experiments in the next section.

\section{Numerical illustrations}\label{sec:numerical}

In this section, we present numerical illustrations to validate and test theoretical results established in \Cref{th: TCL},  \Cref{th: TCLB}, \Cref{cor: TCLB} and \Cref{thm_comparison_variances}.

We begin by generating a dataset composed of $n$ i.i.d.\ pairs $(X_i, Y_i)$ distributed as $(X,Y)$ satisfying $X\sim U([0,1]^2)$ and where $Y \in \{0,1\}$ satisfies  
 \begin{align*}
     \P(Y=1|X=x) = \mu(x) = \frac{1}{1+\exp(-(\beta_0 + 3x_1+ 2x_2))}.
 \end{align*}
We consider an Imbalance Scenario  \textbf{(ImB-Sc)}, in which  $\beta_0$ is chosen such that $\P(Y=1) = 0.1$.

  We compare the three estimates defined in the previous sections: the ICRF estimate $\widehat{\mu}_{s}^{\textup{ICRF}}(\bm x)$, the rebalanced  ICRF $\widehat{\mu}_{\textup{RB},s}^{\textup{ICRF}}(\bm x)$ and the  importance sampling ICRF  $\widehat{\mu}_{\textup{IS},s}^{\textup{ICRF}}(\bm x)$ respectively defined in \Cref{def:ICRF}, \Cref{def:RB_ICRF} and \Cref{def:IS_ICRF}. To do so, we use the \texttt{R} package  \texttt{ranger}  (see \citet{wright2017ranger}), a fast implementation of random forests. In order to implement the rebalanced  ICRF, letting $n_1$ be the number of original minority samples, we use the parameter $$\texttt{sample.fraction} = \left(\frac{s}{n}\right) \left[\frac{n_1}{n} \frac{1-p'}{p'}, \frac{n_1-1}{n}\right],$$ which allows us to subsample a fraction $s/n$ of the minority class, and the rest among the majority class, so that the final proportion of $1$ equals $p'$. 
  
   All  simulations use a subsample size $s = n^{\alpha}$, with a maximal depth $\texttt{max.depth} = \beta \log_2 n$, which corresponds to a number of terminal nodes $\texttt{maxnodes} = n^{\beta}$ if trees are balanced. All other parameters are set to their default values (in particular, we use $500$ trees).  Different values for the parameters $\alpha$ and $\beta$ are considered in the experiments. 
  Note that according to \Cref{th: TCL}, the limiting value for $\alpha$ is $$\frac{d \log 2}{1 + d \log 2} \simeq 0.58$$ (since $d=2$ in our setting). 
  For each sample size $n$, and for each value of $(\alpha, \beta)$, the following procedure is repeated $B=1000$ times: 
  \begin{enumerate}
      \item A dataset is created with the specified sample size, following the distribution described above.
      \item A random forest is trained on this data set with the pre-specified subsample size and maximal depth (which is equivalent to the choice of $\alpha$ and $\beta$).
      \item The forest prediction is evaluated at an arbitrary point, namely ${\bf x} = (0.7, 0.7)$. 
  \end{enumerate}
  For each value of $(n, \alpha, \beta)$, we average the $1000$ predictions to estimate $\E[\widehat{\mu}_{s}^{\textup{ICRF}}(\bm x)]$.  We then use this estimation to compute   \begin{align}
       &  \E[\widehat{\mu}_{s}^{\textup{ICRF}}(\bm x)] -  \mu({\bf x}) \quad 
  \textrm{and}   \quad  \log \left( \E \left[ \left( \widehat{\mu}_{s}^{\textup{ICRF}}(\bm x)-\E[\widehat{\mu}_{s}^{\textup{ICRF}}(\bm x)] \right)^2  \right] \right). \label{quantities_left_hand_side_tcl_experiments}
  \end{align}
  According to our theoretical results, we should observe that these quantities are controlled \textit{via} linear function of $\log n$ with coefficients that correspond respectively to the bias rate and the variance rate. Note that a major difference between our theoretical analysis and our experimentation resides in forest algorithms: our theory is valid for centered random forests but experiments are performed with Breiman's random forests. This is due to the fact that theory for Breiman forests is notoriously difficult, and centered forests are not implemented. 
  
  \Cref{fig:expect} and \Cref{fig:var} display respectively the two terms in Equation~\eqref{quantities_left_hand_side_tcl_experiments} as a function of the logarithm of the sample size for different values of $(\alpha, \beta)$.
  
  \begin{figure}
     \centering
     \includegraphics[scale=0.4]{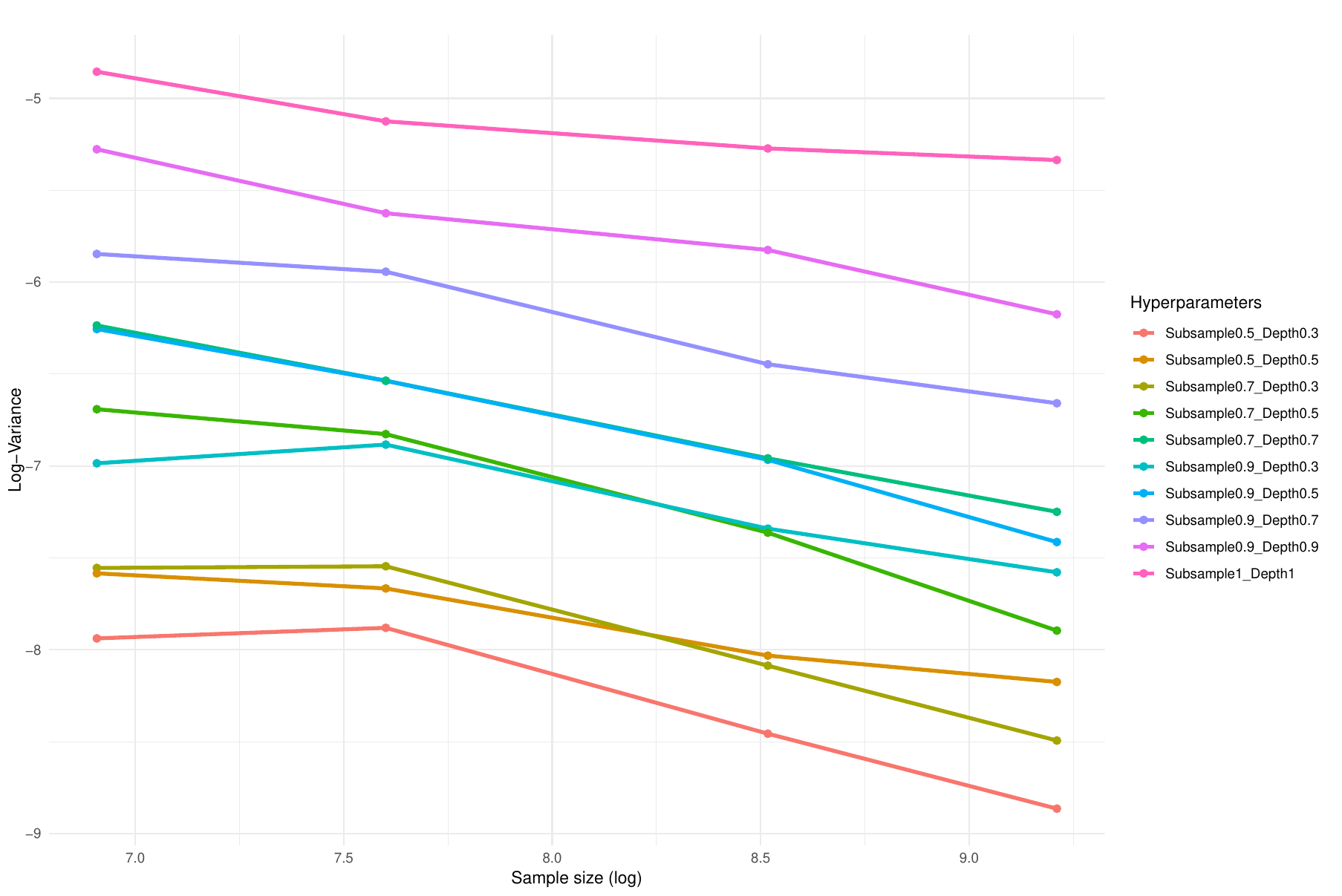}
     \caption{Variance of the CLT (second term in \eqref{quantities_left_hand_side_tcl_experiments}) for the ICRF, as a function of the logarithm of the sample size, for different choices of parameters $\alpha, \beta$ (displayed in the legend as Subsample$\alpha$\_Depth$\beta$).}
     \label{fig:var}
 \end{figure}
 
In \Cref{fig:var}, all curves appear to be straight lines as predicted by the theory for ICRF. Indeed, assume that the convergence in \Cref{th2} holds in $L^2$, that is 
$$
\dfrac {n \depth^{(d-1)/2}}{2^{\depth}} \E [( \widehat{\mu}_{s}^{\textup{ICRF}}(\bm x)-\E[\widehat{\mu}_{s}^{\textup{ICRF}}(\bm x)])^2] \to C(d)\mu(\bm x)(1-\mu(\bm x))\,.
$$
Taking the logarithm, and recalling that we choose the number of nodes $2^k = n^\beta$, we have
\begin{align}
& \log (\E [( \widehat{\mu}_{s}^{\textup{ICRF}}(\bm x)-\E[\widehat{\mu}_{s}^{\textup{ICRF}}(\bm x)])^2] ) \sim - (1 - \beta) \log n   - \frac{d-1}{2} \log \log n + C_{1,d,\beta}({\bf x})    \nonumber 
\end{align}
with 
\begin{align}
C_{1,d,\beta}({\bf x}) =    \log (C(d)\mu(\bm x)(1-\mu(\bm x)))  - \frac{d-1}{2} \log \beta  - \frac{d-1}{2} \log \log 2. \nonumber
\end{align}
Thus the theory for ICRF predicts that both the slope and the intercept of lines in \Cref{fig:var} depend on $\beta$, and in particular do not depend on the subsample size. 
We observe that several lines seem to have the same slope: for example, the three lines corresponding to the same value of $\beta=0.3$ (\textit{Depth0.3} in the caption), the two lines corresponding to $\beta = 0.7$, and two out of the three lines corresponding to $\beta=0.5$. We also observe that large values of $\beta \in (0,1)$ correspond to lower slopes as anticipated by theory. While the intercept cannot be visualised here, it appears that they are not in line with the theory we developed. Nevertheless, it is remarkable that some theoretical results obtained for ICRF can be retrieved empirically for Breiman's random forests.  

\Cref{fig:expect} displays biases, the first term in \eqref{quantities_left_hand_side_tcl_experiments}. We first note that biased are negative, which is not surprising: the dataset contains a majority of $0$, thus RF predictions are likely to be close to zero, which creates a negative bias. We also observe that all biases tends to zero, which is expected since the tree depth increases ($\depth = \beta \log_2 n$).

 \begin{figure}
     \centering
     \includegraphics[scale=0.4]{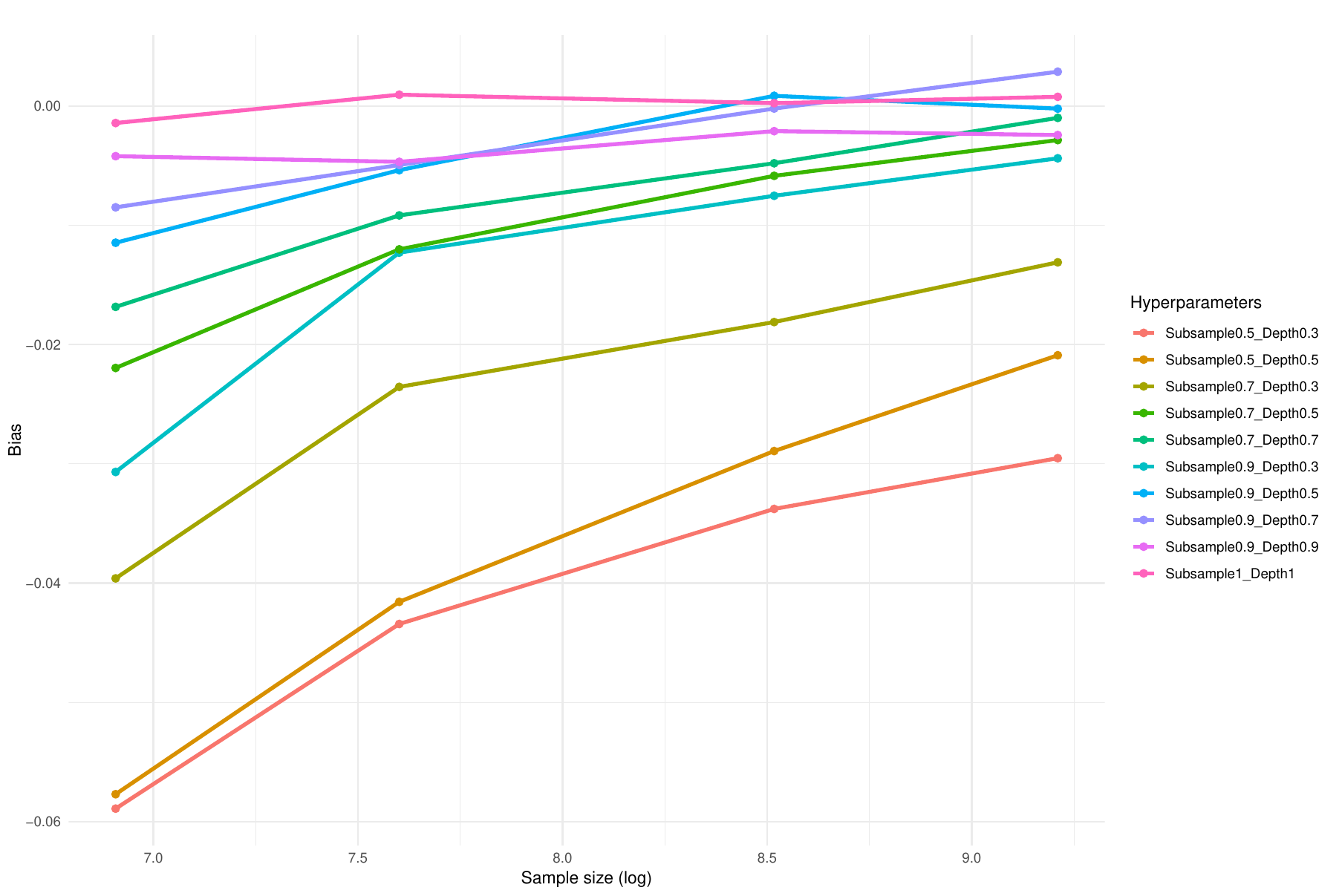}
     \caption{Bias of the CLT (first term in \eqref{quantities_left_hand_side_tcl_experiments}) as a function of the logarithm of the sample size, for different choices of parameters $\alpha, \beta$ (displayed in the legend as Subsample$\alpha$\_Depth$\beta$).}
     \label{fig:expect}
 \end{figure}
 
We then compare the three estimators on  a small sample size setting $n=100$. Histograms of the values $\widehat{\mu}_{s}^{\textup{ICRF}}(\bm x)$, $\widehat{\mu}_{\textup{RB},s}^{\textup{ICRF}}(\bm x)$ and $\widehat{\mu}_{\textup{IS},s}^{\textup{ICRF}}(\bm x)$ are displayed in \Cref{fig:hist_small_sample}. They correspond respectively to Breiman random forests trained on the original sample, Breiman forests trained on a subsample (see parameter \texttt{sample.fraction} above) and the debiasing approach applied to this last forest. 
 
In \Cref{fig:hist_small_sample}, we first observe that the average of the predictions (blue dashed line) is close to the theoretical value (red line), which is either $\mu({\bf x})$ for the first and third plot, or $\mu'({\bf x})$ for the second one. This is particularly true for Breiman's forests trained on the original dataset (RF). We also note that the prediction of Breiman's forest may be far from the correct value (predictions up to $0.6$ compared to the true value $0.17$). However, we see that the predictions of the IS-RF (the debiased forest) are closer to the true value. This is quantified by a smaller variance $0.061$ compared to the variance of the prediction of Breiman's forests $0.121$. The same conclusions hold for the large sample size setting $n=10.000$ and small number of Monte Carlo repetitions $B=100$ whose results are presented in \Cref{app_sec_additional_experiments} (see \Cref{fig:hist_large_sample}). This corroborates experimentally the variance reduction proved in \Cref{thm_comparison_variances} for small $p$.

\begin{figure}[h!]
    { \centering                                       
     \begin{subfigure}[b]{0.45\textwidth}
         \centering
         \includegraphics[width=1.3\textwidth]{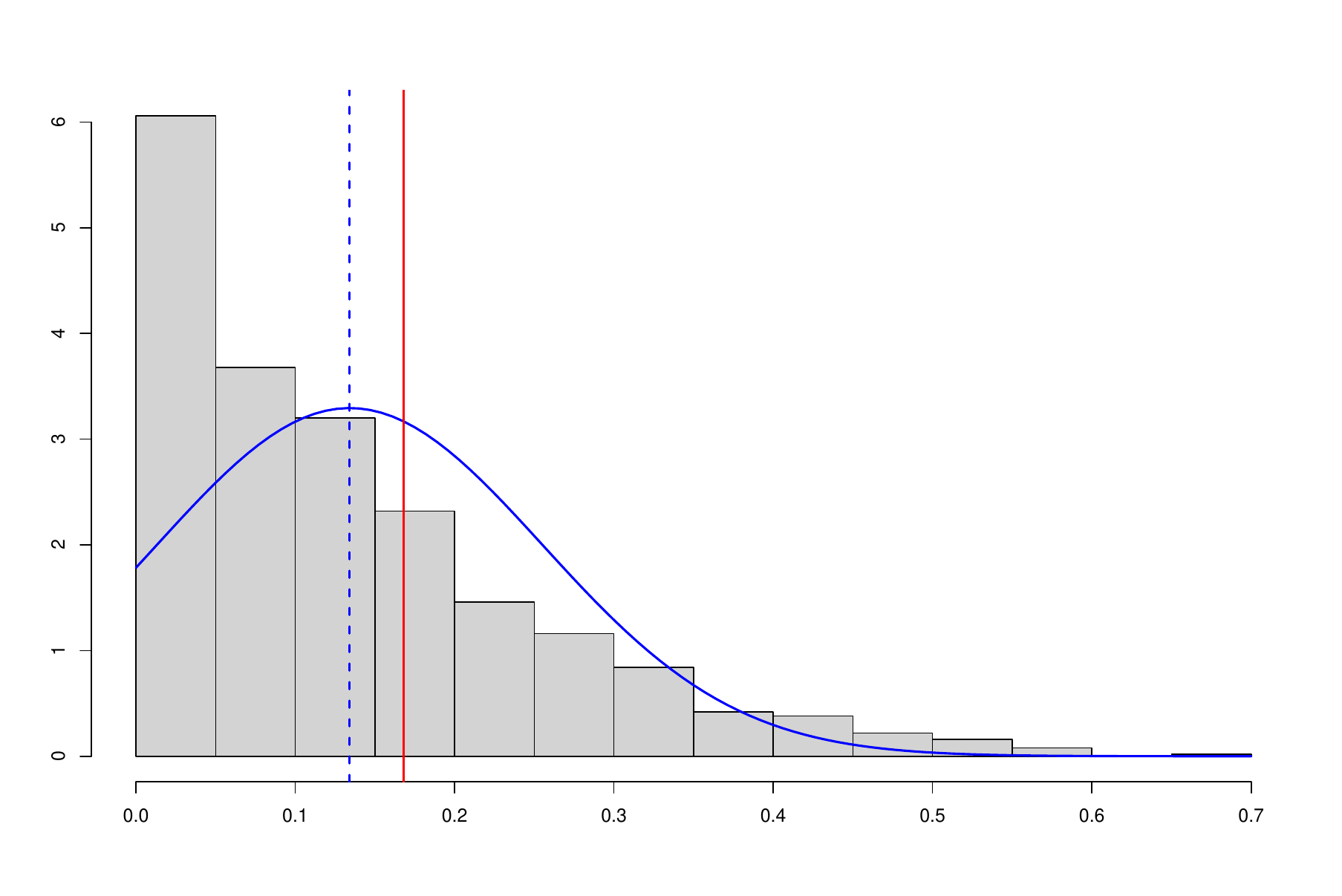}
         \caption{RF}
         \label{hist_small_sample_RF}
     \end{subfigure}
     \hfill
     \begin{subfigure}[b]{0.45\textwidth}
         \centering
         \includegraphics[width=1.3\textwidth]{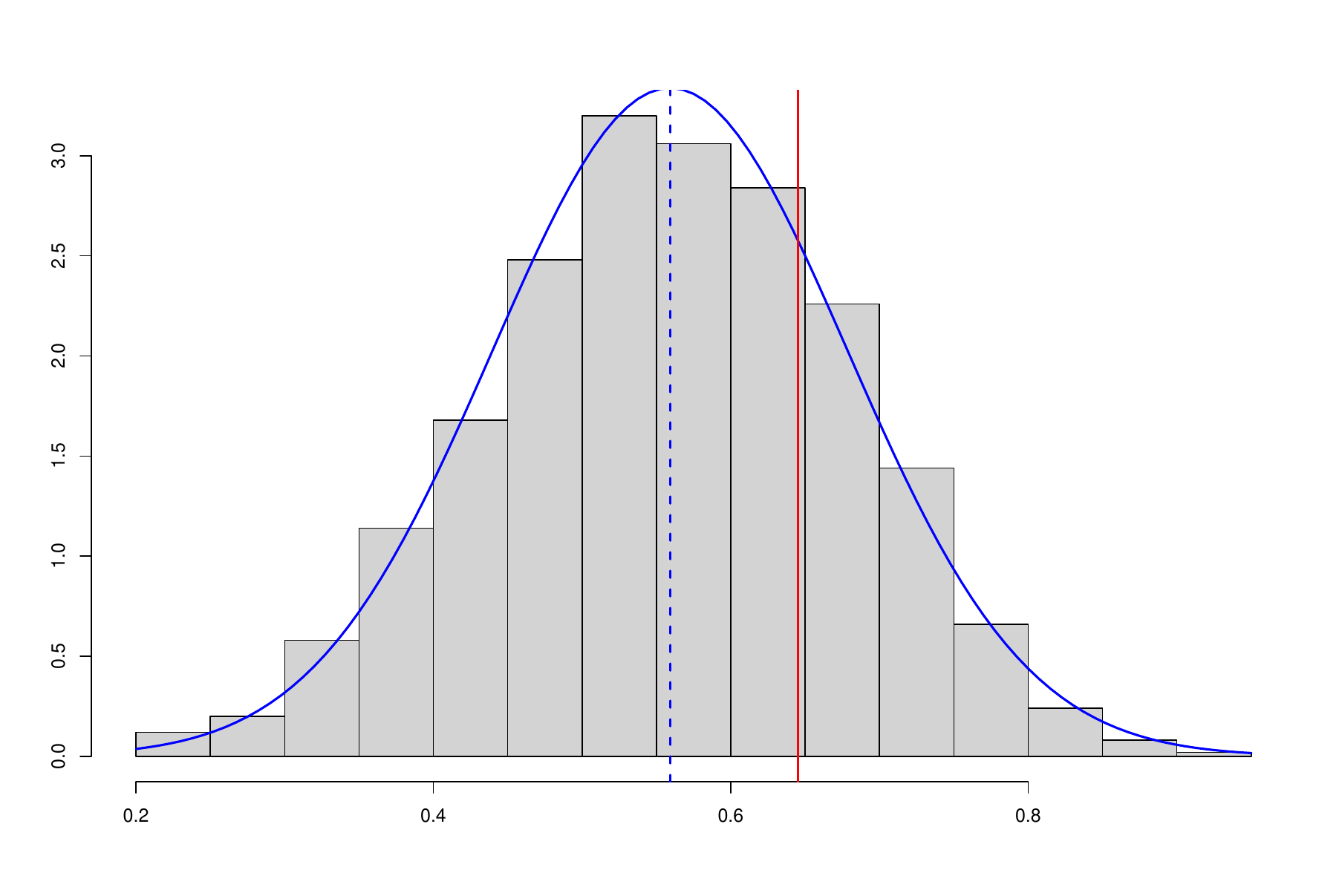}
         \caption{RB-RF}
         \label{hist_small_sample_RBRF}
     \end{subfigure}
      \hfill
     \begin{subfigure}[b]{0.45\textwidth}
         \centering
        \includegraphics[width=1.3\textwidth]{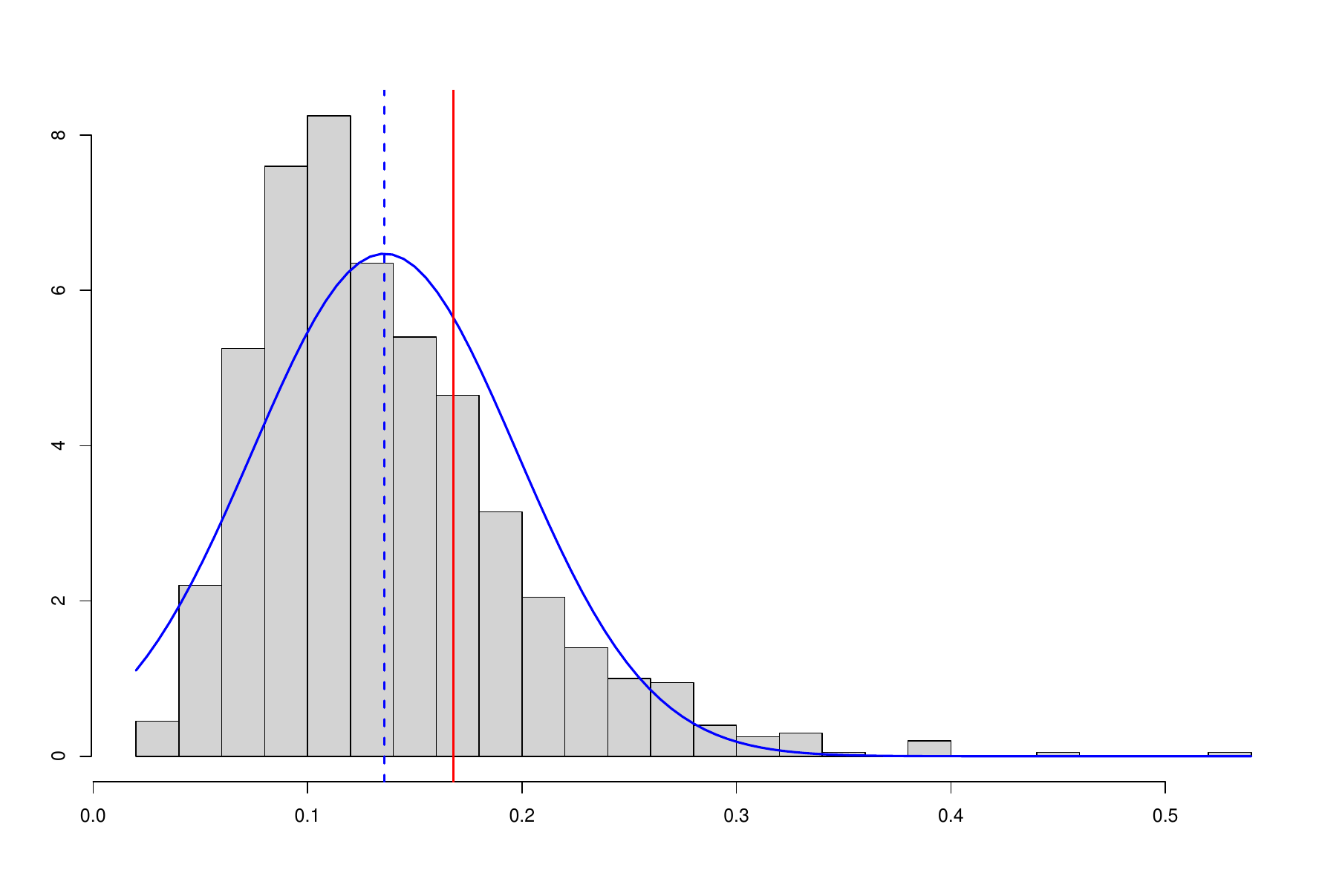}
        \caption{IS-RF}
         \label{hist_small_sample_ISRF}
    \end{subfigure}
        \caption{Histograms of predictions from different estimators in scenario \textbf{(ImB-Sc)} with $\alpha_1 = 0.9$, $\alpha_2 = 0.7$, $p'=0.5$, $n=100$ and $B=1000$ replicates. The blue curve is that of a Gaussian density whose mean (blue dashed line) and variance are estimated from the data. The red line corresponds to the centering term provided by our theory ($\mu(x)$ for the first and third graph, $\mu'(x)$ for the second one). From left to right, empirical variances are: $0.121$, $0.119$, $0.061$.}
        \label{fig:hist_small_sample}}
\end{figure}

\clearpage

\appendix

\section{Proof of \Cref{th2} and \Cref{th: TCL}}\label{sec:Proof_th_TCL}

For the sake of clarity, we begin by recalling the notations used throughout \Cref{sec:Proof_th_TCL}. Let \( Z_i := (X_i, Y_i) \) be  i.i.d\ copies of a random pair \( (X, Y) \), where \( X \in [0,1]^d \) and $Y=\{ 0,1\}$ with \( d \geq 2 \). Fixing $S=\{1,\ldots,s \}$ the prediction of a tree trained on the subsampled dataset $\bm Z_s= (Z_1,\ldots,Z_s)$ at point $\bm x \in [0,1]^d$  is defined as  
\begin{equation}\label{eq:T_onesample}
	T^s(\bm{x}; \U; \bm Z_{s})= \frac{\sum_{i=1}^s Y_{i}\1{\{X_{i} \in L_{\U}(\bm{x}) \}}}{N_{L_{\U}(\bm{x})}( \bm X_{s})}
	\end{equation}
	where 
	$N_{L_{\U}(\bm{x})}(\bm X_{s}):=
	\sum_{i=1}^s \1{\{X_{i} \in L_{\U}(\bm{x}) \}} \,$ 
is the number of observations $\bm X_{s}=(X_1,\ldots,X_s)$  falling into the leaf $L_{\U}(\bm{x})$.
	We also define the centered version of the tree prediction conditional on the first observation as
\begin{align*}
T^s_1 := \mathbb{E}[T^s(\bm{x}; \U; \bm{Z}_s) \mid Z_1] - \mathbb{E}[T^s(\bm{x}; \U; \bm{Z}_s)]
\end{align*}
and denote its associated variance by 
\begin{align*}
V^s_1 := \v(T^s_1).
\end{align*}
For notational convenience, we denote \( N^l := N_{L_{\U}(\bm{x})}(\bm{X}_l) \) for \( l = 1, \ldots, n \). We further define the following conditional quantities:
\begin{align}
\mu_{\depth,\U}(\bm{x})& :=\P(Y=1|X \in L_{\U}(\bm{x}),\U ) \\
\sigma^2_{\depth,\U}(\bm x) & := \mu_{\depth,\U}(\bm{x})(1- \mu_{\depth,\U}(\bm{x}))\\
p_{\depth,\U}(\bm{x}) & :=\mathbb{P}(X \in L_\U(\bm x) | \U) \label{def:psU}
\end{align}
where $p_{\depth,\U}(\bm{x})$ is the probability that a random element $X$ belongs to  $L_{\U}(\bm{x})$.
We also introduce the constants
\begin{align*}
 \alpha_1 = 1- 1/(2d) \ \ \text{and} \ \ \alpha_2 = 1 - 3/(4d).  
\end{align*} 
The diameter associated with the  leaf $L_{\U}(\bm{x})$ is 
$$
\textup{Diam}(L_{\U}(\bm{x})):= \sup_{\bm{x},\bm{x'} \in L_\U(\bm x)} \|\bm{x}-\bm{x'}\|_\infty,
$$ 
where $\|\cdot\|_\infty$ refers to the max-norm.  To lighten the notation, we omit the dependence on the tree depth \( \depth \) in \( L_{\U}(\bm{x}) \). Finally, recall that throughout this section, all asymptotics are considered in the limit as $n,s,k \to \infty$. In the remainder of this Appendix, we outline the main steps in the proofs of \Cref{th2} and \Cref{th: TCL} in \Cref{subsec:Proof_th_TCL}. Auxiliary results needed for the analysis are collected in Appendix \ref{subsec:pre_results}, while the computations related to the variance terms  \( \v(T^s) \) and \( V_1^s \) are deferred to Appendices \ref{subsec:Control_vETs} and \ref{subsec:equiv_V1s}, respectively.

\subsection{Outline of the proof of \Cref{th2} and \Cref{th: TCL}}\label{subsec:Proof_th_TCL}

The proofs of \Cref{th2} and \Cref{th: TCL} rely heavily on Theorem 4.1 and Corollary 4.2 from \citet{mayala2024infinite}. For the sake of completeness, we have restated these results in \Cref{th: thmaya}. Although originally established for a broader class of forests than the ICRF, we present them here specifically in the context of the ICRF.

\begin{thm}\label{th: thmaya}[Theorem 4.1 and Corollary 4.2 in \cite{mayala2024infinite}]
	Let $\widehat{\mu}_{s}^{\textup{ICRF}}$ be the ICRF estimator at point  $\bm x\in [0,1]^d$ as defined in \Cref{def:ICRF}.  If $nV_1^s\to \infty$ as $n,s,k \to \infty$, we have  
	$$
\sqrt{\dfrac {n}{s^2V_1^s}}( \widehat{\mu}_{s}^{\textup{ICRF}}(\bm x)-\E[\widehat{\mu}_{s}^{\textup{ICRF}}(\bm x)])\overset{d}{\underset{n \to \infty}{\longrightarrow}} \mathcal N(0,1)\,.
$$
Furthermore, assume that 
\begin{align}\label{eq:secThA1}
    \sqrt{\dfrac {n}{s^2V_1^s}}\left|\E[\widehat{\mu}_{s}^{\textup{ICRF}}(\bm x)]-\mu(\bm{x})\right| \to 0.
\end{align}
Then, 
\begin{align}
    \sqrt{\dfrac {n}{s^2V_1^s}}( \widehat{\mu}_{s}^{\textup{ICRF}}(\bm x)- \mu({\bf x}))\overset{d}{\underset{n \to \infty}{\longrightarrow}} \mathcal N(0,1)\,.
\end{align}
\end{thm}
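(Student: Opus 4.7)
The plan is to exploit the fact that, for each fixed sample $\bm Z_n$, the ICRF estimator is a complete $U$-statistic of order $s$ with symmetric bounded kernel
$$h(Z_{i_1},\ldots,Z_{i_s})=\mathbb{E}[T^s(\bm x;\U;\bm Z_S)\mid \bm Z_S]\in[0,1].$$
I would apply Hoeffding's decomposition, writing
$$\widehat{\mu}_{s}^{\textup{ICRF}}(\bm x)-\mathbb{E}[\widehat{\mu}_{s}^{\textup{ICRF}}(\bm x)]=\sum_{c=1}^{s}\binom{s}{c}H_n^{(c)},$$
where the $H_n^{(c)}$ are the orthogonal projection terms. The linear projection,
$$\binom{s}{1}H_n^{(1)}=\frac{s}{n}\sum_{i=1}^{n}\bigl(\mathbb{E}[T^s\mid Z_i]-\mathbb{E}[T^s]\bigr)=\frac{s}{n}\sum_{i=1}^{n}T_{1,i}^s,$$
is a sum of i.i.d.\ centered random variables with variance $V_1^s$, and has variance exactly $s^2V_1^s/n$.

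The first substantive step is to show that under $nV_1^s\to\infty$ the linear term dominates, i.e.\
$$\mathrm{Var}\bigl(\widehat{\mu}_{s}^{\textup{ICRF}}(\bm x)\bigr)=\frac{s^2V_1^s}{n}(1+o(1)).$$
Using the classical variance decomposition of a $U$-statistic,
$$\mathrm{Var}(U_n)=\binom{n}{s}^{-1}\sum_{c=1}^{s}\binom{s}{c}\binom{n-s}{s-c}V_c^s,$$
together with the bound $V_c^s\le\mathrm{Var}(T^s)\le 1/4$ available because $T^s\in[0,1]$, one checks that each term with $c\ge 2$ is of order $s^{2c}/n^c$, which is $o(s^2V_1^s/n)$ under $nV_1^s\to\infty$ in the regime where $s^2/n$ is controlled (as it is through hypothesis \textbf{(G1)} combined with \eqref{condition_th1}). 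This is the main technical hurdle and the reason the assumption $nV_1^s\to\infty$ cannot be dispensed with.

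The second step is to establish the CLT for the dominant linear part. Since $|T_{1,i}^s|\le 1$ almost surely, the Lindeberg condition for the triangular array $(T_{1,i}^s/\sqrt{nV_1^s})_{1\le i\le n}$ reduces to verifying
$$\frac{1}{V_1^s}\mathbb{E}\bigl[(T_{1,1}^s)^2\mathbf{1}\{|T_{1,1}^s|>\varepsilon\sqrt{nV_1^s}\}\bigr]\to 0,$$
which holds trivially because $|T_{1,1}^s|\le 1$ and $nV_1^s\to\infty$. The standard Lindeberg–Feller theorem then gives
$$\sqrt{\frac{n}{s^2V_1^s}}\cdot\binom{s}{1}H_n^{(1)}\overset{d}{\longrightarrow}\mathcal N(0,1),$$
and the negligibility of higher-order projections, established in the previous step, transfers the CLT to $\widehat{\mu}_{s}^{\textup{ICRF}}(\bm x)-\mathbb{E}[\widehat{\mu}_{s}^{\textup{ICRF}}(\bm x)]$ via Slutsky's theorem.

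The final step, i.e.\ recentering at $\mu(\bm x)$, is a direct application of Slutsky's theorem: writing
$$\sqrt{\frac{n}{s^2V_1^s}}\bigl(\widehat{\mu}_{s}^{\textup{ICRF}}(\bm x)-\mu(\bm x)\bigr)=\sqrt{\frac{n}{s^2V_1^s}}\bigl(\widehat{\mu}_{s}^{\textup{ICRF}}(\bm x)-\mathbb{E}[\widehat{\mu}_{s}^{\textup{ICRF}}(\bm x)]\bigr)+\sqrt{\frac{n}{s^2V_1^s}}\bigl(\mathbb{E}[\widehat{\mu}_{s}^{\textup{ICRF}}(\bm x)]-\mu(\bm x)\bigr),$$
the first summand converges in distribution to $\mathcal N(0,1)$ by the first part, while the deterministic bias term vanishes by hypothesis \eqref{eq:secThA1}. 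I expect the dominant difficulty to be uniformly controlling the higher-order Hoeffding variances $V_c^s$ for $c\ge 2$ when $s$ grows with $n$; once that quantitative bound is in place, the rest of the argument is a routine application of Lindeberg–Feller and Slutsky.
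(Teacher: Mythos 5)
The paper does not supply a proof of this theorem; it is explicitly restated from Theorem 4.1 and Corollary 4.2 of \citet{mayala2024infinite} and applied as a black box. Your reconstruction therefore provides an argument where the paper provides only a citation, and the overall structure — Hajek projection, Lindeberg–Feller for the linear term, negligibility of higher-order projections, Slutsky for recentering — is the correct one and presumably mirrors the cited reference. The Lindeberg verification and the final Slutsky step are exactly right: since $|T^s_{1,i}|\le 1$, the Lindeberg sum vanishes as soon as $nV_1^s\to\infty$, and the bias term is disposed of by \eqref{eq:secThA1}.

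There is, however, a genuine gap in the one substantive step, the control of the higher-order Hoeffding terms. The bound you invoke, namely $V_c^s\le\mathrm{Var}(T^s)\le 1/4$ together with the per-term estimate that the $c$-th term is of order $s^{2c}/n^c$, gives a total remainder of order $(s^2/n)^2$; to conclude you would need $(s^2/n)^2=o(s^2V_1^s/n)$, i.e.\ $s^2/n=o(V_1^s)$. This is strictly stronger than $nV_1^s\to\infty$: writing $s\sim n^\alpha$ and $2^k\sim n^\beta$, so that $V_1^s\sim n^{\beta-2\alpha}$ up to $\log$ factors, your bound requires $\alpha<(1+\beta)/4$, whereas the statement (and the regime in which the paper actually invokes it, cf.\ \eqref{condition_th1}--\eqref{condition_th2}) only assumes $\alpha<(1+\beta)/2$. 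The appeal to \textbf{(G1)} and \eqref{condition_th1} does not close the gap; for $\alpha\in\big((1+\beta)/4,(1+\beta)/2\big)$ your bound genuinely fails.

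The fix is a standard refinement that closes the gap under exactly $nV_1^s\to\infty$. Work with the orthogonal Hoeffding projection form $\mathrm{Var}(U_n)=\sum_{c=1}^s\binom{s}{c}^2\binom{n}{c}^{-1}\sigma_c^2$, where $\sigma_c^2$ is the variance of the $c$-th degenerate kernel and the $c=1$ term is exactly $s^2V_1^s/n$. The key identity is $\sum_{c=1}^s\binom{s}{c}\sigma_c^2=\mathrm{Var}(T^s)$, and the ratio $\binom{s}{c}/\binom{n}{c}$ is decreasing in $c$ for $s\le n$, so $\sum_{c\ge 2}\binom{s}{c}^2\binom{n}{c}^{-1}\sigma_c^2=\sum_{c\ge 2}\frac{\binom{s}{c}}{\binom{n}{c}}\binom{s}{c}\sigma_c^2\le\frac{\binom{s}{2}}{\binom{n}{2}}\mathrm{Var}(T^s)$. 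Since $\mathrm{Var}(T^s)\le 1/4$, the remainder is $O(s^2/n^2)$, which is $o(s^2V_1^s/n)$ precisely when $nV_1^s\to\infty$, with no extra regularity condition on $s/n$. With that replacement, the rest of your argument goes through as written.
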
 

According to \Cref{th: thmaya}, the proofs of \Cref{th2} and \Cref{th: TCL} reduce to verifying that the condition \( nV_1^s \to \infty \) and the condition given in \Cref{eq:secThA1} hold as $n,s,k \to \infty$, respectively.

\begin{proof}[Proof of \Cref{th2}]
According to \Cref{th: thmaya}, to prove \Cref{th2}, we only have to show that  $nV_1^s\to \infty$ holds as $n,s,k \to \infty$. Using \Cref{subsec:equiv_V1s}, under conditions \textbf{(H0)}  and \textbf{(H1)}, we have
\begin{align*}
V_1^s = \frac{C(d) 2^{\depth}}{s^2 \depth^{(d-1)/2}}\mu(\bm x)(1-\mu(\bm x))+o\Big(\frac{(2 \sqrt{\alpha_2})^\depth}{s^2}\Big)+O\Big(\frac{4^\depth e^{-s2^{-\depth}}}{s^2}\Big)
\,.
\end{align*}
If moreover condition {\bf (G1)} holds, as $n,s,k \to \infty$ we have 
\begin{align}\label{eq1:proof_th2}
    V_1^s \sim \frac{C(d) 2^{\depth}}{s^2 \depth^{(d-1)/2}}\mu(\bm x)(1-\mu(\bm x)),
\end{align}
which under the additional assumption  
\begin{align}
    \frac{n 2^{\depth}}{s^2 \depth^{(d-1)/2}} \to \infty 
\end{align}
ensures $n V_1^s \to \infty$. Finally \Cref{th: thmaya} applies and leads to 
\begin{align}
    \sqrt{\dfrac {n \depth^{(d-1)/2}}{2^{\depth}}}( \widehat{\mu}_{s}^{\textup{ICRF}}(\bm x)-\E[\widehat{\mu}_{s}^{\textup{ICRF}}(\bm x)])\overset{d}{\underset{n \to \infty}{\longrightarrow}} \mathcal N(0, C(d) \mu(\bm x)(1-\mu(\bm x)) )
\end{align}
concluding the proof.
\end{proof}

\begin{proof}[Proof of \Cref{th: TCL}]
Assume the conditions of \Cref{th2} hold. According to \Cref{th: thmaya}, to prove \Cref{th: TCL}, we only have to show that  the condition
$$\sqrt{n/(s^2V_1^s)}\left|\E[\widehat{\mu}_{s}^{\textup{ICRF}}(\bm x)]-\mu(\bm{x})\right| \to 0$$ holds as $n,s,k \to \infty$, which from \Cref{eq1:proof_th2} turns to 
\begin{align*}
    \sqrt{\dfrac {n \depth^{(d-1)/2}}{2^{\depth}}}\left|\E[\widehat{\mu}_{s}^{\textup{ICRF}}(\bm x)]-\mu(\bm{x})\right| \to 0.
\end{align*}
From the definition of the ICRF estimator given in  \Cref{def:ICRF}, we have
\begin{align*}
  \E[\widehat{\mu}_{s}^{\textup{ICRF}}(\bm x)]&=\E\left[  \binom{n}{s}^{-1} \sum_{S \subset \{1,\ldots,n \}, |S|=s} \mathbb{E}[T^s(\bm{x}; \U;\bm Z_{s}) \mid \bm Z_{s}] \right]\\
  &= \E\big[\E[T^s(\bm{x}; \U;\bm Z_{s}) \mid \bm Z_{s}]\big]\\
  &= \E[T^s(\bm{x}; \U; \bm Z_{s})].
\end{align*}

Together with \Cref{lem:bias_Ts} i), as $n,s,k \to \infty$, we obtain 
\begin{align*}
    \E[\widehat{\mu}_{s}^{\textup{ICRF}}(\bm x)] - \mu({\bf x}) & = \E[T^s(\bm{x}; \U; \bm Z_{s})] - \mu({\bf x})\\
    & = O(\alpha_1^\depth).
\end{align*}
Consequently, 
\begin{align}
    & \sqrt{\dfrac {n \depth^{(d-1)/2}}{2^{\depth}}}\left|\E[\widehat{\mu}_{s}^{\textup{ICRF}}(\bm x)]-\mu(\bm{x})\right| \\
    & = O\big( \sqrt{\dfrac {n \depth^{(d-1)/2}}{2^{\depth}}} \alpha_1^{\depth} \big)\\
    & = O\big( n^{1/2} \left(\frac{\alpha_1}{\sqrt{2}}\right)^\depth \depth^{(d-1)/4} \big). \label{eq_proof_th2_5_1}
\end{align}
Noticing that 
\begin{align}
\left( \frac{\alpha_1}{\sqrt{2}} \right)^{\depth} & = \exp \left( \depth \log \left( \frac{\alpha_1}{\sqrt{2}}\right) \right) \\
& = \exp \left( \frac{\depth \log(\alpha_1/\sqrt{2})}{\log 2} \log 2 \right) \\
& = \left(2^{\depth}\right)^{\frac{\log(\alpha_1/\sqrt{2})}{\log 2}},
\end{align}
the quantity \eqref{eq_proof_th2_5_1} tends to zero if 
\begin{align}
n^{-1/2} \left(2^{\depth}\right)^{- \frac{\log(\alpha_1/\sqrt{2})}{\log 2}} \depth^{-(d-1)/4} \to \infty,
\end{align}
which is equivalent to \begin{align}
n^{- \frac{\log 2}{2 \log(\sqrt{2}/\alpha_1)}} 2^{\depth} \depth^{- \frac{(d-1)\log 2}{4 \log(\sqrt{2}/\alpha_1)}} \to \infty, \label{eq_proof_th2_5_final_condition}
\end{align}
since $\frac{\log(\sqrt{2}/\alpha_1)}{\log 2} \geq 0.$ The condition \eqref{eq_proof_th2_5_final_condition} is implied by 
\begin{align}
 2^{\depth} \depth^{- \frac{d-1}{2 }} n^{- \frac{1}{1 - 2 \log_2(\alpha_1)}}\to \infty. \label{eq_proof_th2_5_final_condition_bis}   
\end{align}
Using the fact that $\log (1-x) \leq -x$ for all $x>1$, we have
\begin{align}
\frac{1}{1 - 2 \log_2(\alpha_1)} \leq \frac{d \log 2}{1 + d \log 2}.
\end{align}
Thus, condition \eqref{eq_proof_th2_5_final_condition_bis} is implied by 
\begin{align}
 2^{\depth} \depth^{- \frac{d-1}{2 }} n^{- \frac{d \log 2}{1 + d \log 2}}\to \infty.  
\end{align}
This proves that the second condition given in \Cref{eq:secThA1} in \Cref{th: thmaya} is satisfies and completes the proof.

\end{proof}

\subsection{Preliminary results}\label{subsec:pre_results}
The next Lemma characterizes inverse moments of a binomial random variable.  

\begin{lem}[\textbf{Inverse moment of a binomial distribution}]\label{lem:cribari}
Let $Z\sim \mathcal{B}(n,p)$, with $n\geq 1$ and $p>0$. Then the following two assertions holds 

\begin{itemize}
    \item [i)]The first inverse moment of $Z$ satisfies the identity $$\mathbb{E}\Big[\frac{1}{1+Z}\Big] = \frac{1-(1-p)^{n+1}}{(n+1)p};$$
    
    \item [ii)] For all $\alpha >0$, the $\alpha$-th moment  asymptotically satisfies 
$$\mathbb{E}\Big[\frac{1}{(1+Z)^\alpha}\Big] = (np)^{-\alpha} + O\Big(n^{-(\alpha+1)}\Big).$$
\end{itemize}
\end{lem}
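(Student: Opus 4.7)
\medskip

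\noindent\textbf{Proof proposal.} The two parts call for quite different techniques, so I would handle them separately.

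For part (i), the plan is a purely algebraic computation. Starting from
\[
\E\Big[\frac{1}{1+Z}\Big] = \sum_{k=0}^{n}\frac{1}{k+1}\binom{n}{k}p^{k}(1-p)^{n-k},
\]
I would use the identity $\frac{1}{k+1}\binom{n}{k}=\frac{1}{n+1}\binom{n+1}{k+1}$ to rewrite the sum as $\frac{1}{(n+1)p}\sum_{k=0}^{n}\binom{n+1}{k+1}p^{k+1}(1-p)^{n-k}$, reindex with $j=k+1$, and recognize the result as $\frac{1}{(n+1)p}\bigl(\sum_{j=0}^{n+1}\binom{n+1}{j}p^{j}(1-p)^{n+1-j}-(1-p)^{n+1}\bigr)=\frac{1-(1-p)^{n+1}}{(n+1)p}$. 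This is routine and presents no obstacle.

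For part (ii), my plan is a bulk/tail decomposition combined with a Taylor expansion. Let $f(z)=(1+z)^{-\alpha}$ and $A_{n}=\{|Z-np|\le np/2\}$. A Chernoff (or Bernstein) bound for the binomial gives $\P(A_{n}^{c})\le 2e^{-c(p)n}$ for some constant $c(p)>0$; since $f(Z)\le 1$ deterministically, the contribution of $A_{n}^{c}$ is exponentially negligible, hence absorbed into the $O(n^{-(\alpha+1)})$ error. On $A_{n}$, one has $Z\ge np/2$ and I would write
\[
f(Z) = f(np) + f'(np)(Z-np) + \tfrac{1}{2}f''(\xi_{Z})(Z-np)^{2},
\]
with $\xi_{Z}\in[np/2,3np/2]$, so that $|f''(\xi_{Z})|\le \alpha(\alpha+1)(1+np/2)^{-(\alpha+2)}=O(n^{-(\alpha+2)})$. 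Taking expectations on $A_{n}$: the quadratic remainder is bounded by a constant times $n^{-(\alpha+2)}\E[(Z-np)^{2}]=n^{-(\alpha+2)}\,np(1-p)=O(n^{-(\alpha+1)})$, while the linear term satisfies $\E[(Z-np)\1_{A_{n}}]=-\E[(Z-np)\1_{A_{n}^{c}}]$, which is exponentially small by Cauchy--Schwarz together with the concentration bound. Finally, expanding the constant term gives $f(np)=(1+np)^{-\alpha}=(np)^{-\alpha}\bigl(1-\alpha/(np)+O(n^{-2})\bigr)=(np)^{-\alpha}+O(n^{-(\alpha+1)})$. Adding these contributions produces the stated asymptotic.

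The main obstacle, in my view, is purely bookkeeping: one must verify that every remainder truly decays at order $n^{-(\alpha+1)}$ or faster, and in particular that the Taylor remainder on $A_{n}$ does not blow up because $\xi_{Z}$ is bounded away from $0$ on this event (this is why the $\tfrac{1}{2}$--concentration window is essential). An alternative and more compact route that would avoid some of this bookkeeping is the Laplace-type identity
\[
\E\Big[\frac{1}{(1+Z)^{\alpha}}\Big] = \frac{1}{\Gamma(\alpha)}\int_{0}^{\infty} t^{\alpha-1}e^{-t}\bigl(1-p+pe^{-t}\bigr)^{n}\,\d t,
\]
to which a Laplace expansion around $t=0$ can be applied; this yields the same result but trades elementary concentration for a more delicate contour/saddle analysis, so I would favour the Taylor/concentration approach above.
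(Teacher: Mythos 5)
Both parts of your proposal are correct, and your argument for part (ii) is sound. The paper itself does not supply a proof of this lemma: it simply cites Lemma~11 of \citet{biau2012analysis} and \citet{cribari2000note}, so there is no in-paper argument to compare against — you have filled in a proof the authors chose to delegate. Your route (exact combinatorial identity for (i); bulk/tail split at $|Z-np|\le np/2$ plus a second-order Taylor expansion of $z\mapsto(1+z)^{-\alpha}$ at the mean for (ii)) is the standard and natural one, and it is essentially the same type of argument as in the cited references. The accounting checks out: on the tail event the integrand is bounded by $1$ so its contribution is exponentially negligible; on the bulk the second derivative is $O(n^{-(\alpha+2)})$ uniformly because $\xi_Z\ge np/2$, and combined with $\E[(Z-np)^2]=np(1-p)$ this gives the $O(n^{-(\alpha+1)})$ remainder; the linear term vanishes up to exponential corrections by centering plus Cauchy--Schwarz; and $(1+np)^{-\alpha}=(np)^{-\alpha}+O(n^{-(\alpha+1)})$ supplies the leading term. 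One small caveat worth stating explicitly if you wrote this out in full: the constants hidden in your $O(\cdot)$'s depend on $p$ (through $c(p)$ and through $p^{-\alpha}$-type factors), so part (ii) as stated is an asymptotic in $n$ with $p$ fixed and bounded away from $0$; it would not be uniform over $p\to 0$. That is consistent with how the lemma is phrased, but it is the kind of bookkeeping detail one must keep in mind when the lemma is invoked with $p$ of order $2^{-k}$ elsewhere in the paper (the main applications there actually use the exact identity in part (i), so this does not cause trouble).
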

\begin{proof}[Proof of \Cref{lem:cribari}.]
See Lemma 11 in \cite{biau2012analysis} and \cite{cribari2000note}. 
\end{proof}

The next result provides upper bounds  of the bias and the variance for the diameter of the random leaf $L_{\U}(\bm{x})$ in the context of RF that are built from centered random trees as defined in \Cref{subsec:Notation}.

\begin{lem}[\textbf{Bias and variance of $Diam(L_{\U}(\bm{x})$}]\label{lem:diam}

For all $\bm x \in [0,1]^d$, the diameter $\textup{Diam}(L_{\U}(\bm{x}))$ of a leaf $L_{\U}(\bm{x})$ of a CRT as defined in \Cref{subsec:Notation} satisfies
\begin{itemize}
    \item [i)] $\E \Big[ \textup{Diam}(L_{\U}(\bm{x}))\Big]\leq d\Big(1-\frac{1}{2d}\Big)^{\depth}=O(\alpha_{1}^k);$
    
    \item [ii)]$\E \Big[ \textup{Diam}(L_{\U}(\bm{x}))^2\Big] \leq d\Big(1-\frac{3}{4d}\Big)^{\depth}=O(\alpha_{2}^k).$
\end{itemize}

\end{lem}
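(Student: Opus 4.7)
The plan is to exploit the explicit combinatorial structure of a centered random tree: because splits are always taken at the midpoint of the chosen coordinate, the shape of the leaf $L_{\U}(\bm{x})$ is determined entirely by the counts of splits performed along each coordinate, independently of the covariates or labels. Specifically, let $K_j$ denote the number of times coordinate $j \in \{1,\ldots,d\}$ is selected during the $\depth$ splits that produce $L_{\U}(\bm{x})$. Then $K_1+\cdots+K_d=\depth$ and $(K_1,\ldots,K_d)$ follows a Multinomial$(\depth;1/d,\ldots,1/d)$ distribution, so each marginal is $K_j\sim \mathrm{Binomial}(\depth,1/d)$. Since every split at the midpoint halves the length of the cell along the chosen coordinate and the cube $[0,1]^d$ has unit side length, the side length of $L_{\U}(\bm{x})$ along coordinate $j$ equals $2^{-K_j}$. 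Because we measure diameter in the max norm,
\begin{equation*}
\textup{Diam}(L_{\U}(\bm{x})) \;=\; \max_{1\le j\le d} 2^{-K_j}.
\end{equation*}

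For (i), I would bound the max by the sum, use linearity of expectation, and then evaluate the probability generating function of a binomial at $s=1/2$:
\begin{equation*}
\E[\textup{Diam}(L_{\U}(\bm x))] \;\le\; \sum_{j=1}^d \E\!\left[(1/2)^{K_j}\right] \;=\; d\left(1-\tfrac{1}{d}+\tfrac{1}{2d}\right)^{\depth} \;=\; d\left(1-\tfrac{1}{2d}\right)^{\depth},
\end{equation*}
which is exactly $O(\alpha_1^{\depth})$. For (ii), I would observe that $\textup{Diam}(L_{\U}(\bm x))^2 = \max_j 2^{-2K_j}$ and repeat the same argument with the generating function evaluated at $s=1/4$:
\begin{equation*}
\E[\textup{Diam}(L_{\U}(\bm x))^2] \;\le\; \sum_{j=1}^d \E\!\left[(1/4)^{K_j}\right] \;=\; d\left(1-\tfrac{1}{d}+\tfrac{1}{4d}\right)^{\depth} \;=\; d\left(1-\tfrac{3}{4d}\right)^{\depth},
\end{equation*}
which is $O(\alpha_2^{\depth})$.

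There is no real obstacle here: the decisive point, on which the whole argument hinges, is the observation that in a CRT the splitting procedure is oblivious to the data, so the vector of split counts along the coordinates is exactly multinomial and independent of $\bm X_s$ and $\bm x$. Once this is noted, the bounds reduce to evaluating $\E[s^{K_j}]$ for the two values $s=1/2$ and $s=1/4$, and the union bound over coordinates (losing only the factor $d$) is the simplest way to pass from the marginal generating function computation to a bound on the maximum.
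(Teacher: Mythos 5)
Your proof is correct and follows essentially the same route as the paper: identify the side length along coordinate $j$ as $2^{-K_j}$ with $K_j\sim\mathcal{B}(\depth,1/d)$, bound the max by the sum of $d$ exchangeable terms, and evaluate the binomial probability generating function at $1/2$ (resp. $1/4$). The only cosmetic difference is that you invoke the PGF directly, while the paper writes $K_{n1}=\sum_\ell B_\ell$ with i.i.d.\ Bernoulli factors and computes $(\E[2^{-B_1}])^{\depth}$ — the same calculation.
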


\begin{proof}[Proof of \Cref{lem:diam}.] Fix  $\bm x \in [0,1]^d$. 
\begin{itemize}
\item[i)] We proceed similarly to  \cite{biau2012analysis}. Let $V_{nj}({\bf x},\U) $  be the size of the $j$-th dimension of the cell $L_{\U}(\bm{x})$ containing $\bm{x}$. Since at each node, the $j$-th coordinate is choosen uniformly at random  with probability $1/d$, all $V_{nj}({\bf x},\U) $ have the same distribution. Therefore, $$\E \Big[ \textup{Diam}(L_{\U}(\bm{x}))\Big]=    \E \Big[ \max_{1\le j\le d} V_{nj}(\bm{x},\U)\Big]\\  \leq d \E \Big[V_{n1}(\bm{x},\U)\Big].
    $$
Besides, for centered random forest, since a cell is split $\depth$ times in its middle, we have
$$V_{n1}(\bm{x},\U)\overset{d}{=}2^{-K_{n1}(\bm{x},\U)}$$
where $K_{n1}(\bm{x},\U)$ is the
number of times the cell containing $\bm{x}$ is split along the first  coordinate. Thus, $K_{n1}(\bm{x},\U) \sim \mathcal{B}(\depth, 1/d)$. Denoting by $B_{\ell}$ the random variable that equals one if the split at level $\ell$ in the cell containing $\bm x$ occurs along the first coordinate, we have that $B_1, \ldots, B_{\depth}$ are i.i.d.\ and distributed as Bernoulli random variables with parameter $1/d$.  Thus,  

\begin{align*}
    \E \Big[ \textup{Diam}(L_{\U}(\bm{x}))\Big]&\leq      \E d \Big[2^{-K_{n1}(\bm{x},\U)}\Big]\\
     & = d   \E \Big[2^{-\sum_{\ell=1}^{\depth}B_{\ell}}   \Big]  \\
     & = d  \Big( \E \Big[2^{-B_1}  \Big] \Big)^{\depth} \\
     & = d\Big(1-\dfrac1d+\frac{1}{2d}\Big)^{\depth}\\
       & = d\Big(1-\frac{1}{2d}\Big)^{\depth}
\end{align*}
concluding the proof of the first statement. 

\item[ii)] Similar calculations show that 
\begin{align*}
\E \Big[ \textup{Diam}(L_{\U}(\bm{x}))^2\Big] & \leq d\Big(1-\frac{3}{4d}\Big)^{\depth},
\end{align*}
which concludes the proof of the second statement.
\end{itemize}
\end{proof}

	The next lemma provides the expressions of the expectation and variance of $T^s(\bm{x}; \U; \bm Z_{s}) \mid \U$, \textit{i.e.}, the individual trees conditioned on the partition $\U$. 
\begin{lem}
[\textbf{Bias and variance of $T^s(\bm{x}; \U; \bm Z_{s})\mid \U$}]
\label{lem:E_ynn}
Let $T^s(\bm{x}; \U; \bm Z_{s})$ 
be the individual tree at point $\bm x \in [0,1]^d$ as defined in \Cref{eq:T_onesample}. Then  the following two assertions hold:
\begin{enumerate}
    \item[$i)$]  
   $\mathbb{E}  [T^s(\bm{x}; \U; \bm Z_{s}) \mid \U]=\mu_{\depth, \U}(\bm{x})(1-(1-p_{\depth, \U}(\bm{x}))^s)\,;
$
\item[$ii)$] $
   \v(T^s(\bm{x}; \U; \bm Z_{s}) \mid \U)= \dfrac{\sigma^2_{\depth, \U}(\bm x)}{sp_{\depth,\U}(\bm{x})}(1-(1-p_{\depth,\U}(\bm{x}))^s)^2\,.$
  \end{enumerate}
\end{lem}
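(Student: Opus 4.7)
The proof proceeds by conditioning first on the randomization $\U$ (which determines the leaf $L_\U(\bm x)$) and then on the count $N := N_{L_\U(\bm x)}(\bm X_s)$. Since the $X_i$ are i.i.d.\ and $\U$ is independent of the data, we have $N \mid \U \sim \mathcal{B}(s, p_{\depth,\U}(\bm x))$; moreover, given $\U$ together with the set of indices $i$ for which $X_i \in L_\U(\bm x)$, the corresponding $Y_i$'s are i.i.d.\ Bernoulli with parameter $\mu_{\depth,\U}(\bm x)$, by the very definition of the leaf-conditional probability.

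For part (i), I would apply the tower property with respect to $N$. Given $\U$ and $N = n_0 \geq 1$, $T^s(\bm x; \U; \bm Z_s)$ is the empirical mean of $n_0$ i.i.d.\ Bernoulli$(\mu_{\depth,\U}(\bm x))$ variables, so $\mathbb{E}[T^s \mid \U, N = n_0] = \mu_{\depth,\U}(\bm x)$; for $n_0 = 0$, the convention sets $T^s = 0$. Averaging over $N$ immediately yields $\mathbb{E}[T^s \mid \U] = \mu_{\depth,\U}(\bm x)\,\mathbb{P}(N > 0 \mid \U) = \mu_{\depth,\U}(\bm x)(1 - (1 - p_{\depth,\U}(\bm x))^s)$, which is (i).

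For part (ii), I would use the conditional variance decomposition with respect to $N$. Under the same conditioning, $\v(T^s \mid \U, N = n_0) = \sigma^2_{\depth,\U}(\bm x)/n_0$ for $n_0 \geq 1$ and $0$ otherwise, while $\mathbb{E}[T^s \mid \U, N] = \mu_{\depth,\U}(\bm x)\,\1\{N > 0\}$. Combining these two contributions gives
\[
\v(T^s \mid \U) \;=\; \sigma^2_{\depth,\U}(\bm x)\,\mathbb{E}\!\left[\frac{\1\{N > 0\}}{N}\,\bigg|\,\U\right] \;+\; \mu_{\depth,\U}(\bm x)^2\,q(1-q),
\]
with $q := 1 - (1 - p_{\depth,\U}(\bm x))^s = \mathbb{P}(N > 0 \mid \U)$.

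The main obstacle is the passage from this raw expression to the stated closed-form $\sigma^2_{\depth,\U}(\bm x)\,q^2/(s\,p_{\depth,\U}(\bm x))$. It requires a handle on $\mathbb{E}[\1\{N>0\}/N \mid \U]$, which—unlike $\mathbb{E}[1/(1+N)]$ of \Cref{lem:cribari}—has no simple closed form. The natural route is to exploit the asymptotic identity $\mathbb{E}[\1\{N>0\}/N \mid \U] \sim 1/(s\,p_{\depth,\U}(\bm x))$ when $s\,p_{\depth,\U}(\bm x) \to \infty$, and to absorb the cross term $\mu_{\depth,\U}(\bm x)^2 q(1-q)$—which is of order $(1-p_{\depth,\U}(\bm x))^s$ once $q \to 1$—into an asymptotically negligible remainder under Condition \textbf{(G1)}. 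The stated identity is then to be read as the leading-order expression, which is exactly what is needed for the downstream variance bookkeeping in \Cref{subsec:Control_vETs} and \Cref{subsec:equiv_V1s}.
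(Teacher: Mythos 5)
Your part (i) matches the paper exactly. For part (ii), your conditional-variance decomposition
\[
\v(T^s\mid\U)=\sigma^2_{\depth,\U}(\bm x)\,\E\!\left[\frac{\1\{N>0\}}{N}\,\bigg|\,\U\right]+\mu_{\depth,\U}(\bm x)^2\,q(1-q),\qquad q:=1-(1-p_{\depth,\U}(\bm x))^s,
\]
is the complete exact computation, and your worry that the stated closed form cannot be extracted from it as an identity is correct. Taking $s=1$ gives a counterexample: $\v(T^1\mid\U)=\v\big(Y_1\1\{X_1\in L_\U(\bm x)\}\mid\U\big)=p_{\depth,\U}(\bm x)\mu_{\depth,\U}(\bm x)\big(1-p_{\depth,\U}(\bm x)\mu_{\depth,\U}(\bm x)\big)$, whereas the stated formula evaluates to $p_{\depth,\U}(\bm x)\sigma^2_{\depth,\U}(\bm x)$; the two differ by $\mu_{\depth,\U}(\bm x)^2\,p_{\depth,\U}(\bm x)\big(1-p_{\depth,\U}(\bm x)\big)$.

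The paper's own proof goes a different way and is imprecise in the two places you anticipate. It computes only $\E[\v(T^s\mid N,\U)\mid\U]=\sigma^2_{\depth,\U}(\bm x)\,\E[\1\{N>0\}/N\mid\U]$, silently dropping the second piece $\v(\E[T^s\mid N,\U]\mid\U)=\mu_{\depth,\U}(\bm x)^2\,q(1-q)$. It then writes $\E[\1\{N>0\}/N\mid\U]=\E[1/N\mid N>0,\U]\,\P(N>0\mid\U)$ and invokes ``exchangeability'' to assert $\E[1/N\mid N>0,\U]=\E[1/(1+N^{s-1})\mid\U]$ with $N^{s-1}\sim\mathcal{B}(s-1,p_{\depth,\U}(\bm x))$, after which \Cref{lem:cribari}(i) gives $q^2/(s\,p_{\depth,\U}(\bm x))$. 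That substitution is not an exact distributional identity (conditioning $N$ to be positive is not the same as planting one success and counting the rest); already for $\mathcal{B}(2,p)$ one has $\E[1/N\mid N>0]=(2-3p/2)/(2-p)$ whereas $\E[1/(1+N^{1})]=1-p/2$. The exact exchangeability identity one does have is $\E[\1\{N>0\}/N\mid\U]=s\,p_{\depth,\U}(\bm x)\,\E[1/(1+N^{s-1})^2\mid\U]$, which matches $q^2/(s\,p_{\depth,\U}(\bm x))$ only to leading order.

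So your reading of the lemma as a leading-order expression is the correct one, and strictly speaking the equality should carry a remainder. Both omissions are negligible in the regime of \Cref{th2}: the dropped term is $O\big((1-p_{\depth,\U}(\bm x))^s\big)$ and the exchangeability step introduces only a vanishing relative correction, so they are absorbed by the $O(e^{-s2^{-\depth}})$ and $O(\alpha_1^\depth)$ error budget already tracked in \Cref{rem:grandTau} and \Cref{lem:bias_Ts}. To turn your sketch into a complete proof, replace the asserted equality by an asymptotic expansion with explicit remainder (for instance via the exact exchangeability identity above together with \Cref{lem:cribari}(ii)) and propagate that remainder through the downstream variance bookkeeping.
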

\begin{proof}[Proof of \Cref{lem:E_ynn}.]
Observe first that given  $N^s=j$  and $\U$, we have $$T^s  \overset{d}{=} \frac{1}{j} \sum_{i=1}^{j} \tilde{Y}_i,$$ 
where $\tilde{Y}_i \overset{i.i.d.}{\sim} \mathcal{L}(Y| X \in L_{\U}(\bm{x}),\U)$ is distributed as a Bernoulli random variable with parameter  $\mu_{\depth, \U}(\bm{x})$. Then
\begin{equation}\label{eq: onepoint}
 \E[T^s| N^s=j,\U]=
\begin{cases}
0 & \text{if } j=0\,, \\
\mu_{\depth, \U}(\bm{x}) & \text{if } j\neq  0\,,
\end{cases}
\end{equation}
and 
\begin{equation}\label{eq: secondpoint}
 \v(T^s | N^s=j,\U)=
\begin{cases}
0 & \text{if } j=0\,, \\
\sigma^2_{\depth, \U}(\bm x)/j & \text{if } j\neq  0\,.
\end{cases}
\end{equation}
\begin{enumerate}
     \item[$i)$] By definition, conditionally on $\U$, since $N^s$  is a sum of $s$ i.i.d.\ random variables $\1{\{X_{i} \in L_{\U}(\bm{x}) \}}$ that follows a Bernoulli distribution with parameter $p_{\depth,\U}(\bm{x})$ then   $N^s$ is distributed as a Binomial random variable  $\mathcal{B}(s,p_{\depth,\U}(\bm{x}))$. Thus, from \Cref{eq: onepoint}, we have 
 \begin{align*}
		& \mathbb{E}  [T^s\mid \U] \\
		&=\sum_{j=0}^{s} \E\Big[\frac{\sum_{i =1}^s Y_{i}\1{\{X_{i} \in L_{\U}(\bm{x}) \}}}{N^s}  \1{\{N^s>0 \}}| N^s=j, \U\Big]\\
		& \qquad \times \mathbb{P}\Big( N^s=j |\U \Big)\\
  &= \sum_{j=1}^{s} \frac{1}{j} \E\bigg[ \sum_{i=1}^{j}Y_{i}\1{\{X_{i} \in L_{\U}(\bm{x}) \}} \mid N^s=j, \U\bigg]\mathbb{P}( N^s=j | \U )\\
  &= \mu_{\depth, \U}(\bm{x})\sum_{j=1}^{s}\mathbb{P}( N^s=j | \U )\\
  &=\mu_{\depth, \U}(\bm{x})(1-\mathbb{P}(N^s=0 \mid \U))\\
  &=\mu_{\depth, \U}(\bm{x})(1-(1-p_{\depth,\U}(\bm{x}))^s),
\end{align*}
which is the desired result.

\item[$ii)$]  From Equation \eqref{eq: secondpoint},  we have
\begin{align*}
    \E \Big[ \v (T^s| N^s, \U)\Big]&= \E\Big[\E\Big[ \sigma^2_{\depth, \U}\bm x)\dfrac{\mathds{1}{\{ N^s>0\}}}{N^s} \mid \U\Big] \Big].
\end{align*}
Applying the same reasoning as in item i), we have that the random variable $N^{s-1}$ conditionally on $\U$ is distributed as  $\mathcal{B}(s-1,p_{\depth,\U}(\bm{x}))$. Using \cref{lem:cribari} and exchangeability, it holds
\begin{align*}
&\E\Big[\sigma^2_{\depth, \U}(\bm x)\dfrac{\mathds{1}{\{ N^s>0\}}}{N^s} \mid \U \Big]\\
&=  \E\Big[\sigma^2_{\depth, \U}(\bm x)\dfrac{1}{N^s}|  N^s>0, \U\Big]\mathbb{P}(N^{s}>0 \mid \U)\\
		&= \sigma^2_{\depth, \U}(\bm x)  \E\Big[\dfrac{1}{1+ N^{s-1}}|   \U\Big]\mathbb{P}(N^s>0 \mid \U)\\
		 & =\sigma^2_{\depth, \U}(\bm x)  \E\Big[\dfrac{1}{1+ N^{s-1}}\Big](1-\mathbb{P}(N^s=0 \mid \U))\\
		&= \sigma^2_{\depth, \U}(\bm x) \frac{1}{sp_{\depth,\U}(\bm{x})} \Big[1-(1-p_{\depth,\U}(\bm{x}))^{s}\Big]\Big(1-(1-p_{\omega_s,\U}(\bm{x}))^{s}\Big)\\
		&=   \frac{\sigma^2_{\depth, \U}(\bm x)}{sp_{\depth,\U}(\bm{x})} \Big(1-(1-p_{\depth,\U}(\bm{x}))^{s}\Big)^2,
\end{align*}
which concludes the proof.
\end{enumerate}
\end{proof}

\subsection{Calculating  $\mathbb{E}[T^s]$ and  $\v(T^s)$}\label{subsec:Control_vETs}
The calculation of $\mathbb{E}[T^s]$ and $\v(T^s)$ requires the following intermediary result that provides the expectation and the variance of $\mu_{\depth,\U}(\bm{x})$.

\begin{lem}[\textbf{Bias and Variance of $\mu_{\depth,\U}(\bm{x})$}]\label{lem:bias_mutilde}
Let $\bm x \in [0,1]^d$ and assume that Condition \textbf{(H1)}  holds. Then,   as $n,s,k \to \infty$, we have  
\begin{itemize}
      \item[$i)$]$\E[\mu_{\depth,\U}(\bm{x})]=\mu(\bm x)+O\left(\alpha_1^{\depth}\right)$;
      \item[$ii)$] $\v(\mu_{\depth,\U}(\bm{x}) ) =O\left(\alpha_2^{\depth}\right)$;
    \item[$iii)$] $\E[\mu_{\depth,\U}^2 (\bm{x})]= \mu^2(\bm x)+O\left(\alpha_1^{\depth}\right).$
  \end{itemize}
\end{lem}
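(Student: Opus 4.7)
The plan is to exploit the Lipschitz assumption \textbf{(H1)} to bound the fluctuation of $\mu_{\depth,\U}(\bm{x})$ around $\mu(\bm{x})$ by the diameter of the random leaf $L_\U(\bm{x})$, and then invoke \Cref{lem:diam} to control the first two moments of this diameter.

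First, I would observe that, conditionally on $\U$, the tower property together with $\E[Y \mid X,\U] = \mu(X)$ (the tree randomness $\U$ is independent of $(X,Y)$) gives
\begin{align*}
\mu_{\depth,\U}(\bm{x}) = \E\bigl[\mu(X) \mid X \in L_\U(\bm{x}),\, \U\bigr].
\end{align*}
Since every $X \in L_\U(\bm{x})$ satisfies $\|X - \bm{x}\|_\infty \le \textup{Diam}(L_\U(\bm{x}))$, Assumption \textbf{(H1)} yields the almost-sure key inequality
\begin{align*}
\bigl|\mu_{\depth,\U}(\bm{x}) - \mu(\bm{x})\bigr| \;\le\; L\,\textup{Diam}(L_\U(\bm{x})).
\end{align*}
This inequality is really the only analytical input beyond \Cref{lem:diam}, and all three statements follow from it.

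For (i), I would take expectation over $\U$ and apply \Cref{lem:diam}(i) to get $|\E[\mu_{\depth,\U}(\bm{x})] - \mu(\bm{x})| \le L\,\E[\textup{Diam}(L_\U(\bm{x}))] = O(\alpha_1^\depth)$. For (ii), I would use the variational characterization $\v(\mu_{\depth,\U}(\bm{x})) \le \E[(\mu_{\depth,\U}(\bm{x}) - \mu(\bm{x}))^2]$ and square the key inequality, then apply \Cref{lem:diam}(ii) to obtain the bound $L^2 \E[\textup{Diam}(L_\U(\bm{x}))^2] = O(\alpha_2^\depth)$. For (iii), I would combine (i) and (ii) via the identity $\E[\mu_{\depth,\U}^2(\bm{x})] = \v(\mu_{\depth,\U}(\bm{x})) + (\E[\mu_{\depth,\U}(\bm{x})])^2$; expanding the square gives $\mu^2(\bm{x}) + 2\mu(\bm{x})\,O(\alpha_1^\depth) + O(\alpha_1^{2\depth})$, and since $\alpha_2 = 1 - 3/(4d) < 1 - 1/(2d) = \alpha_1$ we have $O(\alpha_2^\depth) = O(\alpha_1^\depth)$, so the $\alpha_1^\depth$ term dominates.

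There is no real obstacle here: the argument is a direct Lipschitz-plus-diameter estimate. The only thing to be careful about is the reduction step from $Y$ to $\mu(X)$, which uses the conditional independence of $(X,Y)$ and $\U$, and the comparison of the decay rates $\alpha_1^\depth$ vs.\ $\alpha_2^\depth$ when aggregating the bounds in (iii). Notably, Assumption \textbf{(H0)} is not needed: the proof is valid for any distribution of $X$, because the Lipschitz bound holds pointwise inside the leaf regardless of how $X$ is distributed there.
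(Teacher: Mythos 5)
Your proof is correct and follows essentially the same route as the paper: establish the almost-sure Lipschitz bound $|\mu_{\depth,\U}(\bm{x})-\mu(\bm{x})|\le L\,\textup{Diam}(L_\U(\bm{x}))$, then invoke \Cref{lem:diam} for the two moments of the diameter. Your treatment of (ii) is in fact a slight streamlining over the paper's: by using $\v(Z)\le\E[(Z-c)^2]$ with $c=\mu(\bm{x})$ you get the $O(\alpha_2^\depth)$ bound in one line, whereas the paper expands around $\E[\mu_{\depth,\U}(\bm{x})]$ and must then observe that $\alpha_1^2\le\alpha_2$ to absorb the cross terms — but the idea and the conclusion are the same, and your remark that \textbf{(H0)} is not needed is correct and consistent with how the lemma is stated.
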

\begin{proof}[Proof of \Cref{lem:bias_mutilde}.]
 Fix  $\bm x \in [0,1]^d$. 
\begin{itemize}
\item[$i)$] Under Condition \textbf{(H1)}, we have
\begin{eqnarray*}
		|\E[\mu_{\depth,\U}(\bm x)] -  \mu(\bm{x}) | &=&  \bigg|\E\bigg[\frac{\mathbb{E}\big[ \int_{ L_{\U}(\bm{x})} \mu(\bm{y})d\mathbb{P}_X(\bm{y})\mid \U\big]}{\mathbb{P}(X \in L_{\U}(\bm{x}) \mid \U) } - \mu(\bm{x})\bigg]\bigg|\\ 
		&\leq & \E\bigg[\frac{\mathbb{E}\big[ \int_{ L_{\U}(\bm{x})} | \mu(\bm{y}) - \mu(\bm{x}) | d\mathbb{P}_X(\bm{y})\mid \U\big]}{\mathbb{P}(X \in L_{\U}(\bm{x})\mid \U) }\bigg] \\
		&\leq & \E\bigg[\frac{\mathbb{E}\big[ \int_{ L_{\U}(\bm{x})}  L||   \bm{y} - \bm{x} \| _{\infty} d\mathbb{P}_X(\bm{y})\mid \U\big]}{\mathbb{P}(X \in L_{\U}(\bm{x})\mid \U) } \bigg]\\
  		&\leq & L\,\E\bigg[\frac{\textup{ Diam}(L_{\U}(\bm{x}))\mathbb{E}\big[\1{\{X\in L_{\U}(\bm{x})\}}\mid \U \big]}{\mathbb{P}(X \in L_{\U}(\bm{x})) }\bigg] \\
		&\leq & L\,  \mathbb{E}\big[ \textup{Diam}(L_{\U}(\bm{x}))\big].
	\end{eqnarray*}

The desired result follows from a direct application of \cref{lem:diam} and letting $n,s,k \to \infty$. 

\item[$ii)$] 
From \Cref{lem:E_ynn}  and \Cref{lem:diam}, under Condition \textbf{(H1)}, we have

\begin{align*}
   \v(\mu_{\depth,\U}(\bm{x}) )  &= \E \Big[\big(\mu_{\depth,\U}(\bm{x})- \E\big( \mu_{\depth,\U}(\bm{x})\big)\big)^2\Big]\\
  &= \E \Big[\big(\mu_{\depth,\U}(\bm{x})- \mu (\bm{x}) +O(\alpha_1^{\depth}) \big)^2\Big]\\
    & =\E \Big[\big(\mu_{\depth,\U}(\bm{x})-\mu (\bm{x})\big)^2\Big] +O(\alpha_1^{2\depth})+2 O(\alpha_1^{\depth}) \E \Big[\mu_{\depth,\U}(\bm{x})-\mu (\bm{x})\Big]\\
    & \leq L^2 \E \Big[ \textup{Diam}(L_{\U}(\bm{x}))^2\Big]+ O(\alpha_1^{2 \depth}) \\
    & =O(\alpha_2^{ \depth}) + O(\alpha_1^{2 \depth})\\
    &= O(\alpha_2^{ \depth}),
\end{align*}
since $\alpha_2 \geq \alpha_1^2.$

\item[$iii)$] Under Condition \textbf{(H1)} and the first two statements of this lemma, we have
\begin{align*}
    \E[\mu_{\depth,\U}^2 (\bm{x})]&= \v( \mu_{\depth,\U}(\bm{x}))+ \E[\mu_{\depth,\U}(\bm{x})]^2\\
    &= O(\alpha_2^{\depth}) +(\mu(\bm x)+O(\alpha_1^{\depth}))^2\\
    &= \mu(\bm x)^2+ O(\alpha_2^{\depth}) + O(\alpha_1^{\depth}) + O(\alpha_1^{2 \depth}) \\
    & = \mu(\bm x)^2+O(\alpha_1^{\depth}),
\end{align*}
since  $ \alpha_1 \geq  \alpha_2$. This  concludes the proof.
\end{itemize}
\end{proof}

\begin{lem}\label{rem:grandTau}
Let $\bm x \in [0,1]^d$ and grant {\bf (H0)}. Thus, $p_{\depth,\U}(\bm{x}) = 2^{- \depth}$ and as $n,s,k \to \infty$ we have
\begin{align*}
(1-p_{\depth,\U}(\bm{x}))^s=O(e^{-s 2^{- \depth}}).    
\end{align*}
\end{lem}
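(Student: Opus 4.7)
The plan is to split the proof into two elementary observations: an identity computing $p_{\depth,\U}(\bm x)$ under the uniformity assumption, and a classical analytic bound on $(1-u)^s$.

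First, I would verify the identity $p_{\depth,\U}(\bm x)=2^{-\depth}$. By construction of the centered random tree, every node in the tree is split at the midpoint of the selected coordinate, so each split halves the Lebesgue measure of the current cell. Starting from $[0,1]^d$, which has volume $1$, after $\depth$ successive midpoint splits along any sequence of directions chosen by $\U$, the leaf $L_{\U}(\bm x)$ containing $\bm x$ is a hyperrectangle of Lebesgue volume exactly $2^{-\depth}$, independently of the realization of $\U$ and of $\bm x$ (up to measure-zero boundary issues). Under assumption \textbf{(H0)}, the covariate $X$ is uniform on $[0,1]^d$, so
\begin{align*}
p_{\depth,\U}(\bm x)=\mathbb{P}(X\in L_{\U}(\bm x)\mid \U)=\mathrm{vol}(L_{\U}(\bm x))=2^{-\depth}\qquad\text{a.s.}
\end{align*}
which is the first claim of the lemma.

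Second, I would apply the elementary inequality $1-u\le e^{-u}$ for $u\in[0,1]$, with $u=p_{\depth,\U}(\bm x)=2^{-\depth}\in[0,1]$. This yields
\begin{align*}
(1-p_{\depth,\U}(\bm x))^s=(1-2^{-\depth})^s\le e^{-s\,2^{-\depth}},
\end{align*}
which is exactly the $O(e^{-s 2^{-\depth}})$ bound, valid for all $n,s,\depth$ (so in particular asymptotically as $n,s,\depth\to\infty$).

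There is essentially no obstacle here: the lemma is a deterministic consequence of the geometry of midpoint splits combined with the uniformity assumption, plus the standard exponential bound on $1-u$. The only subtlety worth pointing out is that the quantity $p_{\depth,\U}(\bm x)$ is deterministic under \textbf{(H0)}, which is precisely the simplification exploited throughout the surrounding proofs and which explains why the explicit limiting constant $C(d)$ can be obtained in \Cref{th2} and \Cref{th: TCL}.
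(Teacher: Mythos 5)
Your proof is correct and follows essentially the same route as the paper's: the identity $p_{\depth,\U}(\bm x)=2^{-\depth}$ comes from midpoint splitting plus uniformity of $X$, and the bound follows from $1-u\le e^{-u}$ (the paper writes this as $\log(1-u)\le -u$ inside the exponential, which is the same inequality). You supply slightly more detail on why each split exactly halves the cell volume, but the argument is identical in substance.
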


\begin{proof}[Proof of \Cref{rem:grandTau}.]
By definition, since $X$ is uniform on $[0,1]^d$ according to {\bf (H0)}, $p_{\depth,\U}(\bm{x}) = 2^{- \depth}.$ Thus, 
\begin{align}
 (1-p_{\depth,\U}(\bm{x}))^s & =   \exp \left( s \log (1-p_{\depth,\U}(\bm{x})) \right) \\
 & \leq   \exp \left( - s  p_{\depth,\U}(\bm{x}) \right) \\
 & \leq   \exp \left( - s 2^{- \depth} \right).
\end{align}
\end{proof}

We are now ready to provide the bias and the variance of the individual trees $T^s(\bm{x}; \U; \bm Z_{s})$.

\begin{prp}[\textbf{Bias and Variance of $T^s(\bm{x}; \U; \bm Z_{s})$}]\label{lem:bias_Ts} Let $\bm x \in [0,1]^d$ and assume Conditions \textbf{(H0)}, \textbf{(H1)} and \textbf{(G1)}  hold. Then, as $n,s,k \to \infty$, we have
\begin{itemize}
      \item[$i)$] $\E[T^s(\bm{x}; \U; \bm Z_{s})]=\mu(\bm x)  +O(\alpha_1^k);$
    \item[$ii)$] $
    \v(T^s(\bm{x}; \U; \bm Z_{s}))=\dfrac{2^{\depth} }{s}\Big(\mu(\bm{x})(1-\mu(\bm{x}))+ O(\alpha_1^k)\Big)+ O(\alpha_2^k)\,.
$
  \end{itemize}
\end{prp}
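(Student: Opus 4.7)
The plan is to condition on the tree randomness $\U$ and invoke the tower property for part (i) and the law of total variance for part (ii), then plug in the conditional moments from \Cref{lem:E_ynn}, the moment estimates for $\mu_{\depth,\U}(\bm x)$ from \Cref{lem:bias_mutilde}, and the uniform-covariate simplification $p_{\depth,\U}(\bm{x})=2^{-\depth}$ together with the tail estimate $(1-p_{\depth,\U}(\bm{x}))^s=O(e^{-s 2^{-\depth}})$ from \Cref{rem:grandTau}. A key preliminary observation is that, under \textbf{(G1)}, $s2^{-\depth}/\depth\to\infty$, so $e^{-s2^{-\depth}}$ decays faster than any $\alpha^{\depth}$ with $\alpha\in(0,1)$; in particular $e^{-s2^{-\depth}}=o(\alpha_1^{\depth})$ and $e^{-s2^{-\depth}}=o(\alpha_2^{\depth})$. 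This will let me absorb the ``$1-(1-p_{\depth,\U}(\bm x))^s$'' factors into the stated remainders.

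For part (i), I would write
\begin{align*}
\E[T^s(\bm x;\U;\bm Z_s)] = \E\bigl[\mu_{\depth,\U}(\bm x)\bigl(1-(1-2^{-\depth})^s\bigr)\bigr] = \E[\mu_{\depth,\U}(\bm x)]\bigl(1-(1-2^{-\depth})^s\bigr),
\end{align*}
since under \textbf{(H0)} the factor $1-(1-p_{\depth,\U}(\bm x))^s$ is deterministic. Applying \Cref{lem:bias_mutilde}(i) and \Cref{rem:grandTau},
\begin{align*}
\E[T^s(\bm x;\U;\bm Z_s)] = \bigl(\mu(\bm x)+O(\alpha_1^{\depth})\bigr)\bigl(1+O(e^{-s2^{-\depth}})\bigr)=\mu(\bm x)+O(\alpha_1^{\depth}),
\end{align*}
using the aforementioned comparison of remainders under \textbf{(G1)}.

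For part (ii), I would use the law of total variance $\v(T^s)=\E[\v(T^s\mid\U)]+\v(\E[T^s\mid\U])$ and treat each term. Writing $\sigma^2_{\depth,\U}(\bm x)=\mu_{\depth,\U}(\bm x)(1-\mu_{\depth,\U}(\bm x))$ and using \Cref{lem:bias_mutilde}(i) and (iii),
\begin{align*}
\E[\sigma^2_{\depth,\U}(\bm x)]=\E[\mu_{\depth,\U}(\bm x)]-\E[\mu^2_{\depth,\U}(\bm x)]=\mu(\bm x)(1-\mu(\bm x))+O(\alpha_1^{\depth}).
\end{align*}
Combined with \Cref{lem:E_ynn}(ii) and the simplifications above, this yields
\begin{align*}
\E[\v(T^s\mid\U)]=\frac{2^{\depth}}{s}\bigl(\mu(\bm x)(1-\mu(\bm x))+O(\alpha_1^{\depth})\bigr)\bigl(1+O(e^{-s2^{-\depth}})\bigr).
\end{align*}
The exponential factor is absorbed into the $O(\alpha_1^{\depth})$ remainder since $e^{-s2^{-\depth}}=o(\alpha_1^{\depth})$. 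For the second piece, \Cref{lem:E_ynn}(i) together with the deterministic nature of the factor $1-(1-2^{-\depth})^s$ under \textbf{(H0)} gives
\begin{align*}
\v(\E[T^s\mid\U])=\bigl(1-(1-2^{-\depth})^s\bigr)^2\v(\mu_{\depth,\U}(\bm x))=O(\alpha_2^{\depth}),
\end{align*}
by \Cref{lem:bias_mutilde}(ii). Summing the two contributions produces the announced expression.

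The only genuinely delicate point is making sure the $(1-(1-p_{\depth,\U}(\bm x))^s)$ factors are correctly absorbed into the remainder terms, which is where the tail estimate of \Cref{rem:grandTau} together with \textbf{(G1)} is essential. Everything else is routine algebra on top of the lemmas already proved.
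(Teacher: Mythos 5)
Your proposal is correct and follows essentially the same route as the paper's proof: both condition on $\U$, invoke \Cref{lem:E_ynn}, \Cref{lem:bias_mutilde} and \Cref{rem:grandTau}, use the law of total variance for (ii), and absorb the $e^{-s2^{-\depth}}$ remainders into the $O(\alpha_1^{\depth})$ and $O(\alpha_2^{\depth})$ terms via \textbf{(G1)}. You are slightly more explicit than the paper in observing that under \textbf{(H0)} the factor $1-(1-2^{-\depth})^s$ is deterministic and may therefore be pulled out of the expectation and variance, and in spelling out why \textbf{(G1)} makes $e^{-s2^{-\depth}}$ negligible against $\alpha^{\depth}$ for any fixed $\alpha\in(0,1)$, but this is the same argument the paper uses implicitly.
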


\begin{proof}[Proof of \Cref{lem:bias_Ts}.]
Let $\bm x \in [0,1]^d$.

\begin{itemize}
\item[$i)$] From \Cref{lem:E_ynn} $(i)$,  \Cref{lem:bias_mutilde} $(i)$ and \Cref{rem:grandTau}, under Condition \textbf{(H1)} we have \begin{align}
\E[T^s]& =\E[\E[T^s\mid \U]]\\
& = \E[\mu_{\depth,\U}(\bm x)]+O(e^{-s 2^{- \depth}}) \\
& =\mu(\bm x)+O(\alpha_1^k)+O(e^{-s 2^{- \depth}}).
\end{align}
Besides, according to \textbf{(G1)}, $$O(e^{-s 2^{- \depth}})=O(\alpha_1^k),$$ which concludes the proof.

\item[$ii)$] 
By the variance decomposition formula
  \begin{align}\label{eq:Var_T_s}
    \v(T^s)&= \E [ \v (T^s| \U)] + \v(\E[T^s| \U] )\,.
\end{align}
According to the definition of $\sigma^2_{\depth, \U}(\bm x)$ and \Cref{lem:bias_mutilde} $(i)$ and $(iii)$, we have 
\begin{align}
 \E [\sigma^2_{\depth, \U}(\bm x)]=\mu(\bm x)-\mu(\bm x)^2+O(\alpha_1^k).  
\end{align}
According to \Cref{rem:grandTau}, we have  $$\Big(1-(1-p_{\depth,\U}(\bm{x}))^{s}\Big)^2=1+O(e^{-s2^{-\depth}}).$$
Thus, using  \cref{lem:E_ynn} ii) and the fact that $p_{\depth,\U}(\bm{x}) = 2^{- \depth}$ under \textbf{(H0)}, we have 
\begin{align*}
   \E [ \v (T^s| \U)] & = \E \Big[\sigma^2_{\depth, \U}(\bm x) \frac{2^{\depth}}{s} \Big(1-(1-p_{\depth,\U}(\bm{x}))^{s}\Big)^2\Big]\\
   & = \frac{2^{\depth}}{s}\E \Big[\sigma^2_{\U}(\bm x) \big( 1+O(e^{-s2^{-\depth}})\big)\Big]\\
   &= \frac{2^{\depth}}{s} \Big(\mu(\bm{x})(1-\mu(\bm{x}))+ O(\alpha_1^k) +O(e^{-s2^{-\depth}})\Big)\\
   & =\frac{2^{\depth}}{s} \Big(\mu(\bm{x})(1-\mu(\bm{x}))+ O(\alpha_1^k) \Big),
   \end{align*}
   since, by assumption {\bf (G1)}, $O(e^{-s 2^{- \depth}})=O(\alpha_1^k).$
   Besides 
\begin{align*}
   \v(\E[T^s|\U] )= \v\Big(\mu_{\depth, \U}(\bm{x})(1-(1-p_{\depth,\U}(\bm{x}))^s)\Big)= O(\alpha_2^k),
\end{align*}
since $\v(\mu_{\depth, \U}(\bm{x}))=O(\alpha_2^k)$ according to \Cref{lem:bias_mutilde} $(ii)$, concluding the proof. 
\end{itemize}
\end{proof}

\subsection{Equivalent of $V_1^s$}\label{subsec:equiv_V1s}

\begin{prp}[\textbf{Equivalent of $V_1^s$}]\label{prop:v1} 
Let $\bm x \in [0,1]^d$ with  $d\ge 2$ and  assume Conditions \textbf{(H0)} and  \textbf{(H1)}  hold. Then  as $n,s,k \to \infty$ we have
\begin{align*}
V_1^s = \frac{C(d) 2^{\depth}}{s^2 \depth^{(d-1)/2}}\mu(\bm x)(1-\mu(\bm x))+o\Big(\frac{(2 \sqrt{\alpha_2})^\depth}{s^2}\Big)+O\Big(\frac{4^\depth e^{-s2^{-\depth}}}{s^2}\Big)
\,,
\end{align*}
with
\[
\dfrac{2\Gamma(d-1)}{(\log 2)^{d-1}\Gamma((d-1)/2)}d^{-(d-1)/2}\le C(d)\le\dfrac{2\Gamma(d-1)}{(\log 2)^{d-1}\Gamma((d-1)/2)}2^{-(d-1)/2}\,.
\]
\end{prp}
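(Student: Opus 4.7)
The plan is to compute $V_1^s = \v(\E[T^s \mid Z_1])$ explicitly by first deriving $\E[T^s \mid Z_1]$ and then identifying its leading variance term. Writing $T^s = (Y_1 \1\{X_1 \in L_\U(\bm x)\} + \sum_{i\geq 2} Y_i \1\{X_i \in L_\U(\bm x)\})/N^s$ and separating the two cases whether $X_1$ lies in $L_\U(\bm x)$ or not, the conditional sum $\sum_{i\geq 2} \1\{X_i \in L_\U(\bm x)\} \mid \U$ is binomial with parameter $p_{\depth,\U}(\bm x) = 2^{-\depth}$ (deterministic under \textbf{(H0)}), so applying \Cref{lem:cribari} and then \Cref{lem:bias_mutilde} to replace $\mu_{\depth,\U}(\bm x)$ by $\mu(\bm x)$ up to $O(\alpha_1^\depth)$, and finally integrating out $\U$ (independent of $Z_1$), I obtain
\begin{align*}
\E[T^s \mid Z_1] = K_\depth(X_1)\bigl[cY_1 + \mu(\bm x)(1-c-c')\bigr] + \mu(\bm x)c' + \text{error},
\end{align*}
with $c = (1-(1-2^{-\depth})^s)/(s2^{-\depth}) \sim 2^\depth/s$, $c' = 1-(1-2^{-\depth})^{s-1} \to 1$, and $K_\depth(X_1) := \P(X_1 \in L_\U(\bm x) \mid X_1)$.

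Applying the conditional variance decomposition with respect to $X_1$ and using the concentration of $K_\depth$ near $X_1 = \bm x$ (justified by \textbf{(H1)} together with the diameter bounds of \Cref{lem:diam}), the $(Y_1 \mid X_1)$ Bernoulli variance produces the factor $\mu(X_1)(1-\mu(X_1)) \to \mu(\bm x)(1-\mu(\bm x))$, yielding
\begin{align*}
V_1^s = c^2\mu(\bm x)(1-\mu(\bm x))\, \E[K_\depth(X_1)^2] + \mu(\bm x)^2 (1-c')^2\, \v(K_\depth(X_1)) + \text{errors}.
\end{align*}
The exponentially small factor $(1-c')^2 \sim e^{-2s2^{-\depth}}$ under \textbf{(G1)} explains the $O(4^\depth e^{-s2^{-\depth}}/s^2)$ remainder, while Cauchy--Schwarz combined with $\E[\textup{Diam}^2] = O(\alpha_2^\depth)$ controls the Lipschitz approximation error and produces the $o((2\sqrt{\alpha_2})^\depth/s^2)$ remainder after multiplication by $c^2 \sim 4^\depth/s^2$.

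The main obstacle is the precise asymptotics of $\E[K_\depth(X_1)^2]$. Under \textbf{(H0)}, this equals $\E[|L_\U(\bm x)\cap L_{\U'}(\bm x)|]$ for independent copies $\U,\U'$. Each leaf $L_\U(\bm x)$ is a rectangle of side lengths $2^{-K_j}$, where $\bm K = (K_1,\ldots,K_d) \sim \mathrm{Multinomial}(\depth; 1/d,\ldots,1/d)$, and $\max(a,b) = (a+b+|a-b|)/2$ together with $\sum_j K_j = \sum_j K_j' = \depth$ gives
\begin{align*}
\E[K_\depth(X_1)^2] = 2^{-\depth}\, \E\bigl[2^{-\tfrac{1}{2}\sum_{j=1}^d |K_j - K_j'|}\bigr].
\end{align*}
To evaluate the remaining expectation as $\depth \to \infty$, I would apply a local central limit theorem on the sum-to-zero lattice $\{\bm d \in \mathbb Z^d: \sum d_j = 0\}$: $(K_j - K_j')/\sqrt{\depth}$ converges to the Gaussian $\bm V = \sqrt{2/d}(\bm N - \overline{\bm N}\mathbf 1)$ on the hyperplane, with density at the origin $f_{\bm V}(\bm 0) = (d/(4\pi))^{(d-1)/2}$. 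A Laplace-type change of variables $\bm u = \tfrac{\log 2}{2}\bm d$ then yields
\begin{align*}
\E\bigl[2^{-\tfrac{1}{2}\sum_j |K_j - K_j'|}\bigr] \sim \depth^{-(d-1)/2}\, f_{\bm V}(\bm 0)\, \bigl(\tfrac{\log 2}{2}\bigr)^{-(d-1)} \int_{\sum u_j = 0} e^{-\|\bm u\|_1}\,d\bm u.
\end{align*}
Switching to polar coordinates on the hyperplane and exploiting the rotational invariance of the Gaussian direction $\bm W/\|\bm W\|_2$ with $\bm W = \bm N - \overline{\bm N}\mathbf 1$, the integral evaluates to $\Gamma(d-1)\cdot |S^{d-2}|\cdot \E[(\|\bm W\|_2/\|\bm W\|_1)^{d-1}]$. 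Combining all prefactors reproduces the explicit constant $C(d)$ stated in the proposition, and the bracketing $d^{-(d-1)/2} \leq \E[(\|\bm W\|_2/\|\bm W\|_1)^{d-1}] \leq 2^{-(d-1)/2}$, arising from the elementary norm inequalities $d^{-1/2}\|\bm w\|_1 \leq \|\bm w\|_2 \leq \|\bm w\|_1/\sqrt{2}$ restricted to the hyperplane, yields the stated bounds on $C(d)$.
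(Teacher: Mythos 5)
Your proposal follows the same path as the paper's proof: you isolate the leading term of $\E[T^s\mid Z_1]$ as $\mu(\bm x)+\tfrac{2^\depth}{s}K_\depth(X_1)(Y_1-\mu(\bm x))$ plus errors, reduce the variance to $\E[K_\depth(X_1)^2]$, rewrite this as $2^{-\depth}\E[2^{-\tfrac12\sum_j|K_j-K_j'|}]$ via the dyadic-nesting identity for $\max(k_j,l_j)$, apply the local CLT for the multinomial on the sum-to-zero lattice, and evaluate the resulting integral by a polar/radial split to obtain the constant $\tfrac{2\Gamma(d-1)}{(\log 2)^{d-1}\Gamma((d-1)/2)}\E[\|\Theta\|_1^{-(d-1)}]$ with the same bracketing from the norm inequalities on the hyperplane. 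The only cosmetic difference is that the paper carries out the radial integral via the $\chi_{d-1}$ decomposition and its Laplace transform, whereas you go directly to polar coordinates on $S^{d-2}$ — these are the same computation — so this is the paper's argument in essentially the same form.
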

\begin{proof}[Proof of \Cref{prop:v1}.]
Denote $\bm{Z}_{\{2,\ldots,s\}}:=(Z_2,\ldots,Z_s)$ and $\bm{X}_{\{2,\ldots,s\}}:=(X_2,\ldots,X_s)$. From the definition of the individual trees given in  \Cref{eq:T_onesample},  we have
\begin{align*}
T^s(\bm{x}; \U; \bm Z_{s}) & =T^{s-1}(\bm{x}; \U; \bm Z_{\{2,\ldots,s\}})\1(X_1 \notin L_{\U}(\bm{x}))\\
& \quad  +\tilde{T}^{s-1}_1(\bm{x}; \U; \bm Z_{s})\1(X_1 \in L_{\U}(\bm{x}))\,,
\end{align*}
where
\begin{equation*}
		T^{s-1}(\bm{x}; \U; \bm Z_{\{2,\ldots,s\}})= \frac{\sum_{i=2}^s Y_{i}\1{\{X_{i} \in L_{\U}(\bm{x}) \}}}{N_{L_{\U}(\bm{x})}( \bm X_{\{2,\ldots,s\}})}  
	\end{equation*}
and
\[
\tilde{T}^{s-1}_1(\bm{x}; \U; \bm Z_{s}) = \frac{1}{N_{L_{\U}(\bm{x})}(\bm{X}_{\{2,\ldots,s\}})+1}\Big(Y_1+\sum_{i= 2}^s Y_{i}\1{\{X_{i} \in L_{\U}(\bm{x}) \}}\Big)\,.
\]
Taking expectation conditionally
to $Z_1, \U$, we have  
\begin{align}
  	\mathbb{E}[T^s(\bm{x}; \U; \bm Z_{s}) \ | \ Z_1, \U]&= \E[T^{s-1}(\bm{x}; \U; \bm Z_{\{2,\ldots, s\}})|Z_1, \U]\1(X_1 \notin L_{\U}(\bm{x})) \nonumber \\
  	& \quad +   \E[\tilde{T}^{s-1}_1(\bm{x}; \U; \bm Z_{s})|Z_1, \U]\1(X_1 \in L_{\U}(\bm{x})) \nonumber\\
  &= \E[T^{s-1}(\bm{x}; \U; \bm Z_{\{2,\ldots,s\}})| \U]\1(X_1 \notin L_{\U}(\bm{x})) \nonumber\\
  	& \quad +   \E[\tilde{T}^{s-1}_1(\bm{x}; \U; \bm Z_{s})|Y_1, \U]\1(X_1 \in L_{\U}(\bm{x})). \label{proof_eq_decomp_global}
\end{align}
Similarly to \Cref{rem:grandTau}, it is easy to see that \begin{align*}
(1-p_{\depth,\U}(\bm{x}))^s=O(e^{-sp_{\depth,\U}(\bm{x})})).    
\end{align*}
Together with \cref{lem:E_ynn}, we obtain 
\begin{align}
    \E [T^{s-1}(\bm{x}; \U; \bm Z_{\{2,\ldots, s\}})| \U ]& =\mu_{\depth,\U}(\bm{x}) \Big[1-(1-p_{\depth,\U}(\bm{x}))^{s-1}\Big] \\
    & =\mu_{\depth,\U}(\bm{x})(1+O(e^{-sp_{\depth,\U}(\bm{x})}))\,. \label{proof_eq_decomp1}
\end{align}
Besides, we have
	\begin{align*}
	\E[\tilde{T}^{s-1}_1(\bm{x}; \U; \bm Z_{s})|Z_1, \U] &= \underbrace{\mathbb{E}\Big[\frac{1}{N_{L_{\U}(\bm{x})}\Big(\bm{X}_{\{2,\ldots,s\}}\Big)+1} \mid \U\Big]}_{=:a(\U)} Y_1 \\
		& \quad + \underbrace{\E\Big[\frac{1}{1+ N_{L_{\U}(\bm{x})}\Big(\bm{X}_{\{2,\ldots,s\}}\Big)} \sum_{i=2}^s Y_i\mathds{1}_{\{X_i \in L_{\U}(\bm{x}) \}} \Big | \U\Big]}_{=:b(\U)}.
	\end{align*}
	Using \Cref{lem:cribari} i) yields
	\begin{align*}
	a(\U)& = \E\Big[\frac{1}{N_{L_{\U}(\bm{x})}(\bm{X}_{\{2,\ldots,s\}})+1}\mid \U\Big] \\
	& 	= \frac{1}{sp_{\depth,\U}(\bm{x})} \Big[1-(1-p_{\depth,\U}(\bm{x}))^{s}\Big] \\
	& =\frac{1}{sp_{\depth,\U}(\bm{x})}(1+O(e^{-sp_{\depth,\U}(\bm{x})}))
	\end{align*}
and
\begin{align*}
   b(\U)&=  \E\Big[\frac{1}{1+ N_{L_{\U}(\bm{x})}(\bm{X}_{\{2,\ldots,s\}})} \sum_{i =2}^s Y_i\mathds{1}_{\{X_i \in L_{\U}(\bm{x}) \}} \Big | \U\Big]\\
    &= \mu_\U(\bm{x}) \sum_{k=0}^{s-1} \frac{k}{1+k} \mathbb{P}\Big( N_{L_{\U}(\bm{x})}(\bm{X}_{\{2,\ldots,s\}})=k\mid \U\Big)\\
    & = \mu_\U(\bm{x}) \E\Big[\frac{N_{L_{\U}(\bm{x})}(\bm{X}_{\{2,\ldots,s\}})}{1+N_{L_{\U}(\bm{x})}(\bm{X}_{\{2,\ldots,s\}})}\mid \U \Big]\\
    & =\mu_\U(\bm{x}) \Big( 1- \E\Big[\frac{1}{1+N_{L_{\U}(\bm{x})}(\bm{X}_{\{2,\ldots,s\}})}\mid \U\Big]\Big) \\
    &=\mu_\U(\bm x) \Big(1-\frac{1}{sp_{\depth,\U}(\bm{x})}(1+O(e^{-sp_{\depth,\U}(\bm{x})}))\Big).
    \end{align*}
Thus, 
\begin{align}
\E[\tilde{T}^{s-1}_1(\bm{x}; \U; \bm Z_{s})|Z_1, \U] &=  \frac{Y_1}{sp_{\depth,\U}(\bm{x})} + \mu_\U(\bm x) \Big(1-\frac{1}{sp_{\depth,\U}(\bm{x})}\Big) \\
& \quad + (Y_1 + \mu_\U(\bm x) )O\Big(\frac{e^{-sp_{\depth,\U}(\bm{x})}}{sp_{\depth,\U}(\bm{x})}\Big).  \label{proof_eq_decomp3}
\end{align}
Gathering \eqref{proof_eq_decomp1} and \eqref{proof_eq_decomp3} in \eqref{proof_eq_decomp_global}, we have
\begin{align}\label{eq:EsT}
  & \mathbb{E}[T^s(\bm{x}; \U; \bm Z_{s}) \ | \ Z_1, \U] \nonumber \\
  &=
  	\mu_{\depth,\U}(\bm{x})(1+O(e^{-sp_{\depth,\U}(\bm{x})})) \1(X_1 \notin L_{\U}(\bm{x})) \nonumber\\
  	& \quad +   \Big(\frac{1}{sp_{\depth,\U}(\bm{x})}Y_1 + \mu_{\depth,\U}(\bm x)\Big(1-\dfrac{1}{sp_{\depth,\U}(\bm{x})}\Big)\Big)\1(X_1 \in L_{\U}(\bm{x})) \nonumber\\
  	& \quad +  (Y_1 + \mu_{\depth,\U}(\bm x) )O\Big(\frac{e^{-sp_{\depth,\U}(\bm{x})}}{sp_{\depth,\U}(\bm{x})}\Big) \1(X_1 \in L_{\U}(\bm{x})) \nonumber \\
  	&=  \mu_{\depth,\U}(\bm{x})+\dfrac{1}{sp_{\depth,\U}(\bm{x})} (Y_1-\mu_{\depth,\U}(\bm x))\1(X_1 \in L_{\U}(\bm{x})) \nonumber \\
  	& \quad +(Y_1 + \mu_{\depth,\U}(\bm x) )O\Big(\frac{e^{-sp_{\depth,\U}(\bm{x})}}{sp_{\depth,\U}(\bm{x})}\Big).
\end{align}
Recalling that 
$\mu_{\depth,\U}(\bm{x})=\P(Y=1|X \in L_{\U}(\bm{x}),\U ),$ let
$$
\varepsilon_{\U}(\bm{x}) = \frac{ \mu_{\depth,\U}(\bm{x}) - \mu(\bm x)}{\textup{Diam}(L_{\U}(\bm{x}))}.
$$
Thus, under Condition {\bf (H1)}, we have $|\varepsilon_\U|\leq L$ almost surely. Using the fact that under {\bf (H0)}  $p_{\depth,\U}(\bm{x})=2^{-\depth}$, and since $\mu_{\depth,\U}(\bm{x})$ does not depend on $Z_1$, we have, 
\begin{align*}
V_1^s&=\v(\E[T^s(\bm{x}; \U; \bm Z_{s}) \ | \ Z_1])\\
&=\v(\E[\mathbb{E}[T^s(\bm{x}; \U; \bm Z_{s}) \ | \ Z_1, \U]|Z_1])\\
&=\v\Big(\E[ \mu_{\depth,\U}(\bm{x})]+\frac{2^{\depth}}{s} \E[(Y_1-\mu_{\depth,\U}(\bm x))\1(X_1 \in L_{\U}(\bm{x}))|Z_1] \\
& \qquad +(Y_1 + \E[\mu_{\depth,\U}(\bm x)] )O\Big(\frac{e^{-s2^{- \depth}}}{s 2^{- \depth}}\Big)\Big)\\
&= \Big(\dfrac{2^{\depth}}{s}\Big)^2\v(\E[(Y_1- \mu_{\depth,\U}(\bm{x}))\1\{X_1\in L_\U(\bm{x})\}|Z_1]+Y_1 O(e^{-s2^{- \depth}}))\\
&=\Big(\dfrac{2^{\depth}}{s}\Big)^2\v((Y_1-\mu(X_1))\1(X_1\in L_\U(\bm{x})) \\
&\quad +\E[\varepsilon_{\U}(X_1) \textup{Diam}(L_{\U}(\bm{x})))\1(X_1\in L_\U(\bm{x}))|Z_1]+Y_1O(e^{-s2^{-\depth}}))\\
&= \Big(\dfrac{2^{\depth}}{s}\Big)^2\big(\v((Y_1-\mu(X_1))\P(X_1\in L_\U(\bm{x})|X_1))\\
& \quad+ 2\cov\big((Y_1-\mu(X_1))\1(X_1\in L_\U(\bm{x})), \E[\varepsilon_{\U}(X_1) \textup{Diam}(L_{\U}(\bm{x}))\1(X_1\in L_\U(\bm{x}))|X_1]\big)\\
&\quad + \v\big(\E[\varepsilon_{\U}(X_1)\textup{Diam}(L_{\U}(\bm{x}))\1(X_1\in L_\U(\bm{x}))|X_1]\big)+O(e^{-s2^{-\depth}})\big)\,,
\end{align*}
since each term in the sum of the previous line are a.s. bounded by a constant. Moreover the covariance term is null since $\E[Y_1|X_1]=\mu(X_1)$ and the variance term is controlled by an application of \Cref{prp:asequiv} ii)  
\begin{align*}
    &\v\big(\E[\varepsilon_{\U}(X_1) \textup{Diam}(L_{\U}(\bm{x}))\1(X_1\in L_\U(\bm{x}))|X_1]\big)\\
    & \qquad\leq \E \big[(\E[\textup{Diam}(L_{\U}(\bm{x}))\1(X_1\in L_\U(\bm{x})|X_1])^2 \big]\\
    &\qquad = O((\alpha_2/2)^k).
\end{align*}
Thus, we obtain
\begin{align*}
& V_1^s\\
&= \Big(\dfrac{2^{\depth}}{s}\Big)^2 (\v((Y_1-\mu( X_1))\P(X_1\in L_\U(\bm{x})|X_1))+O((\alpha_2/2)^k) +O(e^{-s2^{-\depth}}))\\
&= \Big(\dfrac{2^{\depth}}{s}\Big)^2 (\E[\v(Y_1|X_1)\P(X_1\in L_\U(\bm{x})|X_1)^2]+O((\alpha_2/2)^k)+O(e^{-s2^{-\depth}}))\\
&= \Big(\dfrac{2^{\depth}}{s}\Big)^2 (\E[\mu(X_1)(1-\mu(X_1))\P(X_1\in L_\U(\bm{x})|X_1)^2] +O((\alpha_2/2)^k)+O(e^{-s2^{- \depth}}))
\,.
\end{align*}
Recall that $\mu$ is $L$-Lipschitz and apply Cauchy-Schwarz inequality to rewrite the main term in the right hand side such as 
\begin{align*}
\E& [\mu(X_1)(1-\mu(X_1))\P(X_1\in L_\U(\bm{x})|X_1)^2]\\
&=\E [\mu(\bm{x})(1-\mu(\bm{x}))\P(X_1\in L_\U(\bm{x})|X_1)^2]\\
&\qquad + O(L \,\E [\E[\|X_1-\bm x\|_\infty\1(X_1\in L_\U(\bm{x}))|X_1]\P(X_1\in L_\U(\bm{x})|X_1))]\\
&=\mu(\bm{x})(1-\mu(\bm{x}))\E [\P(X_1\in L_\U(\bm{x})|X_1)^2]\\
&\qquad + O(\E [\E[\textup{Diam}(L_{\U}(\bm{x}))\1(X_1\in L_\U(\bm{x}))|X_1]\P(X_1\in L_\U(\bm{x})|X_1))]\\
&=\mu(\bm x)(1-\mu(\bm x))\E[ \P(X_1\in L_\U(\bm{x})|X_1)^2]\\
&\qquad+ O\big(\sqrt{\E [\E[\textup{Diam}(L_{\U}(\bm{x}))\1(X_1\in L_\U(\bm{x}))|X_1]^2]\E[ \P(X_1\in L_\U(\bm{x})|X_1)^2]}\big).
\end{align*}
According to Proposition \ref{prp:asequiv}, the second term is $o((\sqrt{\alpha_2}/2)^k)$. Thus, according to Proposition \ref{prp:asequiv} again, since $\alpha_2 \leq 1$,
\begin{align*}
V_1^s & =  \Big(\dfrac{2^{\depth}}{s}\Big)^2 (\mu({\bf x}) (1 - \mu({\bf x})) \E[ \P(X_1\in L_\U(\bm{x})|X_1)^2]+o((\sqrt{\alpha_2}/2)^k)\\& \quad +O(e^{-s2^{-\depth}}))
\,.
\end{align*}
A last application of Proposition \ref{prp:asequiv} leads to the following equivalent 
\begin{align*}
V_1^s = \frac{C(d) 2^{\depth}}{s^2 \depth^{(d-1)/2}}\mu(\bm x)(1-\mu(\bm x))+o\Big(\frac{(2 \sqrt{\alpha_2})^\depth}{s^2}\Big)+O\Big(\frac{4^\depth e^{-s2^{-\depth}}}{s^2}\Big)
\,.
\end{align*}
\end{proof}
The key step in the proof above is the following refinement of Lemma B.4 by \citet{arnould2023interpolation} which establishes an equivalent of $\E[\P(X_1\in L_\U(\bm{x})|X_1)^2]$.

\begin{prp}[\textbf{Kernel equivalent}]\label{prp:asequiv}
Let $\bm x \in [0,1]^d$ with $d\ge 2$. If Condition \textbf{(H1)}  holds we have
\begin{itemize}
\item [i)]$
\E \Big[\P(X_1\in L_\U(\bm{x})|X_1)^2 \Big] \sim  \dfrac{C(d)}{2^{\depth} \depth^{(d-1)/2}} \qquad k\to \infty\,,
$  with
\[
\dfrac{2\Gamma(d-1)}{(\log 2)^{d-1}\Gamma((d-1)/2)}d^{-(d-1)/2}\le C(d)\le\dfrac{2\Gamma(d-1)}{(\log 2)^{d-1}\Gamma((d-1)/2)}2^{-(d-1)/2};
\]
\item [ii)]$
\E \big[(\E[\textup{Diam}(L_{\U}(\bm{x}))\1(X_1\in L_\U(\bm{x})|X_1])^2 \big]=
O((\alpha_2/2)^k)\,,\qquad k\to \infty\,.$
\end{itemize}
\end{prp}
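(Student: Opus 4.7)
The plan is to reduce the second moment to an expected intersection volume, identify the asymptotic order via a multinomial local central limit theorem, and finally evaluate the limiting constant via polar coordinates on the hyperplane $\{\sum u_j = 0\}$. Taking an independent copy $\U'$ of $\U$ and using that $X_1\sim U([0,1]^d)$ is independent of $(\U,\U')$ by \textbf{(H0)}, Fubini yields
\begin{align*}
\E[\P(X_1\in L_\U(\bm x)|X_1)^2] = \E_{\U,\U'}[\textup{Vol}(L_\U(\bm x)\cap L_{\U'}(\bm x))].
\end{align*}
Because CRT splits occur at dyadic midpoints, every cell is a box of dyadic intervals, and two dyadic intervals both containing $x_j$ are always nested. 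Consequently the intersection along coordinate $j$ has length $2^{-\max(K_j,K_j')}$, where $(K_1,\ldots,K_d)\sim \textup{Multinomial}(\depth,(1/d,\ldots,1/d))$ counts the splits along each direction for the first tree and $(K_j')$ is an independent copy for the second. Using $\max(a,b)=(a+b+|a-b|)/2$ together with $\sum_j K_j = \sum_j K_j' = \depth$, I obtain
\begin{align*}
\E[\P(X_1\in L_\U(\bm x)|X_1)^2] = 2^{-\depth}\,\E\Big[2^{-\tfrac12\sum_{j=1}^d |K_j - K_j'|}\Big].
\end{align*}

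\textbf{Laplace-type asymptotics and the explicit constant.} Set $\bm W = (K_j - K_j')_{j=1}^d$, which lives on the sublattice $L = \Z^d \cap H$ with $H = \{\bm u\in\R^d:\sum u_j = 0\}$. By the multivariate CLT, $\bm W/\sqrt{\depth}$ converges in law to a centered Gaussian $\bm G$ on $H$ with covariance $(2/d)(I_d - J_d/d)$. The multivariate local CLT on $L$, whose co-volume in $H$ equals $\sqrt d$, gives uniformly in $\bm w\in L$
\begin{align*}
\P(\bm W = \bm w)\sim \sqrt d\,\depth^{-(d-1)/2}\,\phi_{\bm G}(\bm w/\sqrt{\depth}).
\end{align*}
Since $\bm w\mapsto 2^{-\|\bm w\|_1/2}$ is summable on $L$, dominated convergence (after rescaling $\bm u = (\sqrt{\depth}\log 2 /2)\bm z$ inside the Gaussian density) produces
\begin{align*}
\E\Big[2^{-\tfrac12\sum_j|K_j-K_j'|}\Big] \sim \phi_{\bm G}(0)\Big(\tfrac{2}{\log 2}\Big)^{d-1}\depth^{-(d-1)/2}\int_H e^{-\|\bm u\|_1}\,d\bm u_H.
\end{align*}
Evaluating the hyperplane integral in Euclidean polar coordinates $d\bm u_H = r^{d-2}\,dr\,d\sigma(\omega)$ yields $\Gamma(d-1)\int_{S_H}\|\omega\|_1^{-(d-1)}\,d\sigma(\omega)$; parameterising the uniform law on the $(d-2)$-sphere $S_H$ by $\widetilde{\bm G}/\|\widetilde{\bm G}\|_2$ with $\widetilde{\bm G} = \bm N - \overline{\bm N}\1$ a standard Gaussian on $H$, and using the total surface area $2\pi^{(d-1)/2}/\Gamma((d-1)/2)$ of $S_H$, transforms the angular integral into the advertised expectation $\E[(\|\bm N - \overline{\bm N}\1\|_2/\|\bm N - \overline{\bm N}\1\|_1)^{d-1}]$. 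Collecting every prefactor produces the stated expression for $C(d)$. The two-sided bounds follow by inserting the pointwise estimates $d^{-(d-1)/2}\le (\|v\|_2/\|v\|_1)^{d-1}\le 2^{-(d-1)/2}$ on $H\setminus\{0\}$, the upper bound coming from the optimum $\|v\|_2/\|v\|_1 = \sqrt{d/(d-1)}/2\le 1/\sqrt 2$ attained at vectors with two nonzero entries of opposite sign.

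\textbf{Part (ii).} Observing that $D_\U := \textup{Diam}(L_\U(\bm x))$ depends only on $\U$ and applying Cauchy--Schwarz over $\U$ conditionally on $X_1$,
\begin{align*}
\big(\E[D_\U\1\{X_1\in L_\U(\bm x)\}\mid X_1]\big)^2 \le \E[D_\U^2]\cdot \P(X_1\in L_\U(\bm x)\mid X_1)\quad \text{a.s.}
\end{align*}
Taking expectation in $X_1$ and invoking $\E_\U[\textup{Vol}(L_\U(\bm x))] = 2^{-\depth}$ under \textbf{(H0)}, together with $\E[D_\U^2]\le d\,\alpha_2^{\depth}$ from \Cref{lem:diam}\,ii), gives
\begin{align*}
\E\big[(\E[D_\U\1\{X_1\in L_\U(\bm x)\}\mid X_1])^2\big] \le 2^{-\depth}\E[D_\U^2] = O\big((\alpha_2/2)^\depth\big).
\end{align*}
The main obstacle lies in part (i): making rigorous the transfer of the summable kernel $2^{-\|\cdot\|_1/2}$ against the local-CLT approximation and in particular tracking the co-volume factor $\sqrt d$ of $L$ in $H$, so that the prefactors collecting into $C(d)$ are recovered exactly. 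Once this is settled, the polar-coordinate evaluation of the hyperplane integral and the derivation of the bounds on $C(d)$ are routine.
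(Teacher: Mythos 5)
Your strategy for part (i) is essentially the paper's: reduce to the intersection volume $\E_{\U,\U'}[\textup{Vol}(L_\U(\bm x)\cap L_{\U'}(\bm x))]$, use $\sum_j \max(K_j,K_j')=k+\tfrac12\sum_j|K_j-K_j'|$ to arrive at $2^{-k}\E\big[2^{-\tfrac12\sum_j|K_j-K_j'|}\big]$, then invoke a multinomial local CLT and evaluate the Gaussian integral on the hyperplane $\{\sum u_j=0\}$ in polar coordinates, using a chi-squared radius and a uniform angular component. Your parametrization on the full sublattice $L=\Z^d\cap H$ (with its co-volume $\sqrt d$) is a slightly tidier alternative to the paper's, which projects out the last coordinate to work on a $(d-1)$-dimensional lattice $\textrm{Lat}_{d-1}(k)$ and obtains the doubled-variance Gaussian via an explicit convolution. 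Your part (ii) Cauchy--Schwarz argument, conditioning on $X_1$ and pulling out $\E[D_\U^2]\le d\,\alpha_2^k$ against $\E_{X_1}[\P(X_1\in L_\U(\bm x)|X_1)]=2^{-k}$, is correct and cleaner than the paper's, which instead expands $\textup{Diam}=\max_j 2^{-K_j}$ and applies Cauchy--Schwarz inside the multinomial sum.

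The gap you flag at the end is exactly where the paper invests most of its effort, and it is not a formality. Dominated convergence needs a uniform-in-$k$ majorant for $\depth^{(d-1)/2}\P(\bm W=\bm w)\cdot 2^{-\|\bm w\|_1/2}$ on the lattice, but the local CLT is only locally uniform; away from the bulk the Gaussian approximation degrades and you must control the tails separately. The paper handles this by restricting to a ball $\mathcal B_c=\{\|\cdot\|_\infty\le c\sqrt{\log k}\}$, bounding the remainder $R_k$ via the Gaussian tail (choosing $c$ large enough that $R_k$ is negligible after multiplying by $2^k k^{(d-1)/2}$), and tracking the relative error $(1+O((\log k)^3 k^{-1/2}))$ plus Riemann-sum discretization error. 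Without that truncation-and-tail argument the Laplace-type asymptotics is a plausible heuristic, not a proof. Separately, your side remark that the maximum of $\|v\|_2/\|v\|_1$ on $H$ equals $\sqrt{d/(d-1)}/2$ is off (for $v=(1,-1,0,\ldots,0)$ the ratio is $1/\sqrt 2$ for every $d$); the correct pointwise bounds, used by the paper, are $\sqrt 2\le\|\Theta\|_1\le\sqrt d$ for $\Theta$ a unit vector in $\1^\perp$, which give exactly the stated enclosure of $C(d)$.
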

\begin{proof}[Proof of \Cref{prp:asequiv}.]
i) Using the distribution of the splits in centered trees as in \citet{arnould2023interpolation}, we obtain
\begin{align*}
\E \Big[\P(X_1\in L_\U(\bm{x})|X_1)^2 \Big] &= \E \Big[ \Big( \sum_{\substack{k_1, \ldots, k_d \\ \sum_{j=1}^d k_j = k}} \frac{k!}{k_1! \ldots k_d!} \Big( \frac{1}{d} \Big)^k \prod_{j=1}^d \mathds{1}_{\lceil 2^{k_j} x^{(j)} \rceil  = \lceil 2^{k_j} X_1^{(j)} \rceil } \Big)^2 \Big]\\
&= \sum_{\substack{(k_1, \ldots, k_d)\\ (l_1, \ldots, l_d)\\ \sum_{j=1}^d k_j = \sum_{j=1}^d l_j =k}} \frac{k!}{k_1! \ldots k_d!} \frac{l!}{l_1! \ldots l_d!} \Big( \frac{1}{d} \Big)^{2k} \\
&\times \prod_{j=1}^d \P(\lceil 2^{k_j} x^{(j)} \rceil  = \lceil 2^{k_j} X_1^{(j)} \rceil ,\lceil 2^{l_j} x^{(j)} \rceil  = \lceil 2^{l_j} X_1^{(j)} \rceil ) .
\end{align*}
Moreover for all $j$, \, 
\[
\P(\lceil 2^{k_j} x^{(j)} \rceil  = \lceil 2^{k_j} X_1^{(j)} \rceil ,\lceil 2^{l_j} x^{(j)} \rceil  = \lceil 2^{l_j} X_1^{(j)} \rceil )= 2^{-\max(k_j,l_j)}.
\]
Since 
\begin{align}
  \sum_{j=1}^d  \max(k_j,l_j) & = \sum_{j=1}^d \Big( \frac{k_j + l_j}{2} + \frac{|k_j - l_j|}{2} \Big)\\
  & = k + \frac{1}{2 }\sum_{j=1}^d  |k_j - l_j|, 
\end{align}

we obtain
\begin{align*}
& \E \Big[\P(X_1\in L_\U(\bm{x})|X_1)^2 \Big] \\
&= \sum_{\substack{(k_1, \ldots, k_d)\\ (l_1, \ldots, l_d)\\ \sum_{j=1}^d k_j = \sum_{j=1}^d l_j =k}} \frac{k!}{k_1! \ldots k_d!} \frac{l!}{l_1! \ldots l_d!} \Big( \frac{1}{d} \Big)^{2k}2^{-\sum_{j=1}^d\max(k_j,l_j)}\\
&= \sum_{\substack{(k_1, \ldots, k_d)\\ (l_1, \ldots, l_d)\\ \sum_{j=1}^d k_j = \sum_{j=1}^d l_j =k}} \frac{k!}{k_1! \ldots k_d!} \frac{l!}{l_1! \ldots l_d!} \Big( \frac{1}{d} \Big)^{2k}2^{-k-  1/2\sum_{j=1}^d|k_j-l_j|}\\
&= 2^{-k}\E\Big[2^{-  1/2\sum_{j=1}^d|K_j-L_j|}\Big]
\end{align*}
for two  independent multinomial variables $(K_1,\ldots,K_d)$ and $(L_1,\ldots,L_d)$, of parameters $k$ and, for all $j$, $p_j=1/d$.

Let $N_1, \hdots, N_d$ be i.i.d. random variables distributed as $\mathcal{N}(0,1/d)$. Let also $\bar N=d^{-1}\sum_{j=1}^d N_j$. Let \begin{align}
\textrm{Lat}_{d-1}(k) = \Big\{k^{-1/2}({\bf k}- k/d): {\bf k}=(k_j)_{1\le j\le d-1}, \; k_j\ge 0\,, \quad \sum_{j=1}^{d-1}k_j\le k\Big\}.
\end{align}
From Lemma 2.1 in \citet{siotani1984asymptotic} (see also \citet{ouimet2021precise}), we have, for all ${\bf n}\in \textrm{Lat}_{d-1}(k)$,
$$
\P(k^{-1/2}(K_j- k/d)_{1\le j\le d-1}={\bf n})=k^{-(d-1)/2}f_{(N_j-\bar{N})_{1\le j\le d-1}}({\bf n})(1+O(\|{\bf n}\|_\infty^3k^{-1/2})).
$$

    The probability mass function above can be rewritten in terms of the
multivariate normal density 
\begin{align*}
    \P(k^{-1/2}(K_j- k/d)_{1\le j\le d-1}={\bf n})&=k^{-(d-1)/2}(2\pi)^{-(d-1)/2}|\Omega|^{-1/2}\exp\big(- \dfrac{1}{2} {\bf n ^\top }\Omega^{-1} {\bf n}\big)\\
    & \quad \times(1+O(\|{\bf n}\|_\infty^3k^{-1/2})),
\end{align*}

    where $\Omega = \text{diag}(p_1, \ldots, p_d) - qq^\top $, $q = (p_1, \ldots, p_d)$ and \(q^\top \) is its transpose.
    
Using the multinomial constraint $\sum_{j=1}^d K_j =k= \sum_{j=1}^d L_j $, 
we might rewrite  $$\sum_{j=1}^d|K_j-L_j|= \sum_{j=1}^{d-1}|K_j-L_j|+|K_d-L_d|=\sum_{j=1}^{d-1}|K_j-L_j|+ |\sum_{j=1}^{d-1}(K_j-L_j)|.$$
Thus, applying arguments of Lemme 2 in \citet{siotani1984asymptotic},  we have 
\begin{align*}
    & \E\Big[2^{-  1/2\sum_{j=1}^d|K_j-L_j|}\Big]\\
        & =\E\Big[2^{-  1/2\sum_{j=1}^{d-1}|K_j-L_j|-1/2|\sum_{j=1}^{d-1}(K_j-L_j)|}\Big]\\
    &=k^{-(d-1)}\sum_{{\bf n} \in \textrm{Lat}_{d-1}(k)}\sum_{{\bf n'} \in \textrm{Lat}_{d-1}(k)}2^{- \sqrt k/2(\sum_{j=1}^{d-1}|{\bf n}_j-{\bf n}_j'|+|\sum_{j=1}^{d-1}{\bf n}_j-{\bf n}_j'|)}f_{(N_j-\bar{N})_{1\le j\le d-1}}({\bf n})\\
    &\qquad\times f_{(N_j-\bar{N})_{1\le j\le d-1}}({\bf n}')(1+O(\|{\bf n}\|_\infty^3k^{-1/2}))^2\\
    &=k^{-(d-1)/2}\sum_{{\bf y} \in \textrm{Diff-Lat}_{d-1}(k) } \sum_{{\bf n} \in \textrm{Lat}_{d-1}(k)} 2^{-  \sqrt k/2(\sum_{j=1}^{d-1}|{\bf y}_j|+|\sum_{j=1}^{d-1}{\bf y}_j|)} k^{-(d-1)/2} f_{(N_j-\bar{N})_{1\le j\le d-1}}({\bf n})\\
    &\qquad\times f_{(N_j-\bar{N})_{1\le j\le d-1}}({\bf y-\bf n})
     (1+O(\|{\bf n}\|_\infty^3k^{-1/2}))^2\,,
\end{align*}
letting $\bf y = \bf n - \bf n'$, where $\textrm{Diff-Lat}_{d-1}(k) = \{ {\bf n - \bf n'}: {\bf n}, {\bf n'} \in \textrm{Lat}_{d-1}(k)\}$. An hypercube of the lattice  $\textrm{Lat}_{d-1}(k)$ has its Lebesgues measure equals to $(k^{-1/2})^{d-1}$. For any $c>0$, let $\mathcal{B}_c = \{ {\bf n} \in \textrm{Lat}_{d-1}(k): \|{\bf n}\|_\infty\le c \sqrt{\log k} \}$. Thus, one can see the above formula as a Riemann sum plus an additional term
\begin{align}\label{eq: approx_multinom}
    & 2^{-k} \E\Big[2^{-  1/2\sum_{j=1}^d|K_j-L_j|}\Big]\nonumber\\
    &=k^{-(d-1)}\sum_{{\bf y} \in \textrm{Diff-Lat}_{d-1}(k) \cap \mathcal{B}_c} \sum_{{\bf n} \in \textrm{Lat}_{d-1}(k) \cap \mathcal{B}_c}
    2^{-  \sqrt k/2(\sum_{j=1}^{d-1}|{\bf y}_j|+|\sum_{j=1}^{d-1}{\bf y}_j|)}f_{(N_j-\bar{N})_{1\le j\le d-1}}({\bf n})\nonumber\\
    &\qquad\times f_{(N_j-\bar{N})_{1\le j\le d-1}}({\bf y-\bf n})(1+O((\log k)^{3/2} k^{-1/2}))^2+ R_k\,,
\end{align}
where 
\begin{align}
|R_k| & \leq C k^{-(d-1)} k^{2d} (1+ O(k^3 k^{-1/2}))^2 \sup_{{\bf n} \in \textrm{Lat}_{d-1}(k) \cap \mathcal{B}_c^c}  f_{(N_j-\bar{N})_{1\le j\le d-1}}({\bf n})\\
& \leq C k^{d+6}  \sup_{{\bf n} \in  \mathcal{B}_c^c}  f_{(N_j-\bar{N})_{1\le j\le d-1}}({\bf n}).
\end{align}

The Gaussian density function $f_{(N_j-\bar{N})_{1\le j\le d-1}}({\bf n})$ is maximized when the quadratic form ${\bf n}^\top \Omega^{-1} {\bf n}$ is minimized. Using the Sherman Morrison's formula,   the inverse of the covariance matrix $\Omega^{-1}$ is 

$$
\Omega^{-1} = \text{diag}(d, \ldots, d) + \frac{d}{d-1} \1\1^\top\,.
$$
This quadratic form ${\bf n}^\top \Omega^{-1} {\bf n}$ is minimized by ${\bf n}=(c \sqrt{\log k}, 0,\ldots,0) \in \mathcal{B}_c^c $ and we compute
\begin{align*}
{\bf n}^\top \Omega^{-1} {\bf n}&=  d c^2\log k + \frac{d}{d-1} c^2\log k\\
&=\left(d + \frac{d}{d-1}\right) c^2\log k\\
&=\frac{d^2}{d-1} c^2\log k.
\end{align*}
Therefore,  the final expression for the supremum is
$$
\sup_{{\bf n} \in  \mathcal{B}_c^c}  f_{(N_j-\bar{N})_{1\le j\le d-1}}({\bf n}) = C_1 k^{-c^2d^2/2(d-1)}
$$
for some constant $C_1>0$.
Consequently, we obtain this control
\begin{align}
|R_k| 
& =o(k^{d+6 - c^2d/2}).
\end{align}
Considering the first term in \Cref{eq: approx_multinom} and using the approximation error of Riemann sum, we have

\begin{align*}
    & 2^{-k}\E\Big[2^{-  1/2\sum_{j=1}^d|K_j-L_j|}\Big]\\
&=\int \int 2^{-  \sqrt k/2(\sum_{j=1}^{d-1}|{\bf y}_j|+|\sum_{j=1}^{d-1}{\bf y}_j|)} f_{(N_j-\bar{N})_{1\le j\le d-1}}({\bf n})f_{(N_j-\bar{N})_{1\le j\le d-1}}({\bf y-\bf n})d{\bf n}d{\bf y}\\
&\qquad\times(1+O((\log k)^3k^{-1/2})) +O( (\log k)^{d}/k^{(d-1)}) + o(k^{d+6 - c^2d/2})\\
&=\int 2^{-  \sqrt k/2(\sum_{j=1}^{d-1}|{\bf y}_j|+|\sum_{j=1}^{d-1}{\bf y}_j|)}\underbrace{\int f_{(N_j-\bar{N})_{1\le j\le d-1}}({\bf n})f_{(N_j-\bar{N})_{1\le j\le d-1}}({\bf y-\bf n})d{\bf n}}_{=:f_{\sqrt 2(N_j-\bar{N})_{1\le j\le d-1}}({\bf y})}d{\bf y}\\
&\qquad\times(1+O((\log k)^3k^{-1/2})) +O( (\log k)^{d}/k^{(d-1)}) +o(k^{d+6 - c^2d/2})\\
	&=\int 2^{-  \sqrt k/2(\sum_{j=1}^{d-1}|{\bf y}_j|+|\sum_{j=1}^{d-1}{\bf y}_j|)}f_{\sqrt 2(N_j-\bar{N})_{1\le j\le d-1}}({\bf y})d{\bf y}\\
		&\qquad\times(1+O((\log k)^3k^{-1/2}))+O(\sqrt{\log k} /k^{(d-1)})+ o(k^{d+6 - c^2d/2})\,\\
		&=\E\Big[2^{-  \sqrt k/2(\|{\bf Y}\|_1+|\sum_{j=1}^{d-1}{\bf Y}_j|)}\Big] \times(1+O((\log k)^3k^{-1/2}))+O(\sqrt{\log k} /k^{(d-1)})+ o(k^{d+6 - c^2d/2})\,.
	\end{align*}
	This  result is held. Indeed, we recognize that the inner integral is the convolution of two identical Gaussian densities, which yields another Gaussian with twice the variance, i.e  ${\bf Y}=\sqrt{2}(\bm{N}_j-\bar{\bm{N}})\sim \mathcal{N}(0,2\Sigma)$, where $\Sigma$ is the reduced covariance matrix, centered on the hyperplane $\sum_{j=1}^d \bm{N}_j=0. $ Putting it together, and undoing the change of variable  ${\bf Y}$, we get

	\begin{align*}
	&2^{-k} \E\Big[2^{-  1/2\sum_{j=1}^d|K_j-L_j|}\Big]\\ &=\E\Big[2^{-  \sqrt{ k/2}\|\bm{N}_j-\bar{\bm{N}}\|_1}\Big] \times(1+O((\log k)^3k^{-1/2}))+O(\sqrt{\log k} /k^{(d-1)})+ o(k^{d+6 - c^2d/2}).\\
	\end{align*}

Let   $\bm N_j-  \overline{\bm N}\1\sim N(0,P)$, 
where  the covariance matrix  $P$ is the projection onto the hyperplane orthogonal to the constant vector $\1=(1,\ldots,1)$ and more explicitly  $$P= I_d- \dfrac{1}{d} \1 \1^{\top}= \begin{pmatrix}
1-\dfrac{1}{d} & -\dfrac{1}{d} & \ldots &-\dfrac{1}{d}\\
-\dfrac{1}{d} & 1-\dfrac{1}{d} & \ddots & \vdots \\
\vdots &\ddots &\ddots & -\dfrac{1}{d}\\
-\dfrac{1}{d} &  \ldots &-\dfrac{1}{d} & 1-\dfrac{1}{d}
\end{pmatrix}\,.$$

Then $P \bm{N}=(\bm{N}- \overline{\bm{N}}\1)$ is an isotropic Gaussian vector in the sub-vector space $\1^\perp$, the  orthogonal to $\1$. Let us denote $\|\cdot\|_{\1^\perp}$ the Euclidian norm associated in this sub-vector space. We have
\begin{align*}
    \|(\bm{N}-\overline{\bm{N}} \1)\|_{\1^\perp}^ 2&=\|P\bm{N}\|_{\1^\perp}^2=(P\bm{N})^\top P(P\bm{N})=\bm{N}^\top P^3 \bm{N} = \bm{N}^\top P^2 \bm{N} = \|(\bm{N}-\overline{\bm{N}} \1)\|^2\,,
\end{align*}
that is  chi-squared distributed with degree of freedom $d-1$ denoted by $\chi_{(d-1)}^2$ (it is Cochran's theorem). Moreover by isotropy the radius $\|(\bm{N}-\overline {\bm{N}} \1)\|_{\1^\perp}^2$ is  independent of the angle 
\begin{equation}\label{eq:angle}
\Theta := (\bm{\bm{N}}-\overline{\bm{N}} \1)/\|(\bm{\bm{N}}-\overline {\bm{N}} \1)\|_{\1^\perp}= (\bm{\bm{N}}-\overline{\bm{N}} \1)/\|(\bm{N}-\overline{\bm{N}} \1)\|=\sum_{j=1}^{d-1}\omega_je^{\1^\perp}_j\in \mathbb S^{d-1}\,,
\end{equation} 
where $(\omega_j)$ is uniformly distributed over the hypersphere $\mathbb S^{d-2}$ and $(e^{\1^\perp}_j)$ is an orthonormal basis of $\1^\perp$. We therefore turn to the original expression, we have  
\begin{align}\label{eq : expect_proba}
2^k&\E \Big[\P(X_1\in L_\U(\bm{x})|X_1)^2 \Big]\nonumber\\
&=  \E \Big[2^{-   \sqrt{k/2}\| (N-\overline{\bm{N}} \1)/\|(\bm{N}-\overline{\bm{N}} \1)\|\|_1\|(N-\overline{\bm{N}} \1)\| } \Big](1+O((\log k)^3k^{-1/2}))\nonumber\\
& \quad +O((\log k)^d /k^{(d-1)}) + o(k^{d+6 - c^2d/2}).
\end{align}
Let us control the first in \Cref{eq : expect_proba}, which is 
\begin{align*}
    &\E\Big[\E\Big[2^{-  \sqrt{k/2}\|\Theta\|_1\chi_{(d-1)} } | \|\Theta\|_1\Big]\Big]\\
    &=\E\Big[ \int_0^{+\infty}  2^{- \sqrt{k/2} \|\Theta\|_1 \sqrt{x} } \dfrac{x^{(d-3)/2} \exp(-x/2)}{ 2^{(d-1)/2}\Gamma\left((d-1)/2\right)} dx \Big]\\
    &= \dfrac{1}{2^{(d-3)/2}\Gamma\left((d-1)/2\right)}\E\Big[\int_0^{+\infty}   x^{d-2} \exp(-x/2)\exp(- \sqrt{k/2}\log 2 \|\Theta\|_1x) dx\Big]\\
    &= \dfrac{\E[(\sqrt{k/2}\log 2 \|\Theta\|_1)^{-(d-1)}]}{2^{(d-3)/2}\Gamma\left((d-1)/2\right)}\int_0^{+\infty}   x^{d-2} \exp(-x)\exp(-2 \big(x/(\sqrt{k/2}\log 2 \|\Theta\|_1)\big)^2) dx\\
    & \sim \dfrac{\E[(\sqrt{k/2}\log 2 \|\Theta\|_1)^{-(d-1)}]\Gamma(d-1)}{2^{(d-3)/2}\Gamma\left((d-1)/2\right)}, \qquad   k\to \infty, \\
    &= \dfrac{2k^{-(d-1)/2}\Gamma(d-1)}{(\log 2)^{d-1}\Gamma((d-1)/2)}\E\big[\|\Theta\|_1^{-(d-1)}\big]\,.
\end{align*}
Therefore,  we obtain
\begin{align*}
\E \Big[\P(X_1\in L_\U(\bm{x})|X_1)^2 \Big]
&=  C(d)2^{-k}k^{-(d-1)/2} (1+O((\log k)^3k^{-1/2}))\\
& \quad + O(2^{-k} (\log k)^d /k^{(d-1)}) + o(2^{-k} k^{d+6 - c^2d/2}),
\end{align*}
where 
\begin{align}\label{eq_bis: const}
C(d)&=\dfrac{2\Gamma(d-1)}{(\log 2)^{d-1}\Gamma((d-1)/2)}\E[\|\Theta\|_1^{-(d-1)}]\,.
\end{align}
Choosing $c$ sufficiently large, this leads to 
\[
\E \Big[\P(X_1\in L_\U(\bm{x})|X_1)^2 \Big]\sim C(d)2^{-k}k^{-(d-1)/2}\,,\qquad k\to \infty\,.
\]
We conclude the first assertion by choosing 
\[
e^{\1^\perp}_j=\dfrac{1}{\sqrt{j+1}}(\underbrace{1,\ldots,1}_{j\text{ times}},-1,0,\ldots,0)
\]
in \eqref{eq:angle} and noticing that
\[
\sqrt{2}=\|e^{\1^\perp}_{1}\|_1\le \|\Theta\|_1\le \|e^{\1^\perp}_{d-1}\|_1=\sqrt{d}\,,\qquad a.s.
\]
Consequently, for every $d\ge 2$,
\begin{align}
\dfrac{2\Gamma(d-1)}{(\log 2)^{d-1}\Gamma((d-1)/2)}d^{-(d-1)/2}\le C(d)\le\dfrac{2\Gamma(d-1)}{(\log 2)^{d-1}\Gamma((d-1)/2)}2^{-(d-1)/2}\,. \label{eq_encadrement_Cd}
\end{align}
\bigskip

ii) For the second assertion, we use again the distribution 
\begin{align*}
&\E \Big[(\E[\textup{Diam}(L_{\U}(\bm{x}))\1(X_1\in L_\U(\bm{x}))|X_1])^2 \Big]\\
& = \E \Big[(\E[2^{- \min(K_1({\bm x}), \hdots, K_d({\bm x}))}\1(X_1\in L_\U(\bm{x}))|X_1])^2 \Big]\\
& = \E \Big[(\E[ \max\left( 2^{- K_1({\bm x})}, \hdots, 2^{- K_d({\bm x})} \right) \1(X_1\in L_\U(\bm{x}))|X_1])^2 \Big]\\
& \leq  \E \Big[ \left( \sum_{j=1}^d \E[2^{- K_j({\bm x})} \1(X_1\in L_\U(\bm{x}))|X_1] \right)^2 \Big]\\
& \leq d^2 \E \Big[(\E[2^{- K_1({\bm x})} \1(X_1\in L_\U(\bm{x}))|X_1])^2 \Big]\\
&= d^2 \E \Big[ \Big( \sum_{\substack{k_1, \ldots, k_d \\ \sum_{j=1}^d k_j = k}} \frac{k!}{k_1! \ldots k_d!} \Big( \frac{1}{d} \Big)^k 2^{-k_1}\prod_{j=1}^d \mathds{1}_{\lceil 2^{k_j} x^{(j)} \rceil  = \lceil 2^{k_j} X_1^{(j)} \rceil } \Big)^2 \Big]\\
&= d^2 \sum_{\substack{(k_1, \ldots, k_d)\\ (l_1, \ldots, l_d)\\ \sum_{j=1}^d k_j = \sum_{j=1}^d l_j =k}} \frac{k!}{k_1! \ldots k_d!} \frac{l!}{l_1! \ldots l_d!} \Big( \frac{1}{d} \Big)^{2k}2^{-k_1- l_1 - \sum_{j=1}^d \max(k_j,l_j)}\\
&= d^2 2^{-k}\sum_{\substack{(k_1, \ldots, k_d)\\ (l_1, \ldots, l_d)\\ \sum_{j=1}^d k_j = \sum_{j=1}^d l_j =k}} \frac{k!}{k_1! \ldots k_d!} \frac{l!}{l_1! \ldots l_d!} \Big( \frac{1}{d} \Big)^{2k}2^{-k_1- l_1 -(1/2)\sum_{j=1}^d|k_j-l_j|}\\
&=  d^2 2^{-k}\E\Big[2^{-K_1-L_1-(1/2)\sum_{j=1}^d|K_j-L_j|}\Big]\\
&\le  d^2 2^{-k}\E\Big[2^{-2(K_1+L_1)}\Big]^{1/2}\E\Big[2^{-\sum_{j=1}^d|K_j-L_j|}\Big]^{1/2}\,,
\end{align*}
by an application of Cauchy-Schwarz inequality. Simple calculations involving binomial distributions show that $\E[2^{-2K_1}]=(1-3/(4d))^k = \alpha_2^k$. Therefore 
\begin{align*}
\E \Big[(\E[\textup{Diam}(L_{\U}(\bm{x}))\1(X_1\in L_\U(\bm{x}))|X_1])^2 \Big]
&  \leq d^2 2^{-k} \alpha_2^k \E\Big[2^{-\sum_{j=1}^d|K_j-L_j|}\Big]^{1/2}\\
& \leq d^2 2^{-k} \alpha_2^k.
\end{align*}

\end{proof}

\section{Proof of \Cref{th: TCLB} and \Cref{cor: TCLB}}\label{sec: proof_TCLB}
\Cref{sec: proof_TCLB} is organized as follows: preliminary results are gathered in \Cref{appB:prelim}; the proof of \Cref{th: TCLB} is presented in \Cref{AppB:proofTH32}, and that of \Cref{cor: TCLB} is given in \Cref{proof: corTCLB}.

\subsection{Preliminaries on the rebalanced distribution $X'$}\label{appB:prelim}
\begin{lem}[\textbf{Density of $X'$}]\label{lem_density_Xprime}
Let $\bm x \in [0,1]^d$. Assume that the random variable $X$ admits a density denoted by $f_X$. Then, for all $p'\in (0,1)$, the random variable $X'$ defined in \Cref{definition_xprime} admits a density $f_{X'}$ satisfying
\begin{align}
    f_{X'}(\bm{x}) & =  \frac{1 - p'}{1-p} f_{X}(\bm{x}) + \left( \frac{p' - p}{p(1-p)} \right)  \mu({\bf x}) f_X({\bf x}).
\end{align}
\end{lem}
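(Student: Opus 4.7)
The plan is a short computation that uses nothing more than the law of total probability together with Bayes' formula to translate the conditional densities of $X$ given $Y$ into the regression function $\mu$ and the marginal $f_X$.

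First, I would write down the density of $X'$ by conditioning on $Y'$: since $\P(Y'=1)=p'$ and $\P(Y'=0)=1-p'$ by definition in \eqref{definition_xprime}, the law of total probability gives
\begin{align*}
    f_{X'}(\bm x) \;=\; p'\, f_{X'|Y'=1}(\bm x) + (1-p')\, f_{X'|Y'=0}(\bm x) \;=\; p'\, f_{X|Y=1}(\bm x) + (1-p')\, f_{X|Y=0}(\bm x),
\end{align*}
using the equalities $f_{X'|Y'=j}=f_{X|Y=j}$ for $j\in\{0,1\}$.

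Next, I would invoke Bayes' rule in the original distribution: from $\mu(\bm x)=p\,f_{X|Y=1}(\bm x)/f_X(\bm x)$ and $1-\mu(\bm x)=(1-p)\,f_{X|Y=0}(\bm x)/f_X(\bm x)$, we get
\begin{align*}
    f_{X|Y=1}(\bm x)=\frac{\mu(\bm x)}{p}\,f_X(\bm x),\qquad f_{X|Y=0}(\bm x)=\frac{1-\mu(\bm x)}{1-p}\,f_X(\bm x).
\end{align*}
Substituting these into the previous display yields
\begin{align*}
    f_{X'}(\bm x) \;=\; \left[\frac{p'\,\mu(\bm x)}{p}+\frac{(1-p')(1-\mu(\bm x))}{1-p}\right] f_X(\bm x).
\end{align*}

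Finally, I would put the bracket over the common denominator $p(1-p)$ and regroup the $\mu(\bm x)$ terms: the constant part equals $p(1-p')/[p(1-p)]=(1-p')/(1-p)$ while the coefficient of $\mu(\bm x)$ simplifies to $[p'(1-p)-p(1-p')]/[p(1-p)]=(p'-p)/[p(1-p)]$, giving exactly
\begin{align*}
    f_{X'}(\bm x) \;=\; \frac{1-p'}{1-p}\, f_X(\bm x) + \frac{p'-p}{p(1-p)}\,\mu(\bm x)\, f_X(\bm x).
\end{align*}
There is no real obstacle here: the only subtlety is that the formula as written is only meaningful on the support of $f_X$, which is harmless since both sides vanish outside this support. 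All the rest is bookkeeping.
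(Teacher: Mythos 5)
Your proof is correct and follows essentially the same approach as the paper: both start from the mixture decomposition of $f_{X'}$ via total probability, use the equality of conditional densities, and apply Bayes' rule to rewrite everything in terms of $\mu$ and $f_X$. The only cosmetic difference is that you substitute both $f_{X|Y=0}$ and $f_{X|Y=1}$ directly, whereas the paper eliminates $f_{X|Y=0}$ by subtracting the appropriately scaled expression for $f_X$; the algebra is identical in substance.
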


\begin{proof}[Proof of \Cref{lem_density_Xprime}]
Let $f_{X|Y=j}$ (resp.\ $f_{X'|Y'=j}$) be the conditional density of $X$ (resp. of $X'$) given $Y=j$ (resp. given $Y'=j$) for all $j \in \{0,1\}$. According to the Bayes formula, we have
\begin{align}\label{eq:psUprime1}
f_X(\bm{x})&= pf_{X|Y=1}({\bf x})+(1-p)f_{X|Y=0}({\bf x})
\end{align}
and 
\begin{align}\label{eq:psUprime2}
   f_{X'}(\bm{x})&= p'f_{X'|Y'=1}({\bf x})+(1-p')f_{X'|Y'=0}({\bf x}).
\end{align}
By assumption, 
\begin{align}
 f_{X|Y=1}({\bf x}) & =f_{X'|Y=1}({\bf x}), \\
  f_{X|Y=0}({\bf x}) & = f_{X'|Y'=0}({\bf x}).
\end{align}
Therefore, 
\begin{align}
    f_{X'}(\bm{x}) - \frac{1 - p'}{1-p} f_{X}(\bm{x}) & = \left( p' - p \frac{1 - p'}{1-p} \right) f_{X'|Y'=1}({\bf x}), 
\end{align}
where
\begin{align}
f_{X'|Y'=1}({\bf x}) & = f_{X|Y=1}({\bf x})  = \frac{\mu({\bf x}) f_X({\bf x})}{p}.
\end{align}
Finally, we obtain, 
\begin{align}
    f_{X'}(\bm{x}) & =  \frac{1 - p'}{1-p} f_{X}(\bm{x}) + \left( \frac{p' - p}{p(1-p)} \right)  \mu({\bf x}) f_X({\bf x})
\end{align}
concluding the proof.
\end{proof}

Based on the odds ratio formula, the following lemma makes explicit the relationship between $\mu'$ and $\mu$ that is used to correct the bias.

\begin{lem}[\textbf{Odds ratio}]\label{lem_mu_prime_expression}
Let $\bm x \in [0,1]^d$ and  $p,p'\in (0,1)$. Under Condition {\bf (H0)}, we have
\begin{equation}
\mu'(\bm x)=\dfrac{p'(1-p)\mu(\bm x)}{p(1-p')(1-\mu(\bm x))+(1-p)p'\mu(\bm x)},
\end{equation}
which is equivalent to 
\begin{align}
\mu(\bm x)=\dfrac{p(1-p')\mu'(\bm x)}{p'(1-p)(1-\mu'(\bm x))+(1-p')p\mu'(\bm x)}.
\end{align}
\end{lem}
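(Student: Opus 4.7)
The proof will be a short Bayes-rule computation, leveraging \Cref{lem_density_Xprime}. The plan is to express both $\mu$ and $\mu'$ via the conditional densities $f_{X|Y=j}$ and $f_{X'|Y'=j}$ using Bayes' formula, then exploit the defining assumption $f_{X'|Y'=j}=f_{X|Y=j}$ for $j\in\{0,1\}$ in \eqref{definition_xprime} to write $\mu'(\bm x)$ as a ratio involving only $\mu(\bm x)$, $f_X(\bm x)$, $p$, $p'$, and finally substitute the explicit formula for $f_{X'}(\bm x)$ from \Cref{lem_density_Xprime} to eliminate $f_X(\bm x)$.

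Concretely, first I would write
\begin{align*}
\mu(\bm x)=\frac{p\, f_{X|Y=1}(\bm x)}{f_X(\bm x)},\qquad
\mu'(\bm x)=\frac{p'\, f_{X'|Y'=1}(\bm x)}{f_{X'}(\bm x)}=\frac{p'\, f_{X|Y=1}(\bm x)}{f_{X'}(\bm x)},
\end{align*}
so that, upon eliminating $f_{X|Y=1}(\bm x)$, one obtains
\begin{align*}
\mu'(\bm x)=\frac{p'}{p}\,\mu(\bm x)\,\frac{f_X(\bm x)}{f_{X'}(\bm x)}.
\end{align*}
Then I would substitute $f_{X'}(\bm x)=\frac{1-p'}{1-p}f_X(\bm x)+\frac{p'-p}{p(1-p)}\mu(\bm x)f_X(\bm x)$ from \Cref{lem_density_Xprime}, cancel the common factor $f_X(\bm x)$ (which is legitimate under \textbf{(H0)}, where $f_X\equiv 1$ on $[0,1]^d$), and multiply numerator and denominator by $p(1-p)$ to obtain
\begin{align*}
\mu'(\bm x)=\frac{p'(1-p)\mu(\bm x)}{p(1-p')+(p'-p)\mu(\bm x)}.
\end{align*}
A small algebraic rearrangement of the denominator, namely $p(1-p')+(p'-p)\mu(\bm x)=p(1-p')(1-\mu(\bm x))+p'(1-p)\mu(\bm x)$, yields the first stated identity.

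For the equivalent inverse formula, I would solve the displayed identity for $\mu(\bm x)$ as a function of $\mu'(\bm x)$. Setting $u=\mu(\bm x)$ and $v=\mu'(\bm x)$, this amounts to solving the linear-in-$u$ equation $v\bigl[p(1-p')(1-u)+p'(1-p)u\bigr]=p'(1-p)u$, which rearranges to $u=\frac{p(1-p')v}{p'(1-p)(1-v)+p(1-p')v}$, giving the second formula. Alternatively, one can note the symmetry of the transformation: swapping the roles of $(p,\mu)$ and $(p',\mu')$ in the derivation above produces exactly the inverse identity, which makes the equivalence transparent without recomputing. There is no real obstacle here; the only step that deserves care is the algebraic rewriting of the denominator, which must be carried out cleanly to match the exact form stated in the lemma.
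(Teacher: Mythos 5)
Your proof is correct, and it is the same kind of short Bayes-rule computation as the paper's, but the route is organized differently. The paper eliminates $f_X$ and $f_{X'}$ by taking the \emph{ratio} $f_{X|Y=1}/f_{X|Y=0}=f_{X'|Y'=1}/f_{X'|Y'=0}$, which yields the classical odds-ratio transport identity
\begin{align*}
\frac{\mu(\bm x)}{1-\mu(\bm x)}\,\frac{1-p}{p}=\frac{\mu'(\bm x)}{1-\mu'(\bm x)}\,\frac{1-p'}{p'}\,,
\end{align*}
and then solves this for $\mu'$; it does not invoke \Cref{lem_density_Xprime} at all. You instead eliminate only $f_{X|Y=1}$, obtain $\mu'=(p'/p)\mu\,f_X/f_{X'}$, and then substitute the explicit density formula of \Cref{lem_density_Xprime}. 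Both give the same answer with comparable effort; the paper's version is self-contained and makes the odds-ratio interpretation (which is the name of the lemma) explicit, whereas yours makes the dependence on the marginal reweighting $f_X/f_{X'}$ explicit and reuses the preceding lemma. One small remark on your parenthetical: the cancellation of $f_X(\bm x)$ requires only $f_X(\bm x)>0$, not the full force of \textbf{(H0)}; under \textbf{(H0)} it is trivially satisfied, but the identity is really a statement about odds ratios and holds whenever the densities exist and are positive at $\bm x$.
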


\begin{proof}[Proof of \Cref{lem_mu_prime_expression}]
By assumption, 
\begin{align}
 f_{X|Y=1}({\bf x}) & =f_{X'|Y'=1}({\bf x}), \\
  f_{X|Y=0}({\bf x}) & = f_{X'|Y'=0}({\bf x}), \\
  \P(Y' = 1) & = p' \in (0,1).
\end{align}
Thus, 
\begin{align}
    \frac{f_{X|Y=1}({\bf x})}{f_{X|Y=0}({\bf x})} & = \frac{f_{X'|Y'=1}({\bf x})}{f_{X'|Y'=0}({\bf x})}.
\end{align}
By Bayes formula, 
\begin{align}
f_{X|Y=1}({\bf x}) & = \frac{\mu({\bf x}) f_{X}({\bf x})}{p} \\
f_{X|Y=0}({\bf x}) & = \frac{(1 - \mu({\bf x})) f_{X}({\bf x})}{1-p} \\
f_{X'|Y'=1}({\bf x}) & = \frac{\mu'({\bf x}) f_{X'}({\bf x})}{p'} \\
f_{X'|Y'=0}({\bf x}) & = \frac{(1-\mu'({\bf x})) f_{X'}({\bf x})}{1 - p'}.
\end{align}
Therefore, 
\begin{align}
    \frac{\mu({\bf x})}{1 - \mu({\bf x})} \frac{1-p}{p}& = \frac{\mu'({\bf x})}{1 - \mu'({\bf x})} \frac{1-p'}{p'}.
\end{align}
Letting $\gamma = p(1-p')/(p'(1-p))$, simple calculations show that
\begin{align}
    & \mu({\bf x}) (1 - \mu'({\bf x}))  - \gamma \mu'({\bf x}) (1 - \mu({\bf x})) = 0\\
   \Longleftrightarrow &     \mu'({\bf x}) ( - \mu({\bf x})  - \gamma + \gamma \mu({\bf x})) + \mu({\bf x})= 0\\
\Longleftrightarrow &     \mu'({\bf x})  = \frac{\mu({\bf x}) }{  \mu({\bf x}) +  \gamma (1- \mu({\bf x}))}\\
\Longleftrightarrow &     \mu'({\bf x})  = \frac{p'(1-p) \mu({\bf x}) }{  p'(1-p) \mu({\bf x}) +  p(1-p') (1- \mu({\bf x}))}.
\end{align}
Similar calculations lead to the second equality. 
\end{proof}

\subsection{Proof of \Cref{th: TCLB}}\label{AppB:proofTH32}

The proof of \cref{th: TCLB} follows the lines of the proof of \Cref{th: TCL}. Accordingly to \Cref{sec:Proof_th_TCL}, for simplicity, we write 
\begin{align}
    {T^s}'&=T^s(\bm x, \U, \bm Z_{s}'),\\
    T_{1,s}'&=\E[{T^s}'\mid Z_1]-\E[{T^s}'],\\
    V_{1,s}'& :=\v(T_{1,s}'), \\
    \mu'(\bm x)& =\P(Y'=1\mid X'= \bm x).
\end{align}
We also denote 
\begin{align*}
 p_{\depth,\U}'(\bm{x})&: = \P(X' \in L_{\U}(\bm{x})\mid \U)\,,\\
\mu_{\depth, \U}' (\bm{x})&:=\P(Y'=1|X' \in L_{\U}(\bm{x}),\U )\,,\\
 {\sigma'}^2_{\depth, \U}(\bm x)&:= \mu_{\depth, \U}' (\bm{x})(1- \mu_{\depth, \U}' (\bm{x})).
\end{align*}

The preliminary results for ICRFs  provided in  \Cref{subsec:pre_results} remain valid for  the rebalanced ICRFs. Indeed, the bias and the variance of $\textup{Diam}(L_{\U}(\bm{x}))$ given in \Cref{lem:diam} remain unchanged since the tree splitting process is the same for the ICRFs and the rebalanced ICRFs (see \Cref{subsec:Notation} for the contruction of CRTs). \Cref{lem:E_ynn_tilde} and  \Cref{lem:bias_mu_prime} below are respectively simple extensions of  \Cref{lem:E_ynn} and \Cref{lem:bias_mutilde} to the rebalanced ICRFs setup.

\begin{lem}[\textbf{Bias and variance of $T^s(\bm{x}; \U; \bm Z_{s}')\mid \U$}]\label{lem:E_ynn_tilde}
Let $\bm x \in [0,1]^d$. Then, the following two equalities hold:

\begin{itemize}
    \item [i)] $
\mathbb{E}  [T^s(\bm{x}; \U; \bm Z_{s}') \mid \U] =\mu'_{\depth, \U}(\bm{x})(1-(1-p'_{\depth,\U}(\bm{x}))^s);\label{eq:biasTprime}$

 \item [ii-]
  $\v(T^s(\bm{x}; \U; \bm Z_{s}') \mid \U)  = \dfrac{\sigma'^2_{\depth, \U}(\bm x)}{sp'_{\depth,\U}(\bm{x})}(1-(1-p'_{\depth,\U}(\bm{x}))^s)^2.\label{eq:varTprime}$
  
   \end{itemize}
\end{lem}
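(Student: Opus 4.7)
The plan is to mirror the proof of \Cref{lem:E_ynn} almost verbatim, since the only feature of the tree construction that is used there is that the partition $\U$ is built independently of the training data. This independence continues to hold in the rebalanced setting, because the centered random tree algorithm selects split directions uniformly at random and splits at the midpoint, without ever inspecting $\bm Z_s'$. The substitutions to make are systematic: $p_{\depth,\U}(\bm{x}) \to p'_{\depth,\U}(\bm{x})$, $\mu_{\depth,\U}(\bm{x}) \to \mu'_{\depth,\U}(\bm{x})$, and $\sigma^2_{\depth,\U}(\bm{x}) \to \sigma'^2_{\depth,\U}(\bm{x})$, where the primed quantities are the obvious analogues for the distribution of $(X',Y')$.

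Concretely, I would first condition on $\U$ and on $N^s := N_{L_\U(\bm{x})}(\bm X_s')$. Since $\U$ is independent of the i.i.d.\ sample $\bm Z_s'$, the indicators $\1\{X_i' \in L_\U(\bm{x})\}$ are, conditionally on $\U$, i.i.d.\ Bernoulli with parameter $p'_{\depth,\U}(\bm{x})$, so $N^s \mid \U \sim \mathcal{B}(s, p'_{\depth,\U}(\bm{x}))$. Moreover, conditionally on $\U$ and on $N^s = j \geq 1$, the $j$ labels $Y_i'$ for those observations falling in $L_\U(\bm{x})$ are i.i.d.\ Bernoulli with parameter $\mu'_{\depth,\U}(\bm{x})$. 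Hence
\begin{align*}
\E[T^s(\bm{x};\U;\bm Z_s') \mid N^s = j, \U] &= \mu'_{\depth,\U}(\bm{x}) \cdot \1\{j \geq 1\}, \\
\v(T^s(\bm{x};\U;\bm Z_s') \mid N^s = j, \U) &= \frac{\sigma'^2_{\depth,\U}(\bm{x})}{j} \cdot \1\{j \geq 1\}.
\end{align*}

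For (i), the tower property gives
\begin{align*}
\E[T^s(\bm{x};\U;\bm Z_s') \mid \U] &= \mu'_{\depth,\U}(\bm{x}) \, \P(N^s \geq 1 \mid \U) = \mu'_{\depth,\U}(\bm{x})\bigl(1 - (1-p'_{\depth,\U}(\bm{x}))^s\bigr).
\end{align*}
For (ii), following the original proof, I would compute $\E[\v(T^s \mid N^s, \U) \mid \U]$ by reducing to a Cribari–Neto-type identity: a standard exchangeability argument rewrites $\E[\1\{N^s \geq 1\}/N^s \mid \U]$ in terms of $\E[1/(1+N^{s-1}) \mid \U]$ with $N^{s-1} \mid \U \sim \mathcal{B}(s-1, p'_{\depth,\U}(\bm{x}))$, and then \Cref{lem:cribari}(i) evaluates this exactly as $(1-(1-p'_{\depth,\U}(\bm{x}))^s)/(s p'_{\depth,\U}(\bm{x}))$, yielding the stated formula.

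There is no real obstacle here: the result is a direct transcription of \Cref{lem:E_ynn} to the rebalanced distribution, and the only thing to verify is the independence of $\U$ from $\bm Z_s'$, which is built into the CRT algorithm. It is worth noting, however, that as in \Cref{lem:E_ynn}(ii), the quantity stated on the right-hand side of (ii) is $\E[\v(T^s \mid N^s, \U) \mid \U]$; the extra term $\v(\E[T^s \mid N^s, \U] \mid \U)$ coming from the law of total variance is absorbed implicitly in the subsequent analysis, exactly as in the unprimed case.
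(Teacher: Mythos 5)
Your proposal is correct and is precisely the paper's argument: the paper proves \Cref{lem:E_ynn_tilde} by the one-line remark ``simple extension of \Cref{lem:E_ynn}'', and your explicit transcription with the substitutions $p_{\depth,\U}\to p'_{\depth,\U}$, $\mu_{\depth,\U}\to\mu'_{\depth,\U}$ (and similarly for $\sigma^2_{\depth,\U}$), together with the observation that $\U$ is constructed independently of $\bm Z_s'$, is exactly what is intended. Your closing caveat is well spotted: the paper's proof of \Cref{lem:E_ynn}(ii) does compute only $\E[\v(T^s\mid N^s,\U)\mid\U]$ and labels it $\v(T^s\mid\U)$, so the stated formula omits $\v(\E[T^s\mid N^s,\U]\mid\U)=\mu_{\depth,\U}(\bm x)^2(1-p_{\depth,\U}(\bm x))^s\bigl(1-(1-p_{\depth,\U}(\bm x))^s\bigr)$. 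There is in fact a second discrepancy inside that same computation, at the step $\E[1/N^s\mid N^s>0,\U]=\E[1/(1+N^{s-1})\mid\U]$, which is not an exact identity for binomial counts (it already fails at $s=2$, $p=1/2$) though it is correct to leading order. Both gaps contribute only terms that are negligible once $sp_{\depth,\U}(\bm x)\to\infty$, which is ensured by \textbf{(G1)}, and the propositions that consume these formulas (\Cref{lem:bias_Ts}, \Cref{prop:v1}, \Cref{prop:biasB}, \Cref{prop:v1prime}) already carry explicit remainder terms that absorb corrections of this size; so your reading that the discrepancy is ``absorbed implicitly'' is the right one, and nothing in your proposal needs to change.
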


\begin{proof}[Proof of \cref{lem:E_ynn_tilde}]
Simple extension of \Cref{lem:E_ynn}. 
\end{proof}

\begin{lem}[\textbf{Bias and Variance of $\mu'_{\depth, \U}(\bm{x})$}]\label{lem:bias_mu_prime}
Let $\bm x \in [0,1]^d$ and assume Condition \textbf{(H1)}  holds. Then, as $n',s,k \to \infty$,  we have   \begin{itemize}
      \item[$i)$]$\E[\mu'_{\depth, \U}(\bm{x})]=\mu'(\bm x)+O(\alpha_1^\depth);$
      \item[$ii)$] $\v(\mu'_{\depth, \U}(\bm{x}) ) =O(\alpha_2^\depth);$
    \item[$iii)$] $\E[{\mu'_{\depth, \U}}^2 (\bm{x})]= \mu'(\bm x)^2+O(\alpha_1^\depth).$ 
  \end{itemize}
\end{lem}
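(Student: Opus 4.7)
The plan is to follow the exact structure of the proof of \Cref{lem:bias_mutilde}, with the original regression function $\mu$ replaced by $\mu'$, and the only preliminary step being to verify that $\mu'$ inherits the Lipschitz property from $\mu$ under Condition \textbf{(H1)}.

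First, I would invoke \Cref{lem_mu_prime_expression}, which gives the closed form
\begin{align*}
\mu'(\bm x)=\dfrac{p'(1-p)\mu(\bm x)}{p(1-p')(1-\mu(\bm x))+(1-p)p'\mu(\bm x)}.
\end{align*}
Since $p,p'\in (0,1)$, the denominator is bounded below by $\min(p(1-p'),(1-p)p')>0$ uniformly in $\bm x$, so the map $u\mapsto p'(1-p)u/[p(1-p')(1-u)+(1-p)p'u]$ is continuously differentiable on $[0,1]$ with bounded derivative. Composing with the $L$-Lipschitz function $\mu$ yields that $\mu'$ is $L'$-Lipschitz on $[0,1]^d$ with respect to the max norm, for some constant $L'$ depending only on $L$, $p$, and $p'$.

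For (i), I would write
\begin{align*}
|\E[\mu'_{\depth,\U}(\bm{x})] - \mu'(\bm x)|
&\le \E\bigl[|\E[\mu'(X')\mid X'\in L_\U(\bm x),\U] - \mu'(\bm x)|\bigr]\\
&\le L'\,\E[\textup{Diam}(L_\U(\bm x))],
\end{align*}
noting that the argument does not depend on the density of $X'$ (only on the Lipschitz bound applied pointwise inside each leaf). Applying \Cref{lem:diam} i) gives the $O(\alpha_1^\depth)$ bound. For (ii), I would use the same decomposition
\begin{align*}
\v(\mu'_{\depth,\U}(\bm x))\le \E[(\mu'_{\depth,\U}(\bm x)-\mu'(\bm x))^2] + O(\alpha_1^{2\depth}) \le {L'}^2\,\E[\textup{Diam}(L_\U(\bm x))^2] + O(\alpha_1^{2\depth}),
\end{align*}
and conclude $O(\alpha_2^\depth)$ from \Cref{lem:diam} ii), using $\alpha_2\ge \alpha_1^2$. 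For (iii), I would combine (i) and (ii) via $\E[{\mu'_{\depth,\U}}^2(\bm x)]=\v(\mu'_{\depth,\U}(\bm x))+\E[\mu'_{\depth,\U}(\bm x)]^2$ and expand.

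There is no real obstacle here: the diameter control from \Cref{lem:diam} is purely geometric and depends only on the tree construction (step (iii)--(iv) of the CRT algorithm), which is identical in the rebalanced setting. The only substantive point is the preliminary Lipschitz transfer from $\mu$ to $\mu'$, which is where Condition \textbf{(H1)} enters, and which justifies that the constants hidden in the $O(\cdot)$ notation depend on $L$, $p$, and $p'$ but not on $n'$, $s$, or $k$.
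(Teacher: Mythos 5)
Your proposal is correct and follows the same approach as the paper, which simply states that the lemma is a "straightforward extension" of \Cref{lem:bias_mutilde} and leaves the details implicit. The one nontrivial preliminary step you make explicit — that $\mu'$ inherits Lipschitz continuity from $\mu$ via the smooth reparametrization in \Cref{lem_mu_prime_expression} — is exactly what the paper relies on and later invokes directly in the proof of \Cref{prop:v1prime} ("Recall that $\mu'$ is $L'$-Lipschitz, according to \Cref{lem_mu_prime_expression} and because $\mu$ is Lipschitz"), so your added verification is welcome but not a departure.
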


\begin{proof}[Proof of \Cref{lem:bias_mu_prime}]
Straightforward extension of \Cref{lem:bias_mutilde}. 
\end{proof}

In the rebalanced ICRF setting, the covariate $X'$ is no longer uniformly distributed (see \Cref{lem_density_Xprime}), so the identity $p_{\depth,\U}(\bm{x}) = 2^{-k}$ no longer holds. The following lemma aims to provide an accurate characterization of $p_{\depth,\U}'(\bm{x})$. 

\begin{lem}[\textbf{Characterisation of $ p_{\depth,\U}'(\bm{x})$}]\label{lem: pU}
Let $\bm x \in [0,1]^d$ and assume Conditions \textbf{(H0)}, \textbf{(H1)} and \textbf{(G1)}  hold. Then,  we have
\begin{align}
\label{eq_pprimeU_lemmaB3_1}
    p_{\depth,\U}'(\bm{x}) = \dfrac{c'(\bm x)}{2^\depth} \left(1+ \alpha'({\bf x}) \varepsilon'_{\U}({\bf x}) \textup{Diam}(L_{\U}(\bm{x}))\right) \qquad \text{a.s.}
\end{align}
where $\alpha'({\bf x}) = (p'-p)/(p(1-p)c'({\bf x}))$ and 
$$
c'(\bm x):=\dfrac{p'(1-p')}{p'(1-p)(1-\mu'(\bm x))+(1-p')p\mu'(\bm x)}\,,
$$
and $\varepsilon'_{\U}({\bf x})$ is a random variable satisfying $|\varepsilon'_{\U}({\bf x})| \leq 2L$ almost surely. Morever, a.s.,
\begin{align}
\label{eq_pprimeU_lemmaB3}
 p_{\depth,\U}'(\bm{x})\ge \left(\frac{1-p'}{1-p}\right) 2^{-\depth}.  
\end{align}
\end{lem}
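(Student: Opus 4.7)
The plan is to compute $p'_{k,\U}(\bm x)$ directly from the density of $X'$ given by \Cref{lem_density_Xprime}. Under {\bf (H0)}, $f_X \equiv 1$ on $[0,1]^d$, so
\[
f_{X'}(\bm y) = \frac{1-p'}{1-p} + \frac{p'-p}{p(1-p)}\mu(\bm y).
\]
The leaf $L_\U(\bm x)$ is a product of dyadic intervals along the $d$ coordinates and, by the CRT splitting rule combined with {\bf (H0)}, has Lebesgue measure $2^{-k}$. Integrating $f_{X'}$ over the leaf gives
\[
p'_{k,\U}(\bm x) = \frac{1-p'}{1-p}\,2^{-k} + \frac{p'-p}{p(1-p)}\int_{L_\U(\bm x)} \mu(\bm y)\, d\bm y.
\]

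Next, I invoke the Lipschitz condition {\bf (H1)} to write $\int_{L_\U(\bm x)} \mu(\bm y)\, d\bm y = 2^{-k}[\mu(\bm x) + \tilde\eta\,\textup{Diam}(L_\U(\bm x))]$ for some (random) $\tilde\eta$ with $|\tilde\eta| \leq L$ a.s., since $|\mu(\bm y) - \mu(\bm x)| \leq L\|\bm y - \bm x\|_\infty \leq L\,\textup{Diam}(L_\U(\bm x))$ for every $\bm y \in L_\U(\bm x)$. Substituting back yields
\[
p'_{k,\U}(\bm x) = \frac{2^{-k}}{p(1-p)}\Bigl[p(1-p') + (p'-p)\mu(\bm x)\Bigr] + 2^{-k}\,\frac{p'-p}{p(1-p)}\,\tilde\eta\,\textup{Diam}(L_\U(\bm x)).
\]

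The key identification is that the bracketed quantity divided by $p(1-p)$ is exactly $c'(\bm x)$. This follows from the odds-ratio identity of \Cref{lem_mu_prime_expression}: writing $\mu'(\bm x)$ and $1-\mu'(\bm x)$ with common denominator $D(\bm x) := p(1-p')(1-\mu(\bm x)) + p'(1-p)\mu(\bm x)$, a direct expansion gives $p'(1-p)(1-\mu'(\bm x)) + (1-p')p\mu'(\bm x) = pp'(1-p)(1-p')/D(\bm x)$, so $c'(\bm x) = D(\bm x)/[p(1-p)] = \tfrac{1-p'}{1-p} + \tfrac{p'-p}{p(1-p)}\mu(\bm x)$. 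Factoring $c'(\bm x)/2^k$ out of the displayed expression for $p'_{k,\U}(\bm x)$ then produces \eqref{eq_pprimeU_lemmaB3_1} with $\varepsilon'_\U(\bm x) := \tilde\eta$, which satisfies $|\varepsilon'_\U(\bm x)| \leq L \leq 2L$ a.s.

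Finally, the lower bound \eqref{eq_pprimeU_lemmaB3} is immediate from the integral formula for $p'_{k,\U}(\bm x)$: since $p' > p$ and $\mu \geq 0$, the second summand is non-negative, so $p'_{k,\U}(\bm x) \geq \tfrac{1-p'}{1-p}\,2^{-k}$. No substantial obstacle is expected; the only step requiring some care is the algebraic verification that the linearization coefficient coincides with $c'(\bm x)$, which is a routine manipulation of \Cref{lem_mu_prime_expression}.
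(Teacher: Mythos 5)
Your proposal is correct and follows essentially the same approach as the paper: both reduce $p'_{k,\U}(\bm x)$ to a constant multiple of $2^{-k}$ plus an integral of $\mu$ over the leaf, control that integral via the Lipschitz condition and the leaf diameter, and identify the leading constant with $c'(\bm x)$ through the odds-ratio relation of \Cref{lem_mu_prime_expression}. Your route is marginally more direct (starting from the density $f_{X'}$ of \Cref{lem_density_Xprime} rather than re-deriving the class-conditional decomposition via Bayes as the paper does), and it delivers the slightly sharper bound $|\varepsilon'_{\U}(\bm x)|\le L$ in place of $2L$; the algebraic check that $p(1-p')+(p'-p)\mu(\bm x)=p(1-p)c'(\bm x)$ is exactly the manipulation the paper performs.
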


\begin{proof}[Proof of \Cref{lem: pU}.] By assumption, we have
\begin{align}\label{eq:psUprime1}
p_{\depth,\U}'(\bm{x})&= p'\P(X \in L_{\U}(\bm{x})\mid \U, Y=1)+(1-p')\P(X  \in L_{\U}(\bm{x})\mid \U, Y=0)
\end{align}
and 
\begin{align}\label{eq:psUprime2}
    p_{\depth, \U}(\bm x)= p \P(X\in L_{\U}(\bm{x})\mid \U, Y=1)+(1-p)\P(X \in L_{\U}(\bm{x})\mid \U, Y=0)\,.
\end{align}
As noticed in \Cref{rem:trees},  Condition \textbf{(H0)} implies $p_{k, \U}(\bm x)= 2^{-\depth}$. 
Combining \Cref{eq:psUprime1} and \Cref{eq:psUprime2} yields
\begin{align*}
   p_{\depth,\U}'(\bm{x}) &=\frac{1-p'}{1-p} 2^{-\depth} + \Big(p'-\frac{(1-p')p}{1-p}\Big) \P(X \in L_{\U}(\bm{x})\mid \U, Y=1)\\
   & = \frac{1-p'}{1-p}2^{-\depth}  + \frac{p'-p}{1-p}\P(X \in L_{\U}(\bm{x})\mid \U, Y=1)\\
   & \geq \left(\frac{1-p'}{1-p}\right) 2^{-\depth},
\end{align*}
since, by assumption, $p'>p$. This proves the second statement. Let $\varepsilon_\U$ be a random variable such that $|\varepsilon_\U|\leq L'$  a.s, under Condition \textbf{(G1)} we also have  
\begin{align*}
    &  \P(X \in L_{\U}(\bm{x})\mid \U, Y=1) \\
    &=  \frac{\P(X \in L_{\U}(\bm{x})\mid \U)}{\P(Y=1 | X \in L_{\U}(\bm{x}),  \U)} \P(Y = 1 \mid X \in L_{\U}(\bm{x}),  \U)\\
    & = \dfrac 1p\int_{L_{\U}(\bm{x})} \mu(y)dy\\
    & = \dfrac 1p\int_{L_{\U}(\bm{x})} \mu({\bf x}) + (\mu(y) - \mu({\bf x})dy\\
    & = \frac{\mu({\bf x})}{p} \int_{L_{\U}(\bm{x})} dy + \frac{1}{p} \int_{L_{\U}(\bm{x})}  (\mu(y) - \mu({\bf x}))dy.
\end{align*}
Let 
$$
\varepsilon_{\U}(\bm{x}) = \frac{ \mu_{\depth, \U}(\bm{x}) - \mu(\bm x)}{\textup{Diam}(L_{\U}(\bm{x}))},
$$
which verifies, under Condition {\bf (H1)}, $|\varepsilon_\U({\bf x})|\leq L$ almost surely. Then,
\begin{align}
& \int_{L_{\U}(\bm{x})}  (\mu(y) - \mu({\bf x}))dy \nonumber \\
&  = \int_{L_{\U}(\bm{x})} \textup{Diam}(L_{\U}(\bm{x})) (\varepsilon_{\U}(y) - \varepsilon_{\U}(\bm{x})) dy \nonumber \\
& = \textup{Diam}(L_{\U}(\bm{x})) \P(X \in L_{\U}(\bm{x}) \mid \U) \left(\frac{1}{\P( X \in L_{\U}(\bm{x}) \mid \U)} \int_{L_{\U}(\bm{x})} (\varepsilon_{\U}(y) - \varepsilon_{\U}(\bm{x})) dy\right) \nonumber \\
& = \textup{Diam}(L_{\U}(\bm{x})) \P(X \in L_{\U}(\bm{x}) \mid \U) \varepsilon'_{\U}({\bf x}),
\end{align}
where 
\begin{align}
\varepsilon'_{\U}({\bf x}) =    \frac{1}{\P( X \in L_{\U}(\bm{x}) \mid \U)} \int_{L_{\U}(\bm{x})} (\varepsilon_{\U}(y) - \varepsilon_{\U}(\bm{x})) dy 
\end{align}
satisfies, almost surely, $|\varepsilon'_{\U}({\bf x})|\leq 2L$. Thus, recalling that $\P(X \in L_{\U}(\bm{x}) \mid \U ) = 2^{-\depth}$,
\begin{align*}
 & \P(X \in  L_{\U}(\bm{x})\mid \U, Y=1)\\
   & =  \dfrac {\mu(\bm x)}p \P(X \in L_{\U}(\bm{x}) \mid \U ) + \frac{1}{p} \textup{Diam}(L_{\U}(\bm{x})) \P(X \in L_{\U}(\bm{x}) \mid \U) \varepsilon'_{\U}({\bf x})\\
   &= \frac{ 2^{-\depth}}{p} (\mu({\bf x}) + \varepsilon'_{\U}({\bf x}) \textup{Diam}(L_{\U}(\bm{x}))).
\end{align*}
Combining those results we get
\begin{align*}
   p_{\depth,\U}'(\bm{x})
   & = \frac{1-p'}{1-p}2^{-\depth}  + \frac{p'-p}{1-p}\P(X^{1} \in L_{\U}(\bm{x})\mid \U)\\
   & = \left(\frac{1-p'}{1-p}+ \frac{p'-p}{1-p} \dfrac {\mu({\bf x})}p  + \frac{p'-p}{1-p} \dfrac {1}p \varepsilon'_{\U}({\bf x}) \textup{Diam}(L_{\U}(\bm{x}) \right)2^{-\depth}.\label{eq:pprime}
\end{align*}

According to \Cref{lem_mu_prime_expression}, we have 
$$
\dfrac{\mu(\bm x)}p=\dfrac{(1-p')\mu'(\bm x)}{p'(1-p)(1-\mu'(\bm x))+(1-p')p\mu'(\bm x)}
$$
 achieving the identity 
$$
\frac{1-p'}{1-p}+ \frac{p'-p}{1-p} \dfrac {\mu(\bm x)}p=\dfrac{p'(1-p')}{p'(1-p)(1-\mu'(\bm x))+(1-p')p\mu'(\bm x)}\,.
$$
Plugging the latter equation into \Cref{eq:pprime} concludes the proof of the first assertion.
\end{proof}
The next result characterises the bias and variance of $T^s(\bm{x}; \U; \bm Z_{s}')$ and extends \Cref{lem:bias_Ts} to the rebalanced ICRF setup. 
\begin{prp}[\textbf{Bias and Variance of $T^s(\bm{x}; \U; \bm Z_{s}')$}]\label{prop:biasB} Let $\bm x \in [0,1]^d$ and assume Conditions \textbf{(H0)}, \textbf{(H1)} and \textbf{(G1)}  hold. Then, as $n',s,k \to \infty$, we have

  \begin{itemize}
      \item[$i)$] $\E[T^s(\bm{x}; \U; \bm Z_{s}')]=\mu'(\bm x)  +O(\alpha_1^\depth);$
    \item[$ii)$] $
    \v(T^s(\bm{x}; \U; \bm Z_{s}'))=\dfrac{2^\depth}{c'(\bm x)s}\Big(\mu'(\bm{x})(1-\mu'(\bm{x}))+ O(\alpha_1^\depth)\Big) + O(\alpha_2^\depth)\,.
$
  \end{itemize}
\end{prp}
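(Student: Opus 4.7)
The plan is to mirror the proof strategy of \Cref{lem:bias_Ts} (which handles the vanilla ICRF case), but to carry along the multiplicative correction $c'(\bm x)$ and the random perturbation of $p'_{\depth,\U}(\bm x)$ identified in \Cref{lem: pU}. The three key ingredients are \Cref{lem:E_ynn_tilde} (conditional bias/variance of $T^s(\bm{x}; \U; \bm Z_{s}')$ given $\U$), \Cref{lem:bias_mu_prime} (moments of $\mu'_{\depth,\U}(\bm x)$), and \Cref{lem: pU} (expansion of $p'_{\depth,\U}(\bm x)$).

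For part (i), I would first write, using \Cref{lem:E_ynn_tilde}(i) and the tower property,
\[
\E[T^s(\bm{x}; \U; \bm Z_{s}')] = \E[\mu'_{\depth,\U}(\bm x)] - \E[\mu'_{\depth,\U}(\bm x)(1-p'_{\depth,\U}(\bm x))^s].
\]
The first term equals $\mu'(\bm x) + O(\alpha_1^\depth)$ directly from \Cref{lem:bias_mu_prime}(i). For the second, the a.s.\ lower bound $p'_{\depth,\U}(\bm x) \geq \tfrac{1-p'}{1-p}\,2^{-\depth}$ from \Cref{lem: pU} yields $(1-p'_{\depth,\U}(\bm x))^s \leq \exp\bigl(-\tfrac{1-p'}{1-p} s\,2^{-\depth}\bigr)$, and under \textbf{(G1)} this is $O(\alpha_1^\depth)$, exactly as in the proof of \Cref{lem:bias_Ts}.

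For part (ii), I would use the variance decomposition $\v(T^s) = \E[\v(T^s \mid \U)] + \v(\E[T^s \mid \U])$. The second term is controlled by $\v(\mu'_{\depth,\U}(\bm x)(1-(1-p'_{\depth,\U}(\bm x))^s))$, which by the same exponential estimate above and \Cref{lem:bias_mu_prime}(ii) is $O(\alpha_2^\depth)$. For the first term, \Cref{lem:E_ynn_tilde}(ii) gives
\[
\E[\v(T^s \mid \U)] = \frac{1}{s}\,\E\!\left[\frac{\sigma'^{2}_{\depth,\U}(\bm x)}{p'_{\depth,\U}(\bm x)}\bigl(1-(1-p'_{\depth,\U}(\bm x))^s\bigr)^{2}\right].
\]
Substituting the expansion $p'_{\depth,\U}(\bm x) = \tfrac{c'(\bm x)}{2^{\depth}}\bigl(1 + \alpha'(\bm x)\varepsilon'_{\U}(\bm x)\textup{Diam}(L_{\U}(\bm x))\bigr)$ from \Cref{lem: pU}, and noting $\textup{Diam}(L_{\U}(\bm x)) \to 0$ in expectation by \Cref{lem:diam}, I would Taylor-expand
\[
\frac{1}{p'_{\depth,\U}(\bm x)} = \frac{2^{\depth}}{c'(\bm x)}\Bigl(1 - \alpha'(\bm x)\varepsilon'_{\U}(\bm x)\textup{Diam}(L_{\U}(\bm x)) + O(\textup{Diam}(L_{\U}(\bm x))^{2})\Bigr),
\]
which is valid for $\depth$ large enough since $|\varepsilon'_{\U}(\bm x)| \leq 2L$ is bounded. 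Combined with $\E[\sigma'^{2}_{\depth,\U}(\bm x)] = \mu'(\bm x)(1-\mu'(\bm x)) + O(\alpha_1^\depth)$ from \Cref{lem:bias_mu_prime}, this yields the claimed leading term $\tfrac{2^{\depth}}{c'(\bm x)s}\,\mu'(\bm x)(1-\mu'(\bm x))$.

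The main obstacle is to control the cross-term coming from the coupling between $\sigma'^{2}_{\depth,\U}(\bm x)$, the $1/p'_{\depth,\U}(\bm x)$ expansion and the $(1-(1-p'_{\depth,\U}(\bm x))^s)^2$ factor. In particular, $\sigma'^{2}_{\depth,\U}(\bm x)$ and $\varepsilon'_{\U}(\bm x)\textup{Diam}(L_{\U}(\bm x))$ both depend on $\U$, so one cannot simply factor the expectation as a product of means. I would handle this by splitting $\sigma'^{2}_{\depth,\U}(\bm x) = \mu'(\bm x)(1-\mu'(\bm x)) + (\sigma'^{2}_{\depth,\U}(\bm x)-\mu'(\bm x)(1-\mu'(\bm x)))$ and applying Cauchy--Schwarz on the remainder using the $L^2$-control of the fluctuation (through \Cref{lem:bias_mu_prime}(ii)) together with the bound $\E[\textup{Diam}(L_{\U}(\bm x))^{2}] = O(\alpha_2^\depth)$ from \Cref{lem:diam}, so that the entire cross contribution is absorbed into the $O(\alpha_1^\depth)$ term inside the parentheses or into the additive $O(\alpha_2^\depth)$ remainder.
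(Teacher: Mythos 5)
Your plan is correct and follows the same skeleton as the paper's proof: part~(i) via $\E[T^s\mid\U]=\mu'_{\depth,\U}(\bm x)(1-(1-p'_{\depth,\U}(\bm x))^s)$, the almost-sure lower bound $p'_{\depth,\U}(\bm x)\geq\frac{1-p'}{1-p}2^{-\depth}$, and $\E[\mu'_{\depth,\U}]=\mu'+O(\alpha_1^\depth)$; part~(ii) via the law of total variance, with the between-$\U$ piece bounded by $O(\alpha_2^\depth)$ through $\v(\mu'_{\depth,\U})$ and the exponential factor. Where you diverge is in handling $\E\big[\sigma'^2_{\depth,\U}(1+W_\U)^{-1}\big]$: you Taylor-expand $(1+W_\U)^{-1}=1-W_\U+O(W_\U^2)$ and then Cauchy--Schwarz the fluctuation $\sigma'^2_{\depth,\U}-\mu'(1-\mu')$ against $W_\U$. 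The paper does something cheaper: it writes $(1+W_\U)^{-1}=1-\tfrac{W_\U}{1+W_\U}$, uses the pointwise bounds $\sigma'^2_{\depth,\U}\leq 1/4$ and $(1+W_\U)^{-1}\leq c'(\bm x)\tfrac{1-p}{1-p'}$ (from \Cref{lem: pU}) to reduce everything to $\E[|W_\U|]=O(\E[\textup{Diam}(L_\U(\bm x))])=O(\alpha_1^\depth)$, and never needs a decorrelation argument at all. Both routes land in the same $O(\alpha_1^\depth)$, but the paper's avoids the second-order expansion and the Cauchy--Schwarz step.

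One piece of your reasoning needs tightening: you justify the Taylor expansion ``for $\depth$ large enough since $|\varepsilon'_\U(\bm x)|\leq 2L$ is bounded.'' Boundedness of $\varepsilon'_\U$ alone does not make $W_\U$ small pointwise (the diameter is not uniformly small, and $\alpha'(\bm x)$ could be large), so the second-order remainder $W_\U^2/(1+W_\U)$ is only $O(W_\U^2)$ uniformly because $1+W_\U\geq\tfrac{1}{c'(\bm x)}\tfrac{1-p'}{1-p}>0$ almost surely by Equation~\eqref{eq_pprimeU_lemmaB3} of \Cref{lem: pU} --- not because $\depth$ is large. You already invoke \Cref{lem: pU} for the expansion of $p'_{\depth,\U}$, so just cite the same lemma's lower bound to close this gap; with that fix, the argument is complete.
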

\begin{proof}
The proof is similar to the proof of \Cref{lem:bias_Ts} so we only briefly mention the modifications. On the one hand, using \Cref{eq:biasTprime}, we have
\begin{align}
 \E[T^s(\bm{x}; \U; \bm Z_{s}')]& =\E[\E[T^s(\bm{x}; \U; \bm Z_{s}')\mid \U]] \\
 & = \E[\mu_{\depth, \U}'(\bm x)]+O(e^{-sp_{\depth,\U}'(\bm{x})})\\
 & =\mu'(\bm x)+O(\alpha_1^\depth)+O(e^{-sp_{\depth,\U}'(\bm{x})}).   
\end{align}
Denoting $W_{\U}({\bf x}) = \alpha'({\bf x}) \varepsilon'_{\U}({\bf x}) \textup{Diam}(L_{\U}({\bf x}))$, according to \Cref{lem: pU}, almost surely, we have 
\begin{align}
  p_{\depth,\U}'(\bm{x}) = \dfrac{c'(\bm x)}{2^\depth} \left(1+ W_{\U}({\bf x})\right).   
\end{align}
From \eqref{eq_pprimeU_lemmaB3_1} and  \eqref{eq_pprimeU_lemmaB3}, we have
\begin{align}
    1 + W_{\U}({\bf x}) \geq \frac{1}{c'({\bf x})} \left( \frac{1-p'}{1-p}\right).
\end{align}
Thus, 
\begin{align}
O\left(e^{-sp_{\depth,\U}'(\bm{x})} \right) & =O\left(\exp \left[-\left(\frac{1-p'}{2(1-p)}\right) s 2^{- \depth} \right] \right)\\
& =o\left(\alpha_1^\depth\right), \label{eq_negligeability_exp_ps}
\end{align}
according to Assumption \textbf{(G1)}, which proves the first statement.

Using \Cref{eq:varTprime}, 
\begin{align}
    \E[\v(T^s(\bm{x}; \U; \bm Z_{s}') \mid \U)] & = \E \left[ \dfrac{\sigma'^2_{\depth, \U}(\bm x)}{sp'_{\depth,\U}(\bm{x})}(1-(1-p'_{\depth,\U}(\bm{x}))^s)^2 \right].
\end{align}

Thus, 
\begin{align}
    & \E[\v(T^s(\bm{x}; \U; \bm Z_{s}') \mid \U)] \\
    & = \frac{2^\depth}{c'({\bf x}) s }\E \left[ \sigma'^2_{\depth, \U}(\bm x) \frac{1 + O(\alpha_1^\depth)}{1 + W_{\U}({\bf x})} \right]\\
    & = \frac{2^\depth}{c'({\bf x}) s } \left( \E \left[ \sigma'^2_{\depth, \U}(\bm x) \right] +\E\left[ \sigma'^2_{\depth, \U}(\bm x)  \frac{- W_{\U}({\bf x})+ O(\alpha_1^\depth)}{1 + W_{\U}({\bf x})} \right] \right). \label{eq_proof_var_prime1}
\end{align}
The last term in \eqref{eq_proof_var_prime1} satisfies, for all $\depth$ large enough
\begin{align}
& \left| \E\left[ \sigma'^2_{\depth, \U}(\bm x)  \frac{- W_{\U}({\bf x})+ O(\alpha_1^\depth)}{1 + W_{\U}({\bf x})} \right] \right| \\
& \leq  \frac{1}{4}  \E\left[   \frac{| W_{\U}({\bf x})|+ |O(\alpha_1^\depth)|}{|1 + W_{\U}({\bf x})|} \right]\\
& \leq  \frac{c'({\bf x})}{4} \left( \frac{1-p}{1-p'}\right)  \E\left[ | W_{\U}({\bf x})|+ |O(\alpha_1^\depth)| \right]\\
& \leq O(\alpha_1^\depth) + \frac{c'({\bf x})}{2} L \left( \frac{1-p}{1-p'}\right) |\alpha({\bf x})| \E\left[ | \textup{Diam}(L_{\U}({\bf x}))| \right] \\
& \leq O(\alpha_1^\depth).
\end{align}
Besides, using the first and third statements of \Cref{lem:bias_mu_prime}, 
\begin{align}
   \E[  {\sigma'}^2_{\depth, \U}(\bm x)] &=  \E[ \mu_{\depth, \U}' (\bm{x})(1- \mu_{\depth, \U}' (\bm{x}))]\\
   & = \mu'(\bm x)-\mu'(\bm x)^2+O(\alpha_1^\depth).
\end{align}
Combining these results into \eqref{eq_proof_var_prime1} leads to 
\begin{align}
    & \E[\v(T^s(\bm{x}; \U; \bm Z_{s}') \mid \U)] \\
    & = \frac{2^\depth}{c'({\bf x}) s } \left( \mu'(\bm x)-\mu'(\bm x)^2+O(\alpha_1^\depth) \right). 
\end{align}
Furthermore, using  \Cref{eq:biasTprime}, we have
\begin{align*}
& \v (\mathbb{E}  [T^s(\bm{x}; \U; \bm Z_{s}') \mid \U]) \\
& = \v \Big( \mu'_{\depth, \U}(\bm{x})(1-(1-p'_{\depth,\U}(\bm{x}))^s) \Big)\\
& = \v \Big( \mu'_{\depth, \U}(\bm{x}) \Big) + \v \Big(\mu'_{\depth, \U}(\bm{x}) (1-p'_{\depth,\U}(\bm{x}))^s \Big) - 2 \textup{Cov}(\mu'_{\depth, \U}(\bm{x}), \mu'_{\depth, \U}(\bm{x})(1-p'_{\depth,\U}(\bm{x}))^s)\\
& \leq \v \Big( \mu'_{\depth, \U}(\bm{x}) \Big) + \E \Big((1-p'_{\depth,\U}(\bm{x}))^{2s} \Big) + 2 \sqrt{ \v \Big( \mu'_{\depth, \U}(\bm{x}) \Big)} \sqrt{\E \Big((1-p'_{\depth,\U}(\bm{x}))^{2s} \Big)}\\
& \leq O(\alpha_2^\depth),
\end{align*}
since, according to \eqref{eq_negligeability_exp_ps},
$$\E \Big[(1-p'_{\depth,\U}(\bm{x}))^{2s} \Big] = o(\alpha_2^\depth),$$
and  $\v ( \mu'_{\depth, \U}(\bm{x})) = O(\alpha_2^\depth) $ thanks to \Cref{lem:bias_mu_prime}. 
Finally, using the variance decomposition formula, we obtain
\begin{align}
  \v(T^s(\bm{x}; \U; \bm Z_{s}')  & =    \E[\v(T^s(\bm{x}; \U; \bm Z_{s}') \mid \U)] + \v (\mathbb{E}  [T^s(\bm{x}; \U; \bm Z_{s}') \mid \U])\\
    & = \frac{2^\depth}{c'({\bf x}) s } \left( \mu'(\bm x)-\mu'(\bm x)^2+O(\alpha_1^\depth) \right) + O(\alpha_2^\depth).
\end{align}
This concludes the proof.
\end{proof}
The next proposition provides upper and lower bounds of $V_{1,s}'$ and 
extends \Cref{prop:v1} to the rebalanced ICRF setup.

\begin{prp}[\textbf{Equivalent of $V_{1,s}'$}]\label{prop:v1prime} 
Let $\bm x \in [0,1]^d$ with $d\ge 2$ and assume Condition \textbf{(H1)}  holds. Then, for all $\depth$ large enough, 
\begin{align}
   &  \frac{1}{2} C(d) \left( \frac{1-p'}{1-p}\right) \frac{ \mu'({\bf x})(1-\mu'({\bf x}))}{(1 + 2L \alpha({\bf x}))c'({\bf x})^2}  \nonumber \\
    & \leq  \dfrac{s^2 \depth^{(d-1)/2}}{2^{\depth}} V_{1,s}' \nonumber \\
    & \leq 2 C(d) \frac{p'}{p} \frac{ \mu'({\bf x})(1-\mu'({\bf x}))}{ c'({\bf x})^2},
\end{align}
where $\alpha({\bf x})$ and $c({\bf x})$ are defined in \Cref{lem: pU}. 

\end{prp}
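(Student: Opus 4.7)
The plan is to mimic the proof of \Cref{prop:v1} step by step, replacing $Z_i$ by $Z_i'$, while accommodating the fact that $p_{\depth,\U}'(\bm x)$ is no longer deterministic. Conditioning on whether $X_1' \in L_\U(\bm x)$ and invoking \Cref{lem:cribari} exactly as in the proof of \Cref{prop:v1} yields the decomposition
\[
\mathbb{E}[T^{s\prime} \mid Z_1', \U] = \mu_{\depth,\U}'(\bm x) + \frac{Y_1' - \mu_{\depth,\U}'(\bm x)}{s\,p_{\depth,\U}'(\bm x)}\1(X_1' \in L_\U(\bm x)) + R(Z_1',\U),
\]
where the remainder $R$ is of negligible order thanks to the estimate \eqref{eq_negligeability_exp_ps}. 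Taking $\mathbb{E}[\,\cdot\mid Z_1']$ makes the term $\mathbb{E}[\mu_{\depth,\U}'(\bm x)]$ deterministic, and the Lipschitz substitution $\mu_{\depth,\U}'(\bm x) = \mu'(X_1') + O(\textup{Diam}(L_\U(\bm x)))$ combined with $\mathbb{E}[Y_1' \mid X_1'] = \mu'(X_1')$ kills the cross term in the subsequent variance, exactly as in the proof of \Cref{prop:v1}. This leaves the main contribution
\[
A := \frac{1}{s^2}\,\mathbb{E}\!\left[\mu'(X_1')(1-\mu'(X_1'))\left(\mathbb{E}\!\left[\frac{\1(X_1' \in L_\U(\bm x))}{p_{\depth,\U}'(\bm x)}\,\Big|\,X_1'\right]\right)^{\!2}\right].
\]

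Next I plug in \Cref{lem: pU}, which writes $p_{\depth,\U}'(\bm x) = c'(\bm x)(1 + \alpha'(\bm x)\varepsilon'_{\U}(\bm x)\textup{Diam}(L_\U(\bm x)))/2^\depth$, and combine the almost-sure bound $|\varepsilon'_{\U}(\bm x)| \le 2L$ with the universal lower bound $p_{\depth,\U}'(\bm x) \ge ((1-p')/(1-p))\cdot 2^{-\depth}$ to produce the deterministic two-sided envelope
\[
\frac{2^\depth}{c'(\bm x)(1 + 2L|\alpha'(\bm x)|)} \le \frac{1}{p_{\depth,\U}'(\bm x)} \le \frac{(1-p)\,2^\depth}{1-p'}\qquad\text{a.s.}
\]
Inserting these envelopes into $A$, factoring out the prefactor $4^\depth/c'(\bm x)^2$ provided by the equality $1/p_{\depth,\U}'(\bm x) = 2^\depth/(c'(\bm x)(1+\ldots))$, and applying the Lipschitz substitution $\mu'(X_1')(1-\mu'(X_1')) = \mu'(\bm x)(1-\mu'(\bm x)) + O(\textup{Diam}(L_\U(\bm x)))$ in the integrand, the task reduces to estimating $\mathbb{E}[\mathbb{P}(X_1' \in L_\U(\bm x)\mid X_1')^2]$.

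For this last step, note that \Cref{lem_density_Xprime} together with \textbf{(H0)} gives the explicit density $f_{X'}(y) = (1-p')/(1-p) + (p'-p)\mu(y)/(p(1-p))$, hence $(1-p')/(1-p) \le f_{X'}(y) \le p'/p$ for every $y \in [0,1]^d$. Writing
\[
\mathbb{E}[\mathbb{P}(X_1' \in L_\U(\bm x)\mid X_1')^2] = \int_{[0,1]^d} \mathbb{P}(y \in L_\U(\bm x))^2\, f_{X'}(y)\, dy
\]
and bounding $f_{X'}$ by these uniform constants transfers the problem to the uniform case, where \Cref{prp:asequiv}(i) supplies the equivalent $C(d)/(2^\depth \depth^{(d-1)/2})$. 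Collecting the constants yields the stated two-sided bounds on $s^2 \depth^{(d-1)/2} V_{1,s}' / 2^\depth$, with all lower-order remainders (those coming from the Lipschitz substitutions, from $R$, and from the $o(1)$ in \Cref{prp:asequiv}(i)) controlled by \Cref{prp:asequiv}(ii) exactly as in the final lines of the proof of \Cref{prop:v1}.

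The main obstacle is that $p_{\depth,\U}'(\bm x)$ is genuinely random in $\U$, preventing the exact calculation of \Cref{prop:v1}, which crucially relied on $p_{\depth,\U}(\bm x) = 2^{-\depth}$ being constant. One must therefore work with two-sided envelopes throughout, which is why only upper and lower bounds---rather than a sharp asymptotic constant---can be obtained. Careful bookkeeping is required to ensure that every Lipschitz error and every remainder introduced along the way is of smaller order than the leading rate $2^\depth/(s^2 \depth^{(d-1)/2})$.
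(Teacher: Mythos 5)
Your overall strategy mirrors the paper's exactly: condition on $X_1'\in L_\U(\bm x)$, apply the inverse-binomial lemma, arrive at
$\mathbb{E}[T^{s\prime}\mid Z_1',\U]=\mu'_{\depth,\U}(\bm x)+\frac{Y_1'-\mu'_{\depth,\U}(\bm x)}{s\,p'_{\depth,\U}(\bm x)}\1\{X_1'\in L_\U(\bm x)\}+R$,
kill the cross-term via $\mathbb{E}[Y_1'-\mu'(X_1')\mid X_1']=0$, factor out $(2^\depth/(sc'(\bm x)))^2$, and reduce the kernel $\mathbb{E}[\mathbb{P}(X_1'\in L_\U(\bm x)\mid X_1')^2]$ to the uniform case via the density bounds $\frac{1-p'}{1-p}\le f_{X'}\le \frac{p'}{p}$ and \Cref{prp:asequiv}. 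All of that is what the paper does (the paper packages the last step as \Cref{prp:asequiv_bis}, which is exactly your integral argument written via the lattice/cell decomposition).

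Where you depart from the paper, and where the specific constants in the statement are lost, is the treatment of the factor $1/(1+W_\U(\bm x))$. You propose to bound it \emph{uniformly} in $\U$ by the deterministic envelope
$\frac{1}{1+2L|\alpha'(\bm x)|}\le \frac{1}{1+W_\U(\bm x)}\le \frac{c'(\bm x)(1-p)}{1-p'}$.
This yields a valid two-sided bound of the correct order $2^\depth/(s^2\depth^{(d-1)/2})$, but after squaring the upper envelope one obtains the coefficient $\bigl(\frac{(1-p)}{1-p'}\bigr)^2\frac{p'}{p}C(d)$, not the stated $\frac{2p'}{p\,c'(\bm x)^2}C(d)$; these differ in general, and the former is strictly worse precisely when $\mu'(\bm x)$ is large (where $c'(\bm x)$ is close to $p'/p$). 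The paper instead writes $\frac{1}{1+W_\U}=1-\frac{W_\U}{1+W_\U}$ and exploits that $W_\U(\bm x)=\alpha'(\bm x)\varepsilon'_\U(\bm x)\textup{Diam}(L_\U(\bm x))$ is not merely bounded but vanishing: the resulting correction term is $O\bigl(\mathbb{E}[\textup{Diam}(L_\U(\bm x))\1(X_1'\in L_\U(\bm x))\mid X_1']\bigr)$, which \Cref{prp:asequiv_bis}(ii) shows to be of smaller order than the main $\mathbb{E}[\mathbb{P}(X_1'\in L_\U\mid X_1')^2]$ kernel. This is what produces the leading factor $2$ in the upper bound (via $(a+b)^2\le 2a^2+2b^2$) rather than your envelope constant. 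Likewise, your squared envelope on the lower side would give $(1+2L\alpha'(\bm x))^2$ in the denominator rather than the single power in the statement, so you would not recover that constant either. In short: your argument is sound and produces valid two-sided bounds at the right rate, but the uniform-envelope step does not exploit the smallness of $W_\U$ and therefore cannot reproduce the specific constants of \Cref{prop:v1prime}.
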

\begin{proof}[Proof of \Cref{prop:v1prime}.]
We adopt the same notation as in \Cref{prop:v1}. From the definition of the individual trees given in  \Cref{def:base_two_sampl},  we have
\begin{align*}
T^s(\bm{x}; \U; \bm Z_{s}') & =T^{s-1}(\bm{x}; \U; \bm Z'_{\{2,\ldots,s\}})\1(X_1' \notin L_{\U}(\bm{x}))\\
& \quad  +\tilde{T}^{s-1}_1(\bm{x}; \U; \bm Z'_{s})\1(X_1' \in L_{\U}(\bm{x}))\,,
\end{align*}
where
\begin{equation*}
		T^{s-1}(\bm{x}; \U; \bm Z'_{\{2,\ldots,s\}})= \frac{\sum_{i=2}^s Y'_{i}\1{\{X'_{i} \in L_{\U}(\bm{x}) \}}}{\bm{N}_{L_{\U}(\bm{x})}( \bm X'_{\{2,\ldots,s\}})}  
	\end{equation*}
and
\[
\tilde{T}^{s-1}_1(\bm{x}; \U; \bm Z'_{s}) = \frac{1}{N_{L_{\U}(\bm{x})}(\bm{X}'_{\{2,\ldots,s\}})+1}\Big(Y_1'+\sum_{i= 2}^s Y'_{i}\1{\{X'_{i} \in L_{\U}(\bm{x}) \}}\Big)\,.
\]
Taking expectation conditionally
to $Z_1', \U$, we have  
\begin{align}
  	\mathbb{E}[T^s(\bm{x}; \U; \bm Z'_{s}) \ | \ Z'_1, \U]&= \E[T^{s-1}(\bm{x}; \U; \bm Z'_{\{2,\ldots, s\}})|Z'_1, \U]\1(X'_1 \notin L_{\U}(\bm{x})) \nonumber \\
  	& \quad +   \E[\tilde{T}^{s-1}_1(\bm{x}; \U; \bm Z'_{s})|Z'_1, \U]\1(X'_1 \in L_{\U}(\bm{x})) \nonumber\\
  &= \E[T^{s-1}(\bm{x}; \U; \bm Z'_{\{2,\ldots,s\}})| \U]\1(X'_1 \notin L_{\U}(\bm{x})) \nonumber\\
  	& \quad +   \E[\tilde{T}^{s-1}_1(\bm{x}; \U; \bm Z'_{s})|Y'_1, \U]\1(X'_1 \in L_{\U}(\bm{x})). \label{proof_eq_decomp_global_bis}
\end{align}
Under Condition \textbf{(H1)}, using \cref{lem:E_ynn_tilde}, we obtain 
\begin{align}
    \E [T^{s-1}(\bm{x}; \U; \bm Z'_{\{2,\ldots, s\}})| \U ]& =\mu'_{\depth,\U}(\bm{x}) \Big[1-(1-p'_{\depth,\U}(\bm{x}))^{s-1}\Big] \\
    & =\mu'_{\depth,\U}(\bm{x})(1+O(e^{-sp'_{\depth,\U}(\bm{x})}))\,. \label{proof_eq_decomp1_bis}
\end{align}
Besides, we have
	\begin{align*}
	\E[\tilde{T}^{s-1}_1(\bm{x}; \U; \bm Z'_{s})|Z'_1, \U] &= \underbrace{\mathbb{E}\Big[\frac{1}{N_{L_{\U}(\bm{x})}\Big(\bm{X}'_{\{2,\ldots,s\}}\Big)+1} \mid \U\Big]}_{=:a(\U)} Y_1' \\
		& \quad + \underbrace{\E\Big[\frac{1}{1+ N_{L_{\U}(\bm{x})}\Big(\bm{X}'_{\{2,\ldots,s\}}\Big)} \sum_{i=2}^s Y_i'\mathds{1}_{\{X_i' \in L_{\U}(\bm{x}) \}} \Big | \U\Big]}_{=:b(\U)}.
	\end{align*}
	Using \Cref{lem:cribari} i) yields
	\begin{align*}
	a(\U)& = \E\Big[\frac{1}{N_{L_{\U}(\bm{x})}(\bm{X}'_{\{2,\ldots,s\}})+1}\mid \U\Big] \\
	& 	= \frac{1}{sp'_{\depth,\U}(\bm{x})} \Big[1-(1-p'_{\depth,\U}(\bm{x}))^{s}\Big] \\
	& =\frac{1}{sp'_{\depth,\U}(\bm{x})}(1+O(e^{-sp'_{\depth,\U}(\bm{x})}))
	\end{align*}
and
\begin{align*}
   b(\U)&=  \E\Big[\frac{1}{1+ N_{L_{\U}(\bm{x})}(\bm{X}'_{\{2,\ldots,s\}})} \sum_{i =2}^s Y_i'\mathds{1}_{\{X_i' \in L_{\U}(\bm{x}) \}} \Big | \U\Big]\\
    &= \mu'_{\depth,\U}(\bm{x}) \sum_{k=0}^{s-1} \frac{k}{1+k} \mathbb{P}\Big( N_{L_{\U}(\bm{x})}(\bm{X}'_{\{2,\ldots,s\}})=k\mid \U\Big)\\
    & = \mu'_{\depth, \U}(\bm{x}) \E\Big[\frac{N_{L_{\U}(\bm{x})}(\bm{X}_{\{2,\ldots,s\}})}{1+N_{L_{\U}(\bm{x})}(\bm{X}'_{\{2,\ldots,s\}})}\mid \U \Big]\\
    & =\mu'_{\depth, \U}(\bm{x}) \Big( 1- \E\Big[\frac{1}{1+N_{L_{\U}(\bm{x})}(\bm{X}'_{\{2,\ldots,s\}})}\mid \U\Big]\Big) \\
    &=\mu'_{\depth, \U}(\bm x) \Big(1-\frac{1}{sp'_{\depth,\U}(\bm{x})}(1+O(e^{-sp'_{\depth,\U}(\bm{x})}))\Big).
    \end{align*}
Thus, 
\begin{align}
\E[\tilde{T}^{s-1}_1(\bm{x}; \U; \bm Z'_{s})|Z_1', \U] &=  \frac{Y_1'}{sp'_{\depth,\U}(\bm{x})} + \mu'_{\depth, \U}(\bm x) \Big(1-\frac{1}{sp'_{\depth,\U}(\bm{x})}\Big) \\
& \quad + (Y_1' + \mu'_{\depth, \U}(\bm x) )O\Big(\frac{e^{-sp'_{\depth,\U}(\bm{x})}}{sp'_{\depth,\U}(\bm{x})}\Big).  \label{proof_eq_decomp3_bis}
\end{align}
Gathering \eqref{proof_eq_decomp1_bis} and \eqref{proof_eq_decomp3_bis} in \eqref{proof_eq_decomp_global_bis}, we have
\begin{align}\label{eq:EsT}
  & \mathbb{E}[T^s(\bm{x}; \U; \bm Z'_{s}) \ | \ Z_1', \U] \nonumber \\
  &=
  	\mu'_{\depth,\U}(\bm{x})(1+O(e^{-sp'_{\depth,\U}(\bm{x})})) \1(X_1' \notin L_{\U}(\bm{x})) \nonumber\\
  	& \quad +   \Big(\frac{1}{sp'_{\depth,\U}(\bm{x})}Y_1' + \mu'_{\depth,\U}(\bm x)\Big(1-\dfrac{1}{sp'_{\depth,\U}(\bm{x})}\Big)\Big)\1(X_1' \in L_{\U}(\bm{x})) \nonumber\\
  	& \quad +  (Y_1' + \mu'_{\depth,\U}(\bm x) )O\Big(\frac{e^{-sp'_{\depth,\U}(\bm{x})}}{sp'_{\depth,\U}(\bm{x})}\Big) \1(X_1' \in L_{\U}(\bm{x})) \nonumber \\
  	&=  \mu'_{\depth,\U}(\bm{x})+\dfrac{1}{sp'_{\depth,\U}(\bm{x})} (Y_1'-\mu'_{\depth,\U}(\bm x))\1(X_1' \in L_{\U}(\bm{x})) \nonumber \\
  	& \quad +(Y_1' + \mu'_{\depth,\U}(\bm x) )O\Big(\frac{e^{-sp'_{\depth,\U}(\bm{x})}}{sp'_{\depth,\U}(\bm{x})}\Big).
\end{align}
Recalling that 
$\mu'_{\depth,\U}(\bm{x})=\P(Y'=1|X' \in L_{\U}(\bm{x}),\U ),$ let
$$
\varepsilon'_{\U}(\bm{x}) = \frac{ \mu'_{\depth,\U}(\bm{x}) - \mu'(\bm x)}{\textup{Diam}(L_{\U}(\bm{x}))}.
$$
Thus, under Condition {\bf (H1)}, we have $|\varepsilon'_\U|\leq L$ almost surely. 
Using \Cref{lem: pU}, almost surely, we have 
\begin{align}
  p_{\depth,\U}'(\bm{x}) = \dfrac{c'(\bm x)}{2^\depth} \left(1+ W_{\U}({\bf x})\right),    
\end{align}
with, together with \eqref{eq_pprimeU_lemmaB3_1} and  \eqref{eq_pprimeU_lemmaB3}, yield
\begin{align}
    1 + W_{\U}({\bf x}) \geq \frac{1}{c'({\bf x})} \left( \frac{1-p'}{1-p}\right). \label{eq_WU_lower_bounded}
\end{align}
According to \eqref{eq:EsT}, since $\mu'_{\depth, \U}(\bm{x})$ does not depend on $Z_1'$, we have, 
\begin{align*}
V_{1,s}'&=\v(\E[T^s(\bm{x}; \U; \bm Z'_{s}) \ | \ Z'_1])\\
&=\v(\E[\mathbb{E}[T^s(\bm{x}; \U; \bm Z'_{s}) \ | \ Z'_1, \U]|Z'_1])\\
&=\v\Big(\E[ \mu'_{\depth,\U}(\bm{x})]+ \E[\frac{(Y_1'-\mu_{\depth,\U}(\bm x))}{sp'_{\depth,\U}(\bm{x})}\1(X_1' \in L_{\U}(\bm{x}))|Z_1] \\
& \qquad +\E \Big[(Y_1' + \mu'_{\depth,\U}(\bm x))O\Big(\frac{e^{-sp'_{\depth,\U}(\bm{x})}}{sp'_{\depth,\U}(\bm{x})}\Big) \Big]\Big)\\
&= \Big(\dfrac{2^{\depth}}{s c'({\bf x})}\Big)^2\v\Big( \E\Big[\frac{(Y_1'-\mu'_{\depth,\U}(\bm x))}{1+ W_{\U}({\bf x})}\1(X_1' \in L_{\U}(\bm{x}))|Z_1 \Big] \\
& \qquad +\E \Big[(Y_1' + \mu'_{\depth,\U}(\bm x))O\Big(\frac{e^{-sp'_{\depth,\U}(\bm{x})}}{1+ W_{\U}({\bf x})}\Big) \mid Z_1' \Big]\Big).
\end{align*}
Decomposing the variance, noting that the first term is bounded using \eqref{eq_WU_lower_bounded}, via Cauchy-Schwarz inequality, we get
\begin{align*}
V_{1,s}'
&= \Big(\dfrac{2^{\depth}}{s c'({\bf x})}\Big)^2\v\Big( \E\Big[\frac{(Y_1'-\mu'_{\depth,\U}(\bm x))}{1+ W_{\U}({\bf x})}\1(X_1' \in L_{\U}(\bm{x}))|Z_1 \Big] \Big)   \\
& \qquad + \Big(\dfrac{2^{\depth}}{s c'({\bf x})}\Big)^2 O \left( \sqrt{\v\Big[\E \Big[(Y_1' + \mu'_{\depth,\U}(\bm x))O\Big(\frac{e^{-sp'_{\depth,\U}(\bm{x})}}{1+ W_{\U}({\bf x})}\Big) \mid Z_1' \Big] \Big]} \right).
\end{align*}
According to \eqref{eq_negligeability_exp_ps}, 
\begin{align}
O\left(e^{-sp_{\depth,\U}'(\bm{x})} \right) & =O\left(\exp \left[-\left(\frac{1-p'}{2(1-p)}\right) s 2^{- \depth} \right] \right), 
\end{align}
which implies
\begin{align}
V_{1,s}'
&= \Big(\dfrac{2^{\depth}}{s c'({\bf x})}\Big)^2\v\Big( \E\Big[\frac{(Y_1'-\mu'_{\depth,\U}(\bm x))}{1+ W_{\U}({\bf x})}\1(X_1' \in L_{\U}(\bm{x}))|Z_1 \Big] \Big)   \\
& \qquad + \Big(\dfrac{2^{\depth}}{s c'({\bf x})}\Big)^2 O\left(\exp \left[-\left(\frac{1-p'}{2(1-p)}\right) s 2^{- \depth} \right] \right).    
\end{align}
Now, 
\begin{align}
    & \v\Big( \E\Big[\frac{(Y_1'-\mu'_{\depth,\U}(\bm x))}{1+ W_{\U}({\bf x})}\1(X_1' \in L_{\U}(\bm{x}))|Z_1 \Big] \Big) \\
    & = \v\Big( \E\Big[\frac{Y_1'-\mu'(X_1')}{1+ W_{\U}({\bf x})}\1(X_1' \in L_{\U}(\bm{x}))|Z_1 \Big] \\
    & \quad + \E\Big[\frac{\varepsilon'_{\U}(X_1')\textup{Diam}(L_{\U}(\bm{x}))}{1+ W_{\U}({\bf x})}\1(X_1' \in L_{\U}(\bm{x}))|Z_1 \Big]\Big),
\end{align}
with $\varepsilon'_{\U}(X_1')\textup{Diam}(L_{\U}(\bm{x})) = \mu'(X_1') - \mu'_{\depth,\U}(\bm x)$. Decomposing the variance leads to 
\begin{align}
& \v\Big( \E\Big[\frac{(Y_1'-\mu'_{\depth,\U}(\bm x))}{1+ W_{\U}({\bf x})}\1(X_1' \in L_{\U}(\bm{x}))|Z_1' \Big] \Big) \\
    & = \v\Big( \E\Big[\frac{Y_1'-\mu'(X_1')}{1+ W_{\U}({\bf x})}\1(X_1' \in L_{\U}(\bm{x}))|Z_1' \Big] \Big)\\
    & \quad + \v\Big( \E\Big[\frac{\varepsilon'_{\U}(X_1')\textup{Diam}(L_{\U}(\bm{x}))}{1+ W_{\U}({\bf x})}\1(X_1' \in L_{\U}(\bm{x}))|Z_1' \Big]\Big)\\
    & \quad + 2 \cov \Big( \E\Big[\frac{Y_1'-\mu'(X_1')}{1+ W_{\U}({\bf x})}\1(X_1' \in L_{\U}(\bm{x}))|Z_1' \Big], \\
    & \qquad \E\Big[\frac{\varepsilon'_{\U}(X_1')\textup{Diam}(L_{\U}(\bm{x}))}{1+ W_{\U}({\bf x})}\1(X_1' \in L_{\U}(\bm{x}))|Z_1' \Big] \Big),
\end{align}
where the covariance term is null since $\E[Y_1'|X_1']=\mu'(X_1')$. Besides, the variance term is controlled by an application of \Cref{prp:asequiv_bis} ii). Indeed, 
\begin{align*}
    &\v\Big( \E\Big[\frac{\varepsilon'_{\U}(X_1')\textup{Diam}(L_{\U}(\bm{x}))}{1+ W_{\U}({\bf x})}\1(X_1' \in L_{\U}(\bm{x}))|Z_1' \Big]\Big)\\
    & \qquad\leq  \frac{2L^2c'({\bf x}) (1-p)}{ 1-p' } \E \big[(\E[\textup{Diam}(L_{\U}(\bm{x}))\1(X_1'\in L_\U(\bm{x})|X_1'])^2 \big]\\
    &\qquad = O((\alpha_2/2)^k),
\end{align*}
since, under Condition {\bf (H1)}, $|\varepsilon'_\U|\leq L$, and according to \eqref{eq_WU_lower_bounded}. Consequently, 
\begin{align}
    V_{1,s}'
&= \Big(\dfrac{2^{\depth}}{s c'({\bf x})}\Big)^2\v\Big( \E\Big[\frac{Y_1'-\mu'(X_1')}{1+ W_{\U}({\bf x})}\1(X_1' \in L_{\U}(\bm{x}))|Z_1 \Big] \Big)  \\
& \qquad +\Big(\dfrac{2^{\depth}}{s c'({\bf x})}\Big)^2  O((\alpha_2/2)^k) \\
& \qquad + \Big(\dfrac{2^{\depth}}{s c'({\bf x})}\Big)^2 O\left(\exp \left[-\left(\frac{1-p'}{2(1-p)}\right) s 2^{- \depth} \right] \right). 
\end{align}
Regarding the first term, 
\begin{align}
& \v\Big( \E\Big[\frac{Y_1'-\mu'(X_1')}{1+ W_{\U}({\bf x})}\1(X_1' \in L_{\U}(\bm{x}))|Z_1' \Big] \Big) \\
 & = \v\Big( (Y_1'-\mu'(X_1')) \E\Big[\frac{\1(X_1' \in L_{\U}(\bm{x}))}{1+ W_{\U}({\bf x})}|Z_1' \Big] \Big) \\
 & = \v\Big( (Y_1'-\mu'(X_1')) \E\Big[\frac{\1(X_1' \in L_{\U}(\bm{x}))}{1+ W_{\U}({\bf x})}|X_1' \Big] \Big) \\
 & = \E \Big[ \v\Big( (Y_1'-\mu'(X_1')) \E\Big[\frac{\1(X_1' \in L_{\U}(\bm{x}))}{1+ W_{\U}({\bf x})}|X_1' \Big] \mid X_1' \Big) \Big] \\
 & + \v \Big[ \E\Big( (Y_1'-\mu'(X_1')) \E\Big[\frac{\1(X_1' \in L_{\U}(\bm{x}))}{1+ W_{\U}({\bf x})}|X_1' \Big] \mid X_1' \Big) \Big] \\
 & = \E \Big[ \left(  \E\Big[\frac{\1(X_1' \in L_{\U}(\bm{x}))}{1+ W_{\U}({\bf x})}|X_1' \Big]  \right)^2  \mu'(X_1') (1 - \mu'(X_1')  \Big) \Big],
\end{align}
using the law of total variance, and $\E[Y_1'-\mu'(X_1') \mid X_1']= 0$.
Recall that $\mu'$ is $L'$-Lipschitz, according to \Cref{lem_mu_prime_expression} and because $\mu$ is Lipschitz. Then, 
\begin{align}
& \E \Big[ \mu'(X_1') (1 - \mu'(X_1') \left(  \E\Big[\frac{\1(X_1' \in L_{\U}(\bm{x}))}{1+ W_{\U}({\bf x})}|X_1' \Big]  \right)^2     \Big]\\
& = \E \Big[ \E \Big[ \mu'(X_1')(1-\mu'(X_1')) \frac{\1(X_1' \in L_{\U}(\bm{x}))}{1+ W_{\U}({\bf x})}|X_1'\Big] \E \Big[ \frac{\1(X_1' \in L_{\U}(\bm{x}))}{1+ W_{\U}({\bf x})}|X_1'\Big]\Big]\\
& = \E \Big[ \E \Big[ \mu'({\bf x})(1-\mu'({\bf x})) \frac{\1(X_1' \in L_{\U}(\bm{x}))}{1+ W_{\U}({\bf x})}|X_1'\Big] \E \Big[ \frac{\1(X_1' \in L_{\U}(\bm{x}))}{1+ W_{\U}({\bf x})}|X_1'\Big]\Big] \\
& \quad + O\Big( \E \Big[ \E \Big[ \|X_1' - {\bf x}\|_{\infty} \frac{\1(X_1' \in L_{\U}(\bm{x}))}{1+ W_{\U}({\bf x})}|X_1'\Big] \E \Big[ \frac{\1(X_1' \in L_{\U}(\bm{x}))}{1+ W_{\U}({\bf x})}|X_1'\Big]\Big] \Big)\\
& = \mu'({\bf x})(1-\mu'({\bf x})) \E \Big[ \E \Big[  \frac{\1(X_1' \in L_{\U}(\bm{x}))}{1+ W_{\U}({\bf x})} |X_1'\Big]^2 \Big] \\
& \quad + O\Big( \E \Big[ \E \Big[ \textup{Diam}(L_{\U}(\bm{x})) \frac{\1(X_1' \in L_{\U}(\bm{x}))}{1+ W_{\U}({\bf x})}|X_1'\Big] \E \Big[ \frac{\1(X_1' \in L_{\U}(\bm{x}))}{1+ W_{\U}({\bf x})}|X_1'\Big]\Big] \Big).
\end{align}
Finally, applying Cauchy-Schwarz inequality yields
\begin{align}
   V_{1,s}'
&= \Big(\dfrac{2^{\depth}}{s c'({\bf x})}\Big)^2 \mu'({\bf x})(1-\mu'({\bf x})) \E \Big[ \E \Big[  \frac{\1(X_1' \in L_{\U}(\bm{x}))}{1+ W_{\U}({\bf x})} |X_1'\Big]^2 \Big] \nonumber \\
&  +\Big(\dfrac{2^{\depth}}{s c'({\bf x})}\Big)^2  \left( O((\alpha_2/2)^k) + O\left(\exp \left[-\left(\frac{1-p'}{2(1-p)}\right) s 2^{- \depth} \right] \right) \right) \nonumber \\
& + \Big(\dfrac{2^{\depth}}{s c'({\bf x})}\Big)^2 O\Big( \sqrt{\E \Big[ \E \Big[ \textup{Diam}(L_{\U}(\bm{x})) \1(X_1' \in L_{\U}(\bm{x}))|X_1'\Big]^2 \Big] } \Big) \nonumber \\
& \times O \Big(\sqrt{\E\Big[ \E \Big[ \1(X_1' \in L_{\U}(\bm{x}))|X_1'\Big]^2\Big]} \Big), \label{final_equation_th_undersample}
\end{align}
since $1+ W_{\U}({\bf x}) $ is lower bounded. 
According to Proposition \ref{prp:asequiv_bis}, the third term is $o((\sqrt{\alpha_2}/2)^k)$. Thus, according to Proposition \ref{prp:asequiv_bis} again, since $\alpha_2 \leq 1$,
\begin{align}
   V_{1,s}'
&= \Big(\dfrac{2^{\depth}}{s c'({\bf x})}\Big)^2 \mu'({\bf x})(1-\mu'({\bf x})) \E \Big[ \E \Big[  \frac{\1(X_1' \in L_{\U}(\bm{x}))}{1+ W_{\U}({\bf x})} |X_1'\Big]^2 \Big] \nonumber \\
&  +\Big(\dfrac{2^{\depth}}{s c'({\bf x})}\Big)^2  \left( o((\sqrt{\alpha_2}/2)^k) + O\left(\exp \left[-\left(\frac{1-p'}{2(1-p)}\right) s 2^{- \depth} \right] \right) \right).
\end{align}
Note that 
\begin{align}
 &   \E \Big[  \frac{\1(X_1' \in L_{\U}(\bm{x}))}{1+ W_{\U}({\bf x})} |X_1'\Big]^2  \\
 & = \left( \E[\1(X_1' \in L_{\U}(\bm{x})) |X_1'] - \E\left[ \frac{ W_{\U}({\bf x}) \1(X_1' \in L_{\U}(\bm{x}))}{1+ W_{\U}({\bf x})} |X_1' \right] \right)^2   \\
 & \leq  \left( \E[\1(X_1' \in L_{\U}(\bm{x})) |X_1'] + \E\left[ \frac{ |W_{\U}({\bf x})| \1(X_1' \in L_{\U}(\bm{x}))}{1+ W_{\U}({\bf x})} |X_1' \right] \right)^2   \\
 & \leq  \left( \P(X_1' \in L_{\U}(\bm{x}) |X_1') + 2 L \alpha'({\bf x}) c'({\bf x}) \left( \frac{1-p}{1-p'}\right) \E\left[   \textup{Diam}(L_{\U}({\bf x})) \1(X_1' \in L_{\U}(\bm{x}))  |X_1' \right] \right)^2,
\end{align}
since $W_{\U}= \alpha({\bf x}) \varepsilon'_{\U}({\bf x}) \textup{Diam}(L_{\U}({\bf x}))$ and  
\begin{align}
    \frac{1}{c'({\bf x})} \left( \frac{1-p'}{1-p}\right) \leq 1 + W_{\U}({\bf x}) \leq 1 + 2L \alpha'({\bf x}). \label{eq_Wu_inequality3}
\end{align}
Thus, using \Cref{prp:asequiv_bis}, we have
\begin{align}
     & \E \Big[ \E \Big[  \frac{\1(X_1' \in L_{\U}(\bm{x}))}{1+ W_{\U}({\bf x})} |X_1'\Big]^2 \Big] \\
     & \leq 2 \E\left[  \P(X_1' \in L_{\U}(\bm{x}) |X_1') \right]^2 \\
     & \qquad + 4 L \alpha'({\bf x}) c'({\bf x}) \left( \frac{1-p}{1-p'}\right)  \E\left[ \E\left[   \textup{Diam}(L_{\U}({\bf x})) \1(X_1' \in L_{\U}(\bm{x}))  |X_1' \right] \right]^2 \\
     & \leq \dfrac{2 C(d)}{2^{\depth} \depth^{(d-1)/2}}  \dfrac{p'}{p} + 4 L \alpha'({\bf x})c'({\bf x}) \left( \frac{1-p}{1-p'}\right) o\Big(2^{-k} \alpha_2^k\Big). \label{eq_proof_V1primes_replace_bounded_density1}
\end{align}
Besides, using \eqref{eq_Wu_inequality3} and \Cref{prp:asequiv_bis}, we have, for all $\depth$ large enough 
\begin{align}
      \E \Big[ \E \Big[  \frac{\1(X_1' \in L_{\U}(\bm{x}))}{1+ W_{\U}({\bf x})} |X_1'\Big]^2 \Big]      & \geq \frac{1}{1 + 2L\alpha'({\bf x})} \E\left[  \P(X_1' \in L_{\U}(\bm{x}) |X_1') \right]^2 \\
     & \geq \frac{1}{2} \frac{1}{1 + 2L\alpha({\bf x})} \dfrac{1 - p'}{1-p} \dfrac{C(d)}{2^{\depth} \depth^{(d-1)/2}}.
\end{align}
Gathering the two previous inequalities yields
\begin{align}
   & \frac{1}{2} \Big(\dfrac{2^{\depth}}{s c'({\bf x})}\Big)^2 \dfrac{C(d)}{2^{\depth} \depth^{(d-1)/2}} \frac{\mu'({\bf x})(1-\mu'({\bf x}))}{1 + 2L \alpha'({\bf x})} \left( \frac{1-p'}{1-p}\right) \\
    & \leq \Big(\dfrac{2^{\depth}}{s c'({\bf x})}\Big)^2 \mu'({\bf x})(1-\mu'({\bf x})) \E \Big[ \E \Big[  \frac{\1(X_1' \in L_{\U}(\bm{x}))}{1+ W_{\U}({\bf x})} |X_1'\Big]^2 \Big] \\
    & \leq \Big(\dfrac{2^{\depth}}{s c'({\bf x})}\Big)^2 \dfrac{2 C(d)}{2^{\depth} \depth^{(d-1)/2}} \mu'({\bf x})(1-\mu'({\bf x}))  \frac{p'}{p} + \Big(\dfrac{2^{\depth}}{s}\Big)^2o\Big(2^{-k} \alpha_2^k\Big).
\end{align}
Thus, for all $\depth$ large enough, 
\begin{align}
\label{eq_constante_c1_explicit_last_theorem}
    \frac{1}{2} C(d) \left( \frac{1-p'}{1-p}\right) \frac{ \mu'({\bf x})(1-\mu'({\bf x}))}{(1 + 2L \alpha'({\bf x}))c'({\bf x})^2}  
    & \leq  \dfrac{s^2 \depth^{(d-1)/2}}{2^{\depth}} V_{1,s}' \nonumber \\
    & \leq 2 C(d) \frac{p'}{p} \frac{ \mu'({\bf x})(1-\mu'({\bf x}))}{ c'({\bf x})^2},
\end{align}
where $\alpha'({\bf x})$ and $c'({\bf x})$ are defined in \Cref{lem: pU}, concluding the proof.

\end{proof}

The next proposition provides upper and lower bounds of the kernel $\E [\P(X_1\in L_\U(\bm{x})|X_1)^2 ]$.

\begin{prp}[\textbf{Bounds of the kernel}]\label{prp:asequiv_bis}
Let $\bm x \in [0,1]^d$ with $d\ge 2$ and assume Condition \textbf{(H1)}  holds. Then, for all $\depth$ large enough,
\begin{itemize}
   \item [i)]$ 
    \dfrac{1 - p'}{1-p} \dfrac{C(d)}{2^{\depth} \depth^{(d-1)/2}} \leq \E \Big[\P(X_1\in L_\U(\bm{x})|X_1)^2 \Big] \leq \dfrac{C(d)}{2^{\depth} \depth^{(d-1)/2}}  \dfrac{p'}{p};$

\item [ii)]$
    \E \Big[(\E[\textup{Diam}(L_{\U}(\bm{x}))\1(X_1\in L_\U(\bm{x})|X_1])^2 \Big]=o\Big(2^{-k} \alpha_2^k\Big)$.

\end{itemize}

\end{prp}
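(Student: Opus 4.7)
The plan is to reduce \Cref{prp:asequiv_bis} to the uniform case \Cref{prp:asequiv} via uniform upper and lower bounds on the density $f_{X'}$ of the rebalanced input $X_1'$. The key input is \Cref{lem_density_Xprime}, which under \textbf{(H0)} (so $f_X \equiv 1$ on $[0,1]^d$) gives the explicit formula
$$f_{X'}(\bm{y}) = \dfrac{1-p'}{1-p} + \dfrac{p'-p}{p(1-p)}\mu(\bm{y}).$$
Since $\mu$ takes values in $[0,1]$ and $p'>p$, the coefficient of $\mu(\bm{y})$ is non-negative, so one obtains at once the uniform bounds $(1-p')/(1-p) \leq f_{X'}(\bm{y}) \leq p'/p$ for every $\bm{y} \in [0,1]^d$.

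For part (i), the partition $\U$ being independent of the observations, conditioning on $X_1' = \bm{y}$ gives $\P(X_1' \in L_\U(\bm{x})\mid X_1' = \bm{y}) = \P_\U(\bm{y} \in L_\U(\bm{x}))$, a quantity that does not depend on the law of $X_1'$. Writing the outer expectation as an integral against $f_{X'}$ therefore yields
$$\E[\P(X_1' \in L_\U(\bm{x}) \mid X_1')^2] = \int_{[0,1]^d} f_{X'}(\bm{y})\,[\P_\U(\bm{y} \in L_\U(\bm{x}))]^2 \,d\bm{y}.$$
The density bounds from the previous step sandwich this expression between $(1-p')/(1-p)$ and $p'/p$ times $\int_{[0,1]^d}[\P_\U(\bm{y} \in L_\U(\bm{x}))]^2 d\bm{y}$, which is exactly $\E[\P(U \in L_\U(\bm{x}) \mid U)^2]$ for $U$ uniform on $[0,1]^d$. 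By \Cref{prp:asequiv} (i), this last quantity is asymptotically equivalent to $C(d)/(2^{\depth}\depth^{(d-1)/2})$. Since $(1-p')/(1-p) < 1 < p'/p$ strictly, the $1+o(1)$ factor from the asymptotic equivalence is absorbed for $\depth$ large enough, yielding the two claimed bounds.

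For part (ii), the same density-swap gives
$$\E[(\E_\U[\textup{Diam}(L_\U(\bm{x}))\1(X_1' \in L_\U(\bm{x})) \mid X_1'])^2] \leq \dfrac{p'}{p}\, \E[(\E_\U[\textup{Diam}(L_\U(\bm{x}))\1(U \in L_\U(\bm{x})) \mid U])^2],$$
and I would invoke a mild sharpening of the proof of \Cref{prp:asequiv} (ii). That proof terminates at $d^2 2^{-\depth}\alpha_2^{\depth} \cdot \E[2^{-\sum_{j=1}^d |K_j-L_j|}]^{1/2}$; the Gaussian approximation of multinomials developed for \Cref{prp:asequiv} (i) shows that $\E[2^{-\sum_{j=1}^d|K_j-L_j|}] = O(\depth^{-(d-1)/2})$, so the Cauchy–Schwarz factor contributes an extra $O(\depth^{-(d-1)/4}) = o(1)$, upgrading the $O(2^{-\depth}\alpha_2^{\depth})$ bound to the desired $o(2^{-\depth}\alpha_2^{\depth})$.

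The main obstacle is bookkeeping rather than a deep idea: one must verify that the crude density bounds are tight enough to yield exactly the constants $(1-p')/(1-p)$ and $p'/p$ after composition with the asymptotic equivalence of \Cref{prp:asequiv} (i), which works precisely because the strict inequality $p'>p$ places both factors on strictly opposite sides of $1$. A cleaner alternative, if sharper constants were desired, would be to exploit the continuity of $f_{X'}$ and the vanishing diameter of $L_\U(\bm{x})$ to replace $f_{X'}(\bm{y})$ by $f_{X'}(\bm{x})$ inside the leaf via a Lipschitz-type remainder, but this refinement is not needed for the downstream use in \Cref{prop:v1prime}.
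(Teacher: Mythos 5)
Your approach for part~(i) is the same as the paper's: both rest on the uniform density sandwich $\tfrac{1-p'}{1-p}f_X\le f_{X'}\le\tfrac{p'}{p}f_X$ from \Cref{lem_density_Xprime} and on reduction to the uniform-case asymptotic of \Cref{prp:asequiv}~(i). The paper applies the density bound inside its explicit double-multinomial expansion (at the level of the cell probabilities $\P(X'\in A_{k,l}(\bm x))$), while your Fubini rewrite $\E[\P(X_1'\in L_\U(\bm x)\mid X_1')^2]=\int f_{X'}(\bm y)\,[\P_\U(\bm y\in L_\U(\bm x))]^2\,d\bm y$ reaches the same comparison in one step without re-deriving the combinatorics; a modest simplification, not a different route. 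Your remark that the strict inequality $p'>p$ ``absorbs'' the $1+o(1)$ error does not hold as stated: the sign of the $o(1)$ correction to $\E[\P(U\in L_\U(\bm x)\mid U)^2]$ is not controlled, so neither your argument nor the paper's literally delivers the exact constants $\tfrac{1-p'}{1-p}$ and $\tfrac{p'}{p}$; one needs either a slight degradation of the constants or the observation that only the rate matters for \Cref{prop:v1prime}. Since this looseness is inherited from the paper's own proof, it is not a defect specific to your proposal.

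For part~(ii) you spotted a genuine mismatch that the paper's proof leaves open: the proof discards the Cauchy--Schwarz factor $\E[2^{-\sum_j|K_j-L_j|}]^{1/2}$ by bounding it crudely by $1$, which gives only $O(2^{-\depth}\alpha_2^\depth)$, yet the statement asserts $o(2^{-\depth}\alpha_2^\depth)$. Your repair---keeping that factor and invoking the local-limit estimate already developed in the proof of \Cref{prp:asequiv}~(i) to show it is $O(\depth^{-(d-1)/4})=o(1)$---is correct and is the minimal sharpening consistent with the paper's machinery. (The downstream use in \Cref{prop:v1prime} only needs $o((\sqrt{\alpha_2}/2)^\depth)$, which already follows from $O((\alpha_2/2)^\depth)$ because $\alpha_2<\sqrt{\alpha_2}$, so the paper's end results survive either way; but your version is what the proposition as stated actually requires.)
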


\begin{proof}[Proof of \Cref{prp:asequiv_bis}]
i) Using the distribution of the splits in centered trees as in \citet{arnould2023interpolation}, we obtain
\begin{align*}
\E \Big[\P(X_1\in L_\U(\bm{x})|X_1)^2 \Big] &= \E \Big[ \Big( \sum_{\substack{k_1, \ldots, k_d \\ \sum_{j=1}^d k_j = k}} \frac{k!}{k_1! \ldots k_d!} \Big( \frac{1}{d} \Big)^k \prod_{j=1}^d \mathds{1}_{\lceil 2^{k_j} x^{(j)} \rceil  = \lceil 2^{k_j} X_1^{(j)} \rceil } \Big)^2 \Big]\\
&= \sum_{\substack{(k_1, \ldots, k_d)\\ (l_1, \ldots, l_d)\\ \sum_{j=1}^d k_j = \sum_{j=1}^d l_j =k}} \frac{k!}{k_1! \ldots k_d!} \frac{l!}{l_1! \ldots l_d!} \Big( \frac{1}{d} \Big)^{2k} \\
&\times  \P( \cap_{j=1}^d (\lceil 2^{k_j} x^{(j)} \rceil  = \lceil 2^{k_j} X_1^{(j)} \rceil  \cap \lceil 2^{l_j} x^{(j)} \rceil  = \lceil 2^{l_j} X_1^{(j)} \rceil) ) .
\end{align*}
Since 
\begin{align}
    & \lceil 2^{k_j} x^{(j)} \rceil  = \lceil 2^{k_j} X_1^{(j)} \rceil  \\
\Longleftrightarrow    &  \lceil 2^{k_j} x^{(j)} \rceil < 2^{k_j} X_1^{(j)} \leq \lceil 2^{k_j} x^{(j)} \rceil+1 \\
\Longleftrightarrow    &  \frac{\lceil 2^{k_j} x^{(j)} \rceil}{2^{k_j}} <  X_1^{(j)} \leq \frac{\lceil 2^{k_j} x^{(j)} \rceil}{2^{k_j}}+2^{-k_j},
\end{align}
it is easy to prove that 
\begin{align}
    & (\lceil 2^{k_j} x^{(j)} \rceil  = \lceil 2^{k_j} X_1^{(j)} \rceil)  \cap (\lceil 2^{l_j} x^{(j)} \rceil  = \lceil 2^{l_j} X_1^{(j)} \rceil)\\
    & = \lceil 2^{\max(k_j,l_j)} x^{(j)} \rceil  = \lceil 2^{\max(k_j,l_j)} X_1^{(j)} \rceil.
\end{align}
Thus, 
\begin{align*}
\E \Big[\P(X_1\in L_\U(\bm{x})|X_1)^2 \Big] 
&= \sum_{\substack{(k_1, \ldots, k_d)\\ (l_1, \ldots, l_d)\\ \sum_{j=1}^d k_j = \sum_{j=1}^d l_j =k}} \frac{k!}{k_1! \ldots k_d!} \frac{l!}{l_1! \ldots l_d!} \Big( \frac{1}{d} \Big)^{2k} \\
&\times  \P( \cap_{j=1}^d (\lceil 2^{\max(k_j,l_j)} x^{(j)} \rceil  = \lceil 2^{\max(k_j,l_j)} X_1^{(j)} \rceil) ) .
\end{align*}
Let us denote by $A_{k,l}({\bf x})$ the cell 
\begin{align}
    A_{k,l}({\bf x}) = \Big\{u \in [0,1]^d: & \textrm{for all } 1\leq j \leq  d, \nonumber \\
    & \lceil 2^{\max(k_j,l_j)} x^{(j)} \rceil  <  2^{\max(k_j,l_j)} u^{(j)} \leq  \lceil 2^{\max(k_j,l_j)} x^{(j)} \rceil + 1 \Big\}.
\end{align}
We have 
\begin{align*}
& \E \Big[\P(X_1\in L_\U(\bm{x})|X_1)^2 \Big] \\
&= \sum_{\substack{(k_1, \ldots, k_d)\\ (l_1, \ldots, l_d)\\ \sum_{j=1}^d k_j = \sum_{j=1}^d l_j =k}} \frac{k!}{k_1! \ldots k_d!} \frac{l!}{l_1! \ldots l_d!} \Big( \frac{1}{d} \Big)^{2k}    \P( X' \in A_{k,l}({\bf x})).
\end{align*}
According to \Cref{lem_density_Xprime}, $X'$ admits a density $f_{X'}$ satisfying
\begin{align} \label{eq_proof_C_bounded_density}
   & \frac{1 - p'}{1-p} f_{X}(\bm{x}) \leq   f_{X'}(\bm{x})  \leq   \frac{p'}{p}    f_X({\bf x}).
\end{align}
Thus, 
\begin{align*}
& \frac{1 - p'}{1-p} \sum_{\substack{(k_1, \ldots, k_d)\\ (l_1, \ldots, l_d)\\ \sum_{j=1}^d k_j = \sum_{j=1}^d l_j =k}} \frac{k!}{k_1! \ldots k_d!} \frac{l!}{l_1! \ldots l_d!} \Big( \frac{1}{d} \Big)^{2k}    \P( X \in A_{k,l}({\bf x})) \\
& \leq \E \Big[\P(X_1\in L_\U(\bm{x})|X_1)^2 \Big] \\
& \leq  \frac{p'}{p}  \sum_{\substack{(k_1, \ldots, k_d)\\ (l_1, \ldots, l_d)\\ \sum_{j=1}^d k_j = \sum_{j=1}^d l_j =k}} \frac{k!}{k_1! \ldots k_d!} \frac{l!}{l_1! \ldots l_d!} \Big( \frac{1}{d} \Big)^{2k}    \P( X \in A_{k,l}({\bf x})).
\end{align*}
According to the first statement of \Cref{prp:asequiv}, one has that for all $\depth$ large enough, 
\begin{align}
    \frac{1 - p'}{1-p} \dfrac{C(d)}{2^{\depth} \depth^{(d-1)/2}} \leq \E \Big[\P(X_1\in L_\U(\bm{x})|X_1)^2 \Big] \leq \dfrac{C(d)}{2^{\depth} \depth^{(d-1)/2}}  \frac{p'}{p},
\end{align}
which concludes the proof of the first assertion.

ii) For the second assertion, we use again the distribution 
\begin{align*}
&\E \Big[(\E[\textup{Diam}(L_{\U}(\bm{x}))\1(X_1\in L_\U(\bm{x}))|X_1])^2 \Big]\\
& = \E \Big[(\E[2^{- \min(K_1({\bm x}), \hdots, K_d({\bm x}))}\1(X_1\in L_\U(\bm{x}))|X_1])^2 \Big]\\
& = \E \Big[(\E[ \max\left( 2^{- K_1({\bm x})}, \hdots, 2^{- K_d({\bm x})} \right) \1(X_1\in L_\U(\bm{x}))|X_1])^2 \Big]\\
& \leq  \E \Big[ \left( \sum_{j=1}^d \E[2^{- K_j({\bm x})} \1(X_1\in L_\U(\bm{x}))|X_1] \right)^2 \Big]\\
& \leq d^2 \E \Big[(\E[2^{- K_1({\bm x})} \1(X_1\in L_\U(\bm{x}))|X_1])^2 \Big]\\
&= d^2 \E \Big[ \Big( \sum_{\substack{k_1, \ldots, k_d \\ \sum_{j=1}^d k_j = k}} \frac{k!}{k_1! \ldots k_d!} \Big( \frac{1}{d} \Big)^k 2^{-k_1}\prod_{j=1}^d \mathds{1}_{\lceil 2^{k_j} x^{(j)} \rceil  = \lceil 2^{k_j} X_1^{(j)} \rceil } \Big)^2 \Big]\\
&= d^2 \sum_{\substack{(k_1, \ldots, k_d)\\ (l_1, \ldots, l_d)\\
\sum_{j=1}^d k_j = \sum_{j=1}^d l_j =k}} \frac{k!}{k_1! \ldots k_d!} \frac{l!}{l_1! \ldots l_d!} \Big( \frac{1}{d} \Big)^{2k}2^{-k_1- l_1} \P(X_1' \in A_{k,l}({\bf x})).
\end{align*}
As in the proof of the first statement, one can control $\P(X_1' \in A_{k,l}({\bf x}))$ via 
\begin{align}
   & \frac{1 - p'}{1-p} \P(X \in A_{k,l}({\bf x})) \leq   \P(X' \in A_{k,l}({\bf x}))  \leq   \frac{p'}{p}    \P(X \in A_{k,l}({\bf x})).
\end{align}
Continuing the proof as in the uniform case leads to 
\begin{align*}
& \E \Big[(\E[\textup{Diam}(L_{\U}(\bm{x}))\1(X_1\in L_\U(\bm{x}))|X_1])^2 \Big]\\
& \leq \frac{p'}{p} d^2 2^{-k}\E\Big[2^{-K_1-L_1-(1/2)\sum_{j=1}^d|K_j-L_j|}\Big] \\
&\le  \frac{p'}{p} d^2 2^{-k}\E\Big[2^{-2(K_1+L_1)}\Big]^{1/2}\E\Big[2^{-\sum_{j=1}^d|K_j-L_j|}\Big]^{1/2}\,,
\end{align*}
by an application of Cauchy-Schwarz inequality. Simple calculations involving binomial distributions show that $\E[2^{-2K_1}]=(1-3/(4d))^k = \alpha_2^k$. Therefore 
\begin{align*}
\E \Big[(\E[\textup{Diam}(L_{\U}(\bm{x}))\1(X_1'\in L_\U(\bm{x}))|X_1'])^2 \Big]
& \leq \frac{p'}{p} d^2 2^{-k} \alpha_2^k
\end{align*}
concluding the proof of the second statement.
\end{proof}

\subsection{Proof of \Cref{cor: TCLB}}\label{proof: corTCLB}

Applying the Delta method to \Cref{th: TCLB} with the function 
\begin{align}
    g: u \mapsto \dfrac{p(1-p')u}{p'(1-p)(1- u)+(1-p')pu},
\end{align}
leads to 
\begin{align}
    \frac{1}{g'(\mu'({\bf x}))} \sqrt{\dfrac {n}{s^2 V_{1,s}'}}( \widehat{\mu}_{\textup{RB},s}^{\textup{ICRF}}(\bm x)-\mu'(\bm x))\overset{d}{\underset{n \to \infty}{\longrightarrow}} \mathcal N(0,1)\,, 
\end{align}
with, for all $\depth$ large enough, 
\begin{align}
    \frac{c_1'({\bf x}) 2^{\depth}}{s^2 \depth^{(d-1)/2}} \leq \frac{V_{1,s}'}{\mu'(\bm x)(1-\mu'(\bm x))} \leq \frac{c_2'({\bf x}) 2^{\depth}}{s^2\depth^{(d-1)/2}},
\end{align}
for some constant $c_1'({\bf x})$ and $c_2'({\bf x})$, which are made explicit in  \Cref{sec: proof_TCLB} (\Cref{prop:v1prime}). Simple calculations show that letting
\begin{align}
    \alpha & = p(1-p')\\
    \beta & = p-p'\\
    \gamma & = p'(1-p),
\end{align}
we have
\begin{align}
    g(u) & = \frac{\alpha u}{\beta u + \gamma}\\
    g'(u) & = \frac{\alpha \gamma }{(\beta u + \gamma)^2}.
\end{align}
Since $g(\mu'({\bf x})) = \mu({\bf x})$, we have 
\begin{align}
    \frac{\alpha \mu'({\bf x})}{\beta \mu'({\bf x}) + \gamma}  = \mu({\bf x}) 
    \Longleftrightarrow &  \frac{1}{\beta \mu'({\bf x}) + \gamma}  = \frac{\mu({\bf x})}{\alpha \mu'({\bf x})}.
\end{align}
Thus, 
\begin{align}
    g'(\mu'({\bf x})) & = \frac{\alpha \gamma }{(\beta \mu'({\bf x}) + \gamma)^2} \\
    & = \alpha \gamma \left(\frac{\mu({\bf x})}{\alpha \mu'({\bf x})}\right)^2\\
    & = \frac{\gamma}{\alpha} \left(\frac{\mu({\bf x})}{\mu'({\bf x})}\right)^2 \\
    & = \frac{p'(1-p)}{p(1-p')} \left(\frac{\mu({\bf x})}{\mu'({\bf x})}\right)^2.
\end{align}
This concludes the proof.

\begin{flushright}
    $\Box$
\end{flushright}

\section{Proof of \Cref{thm_comparison_variances}}\label{sec:proof_th_compare}
The proof of \Cref{thm_comparison_variances} relies solely on extending \Cref{prop:v1prime} for the ICRF and IS-ICRF models to the setting defined by Condition {\bf (H3)}. This extension reduces to adapting \Cref{lem: pU} and \Cref{prp:asequiv_bis} to this context.

Before stating the results, 
note that, under assumption {\bf (H3)}, all densities satisfy, for all $x \in [0,1]^d$,
\begin{align}
    b_1 \leq f_X(x), f_{X'}(x), f_{X''}(x) \leq b_2,
\end{align}
since, by assumption, for all $x \in [0,1]^d$,
\begin{align}
    b_1 \leq  f_{X|Y=1}(x),  f_{X|Y=0}(x) \leq b_2, 
\end{align}
and
\begin{align}
f_{X}(x) & = p f_{X|Y=1}(x) + (1-p) f_{X|Y=0}(x), \\
f_{X'}(x) & = p' f_{X|Y=1}(x) + (1-p') f_{X|Y=0}(x), \\
f_{X''}(x) & = p'' f_{X|Y=1}(x) + (1-p'') f_{X|Y=0}(x).
\end{align}
Besides, since $f_{X|Y=0}$ and $f_{X|Y=1}$ are Lipschitz, the three above densities are Lipschitz. Finally, since $\mu(x) = f_{X|Y=1}(x) p/f_X(x)$ and by assumption $f_X$ is lower bounded and Lispchitz, and $f_{X|Y=1}$ is Lipschitz, the function $\mu$ is Lipschitz. This is also true for $\mu'$ and $\mu''$. We respectively denote by $L, L', L''$ their Lipschitz constants.

The next lemma extends \Cref{lem: pU} providing  a characterisation of $p_{\depth,\U}(\bm{x})$ to this setting.

\begin{lem}[\textbf{Characterisation of $p_{\depth,\U}(\bm{x})$}]\label{lem:pUprimeprime}
Under Assumptions \textbf{(H3)}, $\mu''$ is $L''$ Lipschitz. Thus, we have
\begin{align*}
   p_{\depth,\U}(\bm{x})
   & = \frac{c({\bf x})}{2^{\depth}} \left( 1 + W''_{\U}({\bf x})  \right),
\end{align*}
where $W''_{\U}= \alpha({\bf x}) \varepsilon''_{\U}({\bf x}) \textup{Diam}(L_{\U}({\bf x}))$ and
\begin{align}
  c({\bf x}) & =   \frac{1-p}{1-p''} +  \frac{p -p''}{1-p''} \frac{\mu''({\bm x})}{p''},\\
  \alpha({\bf x}) & = \frac{p-p''}{p''(1-p'') c({\bf x})},
\end{align}
and $\varepsilon''_{\U}({\bf x})$ is a bounded random variable satisfying $|\varepsilon''_{\U}({\bf x})| \leq 2L''$ almost surely. Morever, 
\begin{align}
    \frac{b_1}{c({\bf x})}  \leq 1 + W_{\U}''({\bf x}) \leq 1 + 2L'' |\alpha({\bf x})|.
\end{align}
\end{lem}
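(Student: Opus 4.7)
My plan is to mirror the proof of \Cref{lem: pU} (Characterisation of $p'_{\depth,\U}(\bm x)$), but with the roles of the original and reweighted distributions swapped: under \textbf{(H3)}, $X''$ plays the role of the uniform reference (since $f_{X''}\equiv 1$ on $[0,1]^d$), while $X$ plays the role of the weighted distribution that we must control. The Lipschitz property of $\mu''$ follows immediately because $\mu''(\bm x) = p'' f_{X|Y=1}(\bm x)/f_{X''}(\bm x) = p'' f_{X|Y=1}(\bm x)$, and $f_{X|Y=1}$ is $L$-Lipschitz under \textbf{(H3)}, yielding $L'' = p'' L$.

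First, applying Bayes' formula to both $p_{\depth,\U}(\bm x)$ and $p''_{\depth,\U}(\bm x)$ gives two decompositions involving the conditional probabilities $\P(X\in L_\U(\bm x)\mid \U,Y=j)$ and $\P(X''\in L_\U(\bm x)\mid \U,Y''=j)$. Since $f_{X|Y=j} = f_{X''|Y''=j}$ by construction and $X''$ is uniform so that $p''_{\depth,\U}(\bm x) = 2^{-\depth}$, eliminating the conditional probability given $Y=0$ between the two identities yields
\[
p_{\depth,\U}(\bm x) = \frac{1-p}{1-p''}\, 2^{-\depth} + \frac{p-p''}{1-p''}\,\P(X\in L_\U(\bm x)\mid \U,Y=1).
\]
Then, since $f_{X''}\equiv 1$, Bayes gives $f_{X|Y=1}(\bm y) = \mu''(\bm y)/p''$, so
\[
\P(X\in L_\U(\bm x)\mid \U,Y=1) = \frac{1}{p''}\int_{L_\U(\bm x)} \mu''(\bm y)\,d\bm y.
\]
Writing $\mu''(\bm y)=\mu''(\bm x)+(\mu''(\bm y)-\mu''(\bm x))$ and using $\mathrm{Leb}(L_\U(\bm x))=2^{-\depth}$, the integral becomes $\mu''(\bm x)2^{-\depth} + 2^{-\depth}\textup{Diam}(L_\U(\bm x))\varepsilon''_\U(\bm x)$, where $\varepsilon''_\U(\bm x)$ is defined (following the convention adopted in \Cref{lem: pU}) to be the normalized deviation, almost surely bounded by $2L''$ thanks to the Lipschitz property of $\mu''$. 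Collecting terms and factoring out $c(\bm x)/2^{\depth}$ produces the stated formula $p_{\depth,\U}(\bm x) = (c(\bm x)/2^{\depth})(1+\alpha(\bm x)\varepsilon''_\U(\bm x)\textup{Diam}(L_\U(\bm x)))$ with $c(\bm x)$ and $\alpha(\bm x)$ exactly as claimed.

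The upper bound $1+W''_\U(\bm x) \leq 1 + 2L''|\alpha(\bm x)|$ follows immediately from $|\varepsilon''_\U|\leq 2L''$ and $\textup{Diam}(L_\U(\bm x))\leq 1$. For the lower bound, I would invoke the positivity assumption in \textbf{(H3)}: since $f_X\geq b_1$ on $[0,1]^d$ and $L_\U(\bm x)$ has Lebesgue measure $2^{-\depth}$,
\[
p_{\depth,\U}(\bm x) = \int_{L_\U(\bm x)} f_X(\bm y)\,d\bm y \;\geq\; b_1\, 2^{-\depth},
\]
which combined with the formula yields $c(\bm x)(1+W''_\U(\bm x)) \geq b_1$, i.e.\ $1+W''_\U(\bm x) \geq b_1/c(\bm x)$. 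There is no real obstacle here; the only mildly delicate step is defining $\varepsilon''_\U(\bm x)$ in a way consistent with \Cref{lem: pU} so that the $2L''$ (rather than $L''$) constant appears, but this is a routine bookkeeping matter.
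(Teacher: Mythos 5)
Your proof is correct and follows essentially the same route as the paper's: Bayes' decomposition for $p_{\depth,\U}$ and $p''_{\depth,\U}$, elimination of the $Y=0$ conditional using $p''_{\depth,\U}(\bm x)=2^{-\depth}$, then rewriting $\P(X\in L_\U(\bm x)\mid\U,Y=1)=\tfrac1{p''}\int_{L_\U(\bm x)}\mu''(\bm y)\,d\bm y$ and isolating the deviation term $\varepsilon''_\U$, with the lower bound obtained from $p_{\depth,\U}(\bm x)\geq b_1 2^{-\depth}$ exactly as in the paper. The only minor difference is that you derive $L''=p''L$ directly from $\mu''=p''f_{X|Y=1}$, whereas the paper gives the Lipschitz property of $\mu''$ via the more generic argument used for $\mu$ and $\mu'$; both are valid.
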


\begin{proof}[Proof of \Cref{lem:pUprimeprime}.] By assumption, we have
\begin{align}
p_{\depth,\U}''(\bm{x})&= p''\P(X \in L_{\U}(\bm{x})\mid \U, Y=1)+(1-p'')\P(X  \in L_{\U}(\bm{x})\mid \U, Y=0)
\end{align}
and 
\begin{align}
    p_{\depth, \U}(\bm x)= p \P(X\in L_{\U}(\bm{x})\mid \U, Y=1)+(1-p)\P(X \in L_{\U}(\bm{x})\mid \U, Y=0)\,.
\end{align}
Combining the two previous equalities, and noticing that $p_{\depth, \U}''(\bm x)= 2^{-\depth}$ since  $X''$ is uniformly distributed on $[0,1]^d$, 
\begin{align}
    p_{\depth,\U}(\bm{x}) & = \left(\frac{1-p}{1-p''} \right) 2^{-\depth} + \left( p - \frac{(1-p)p''}{(1-p'')} \right) \P(X \in L_{\U}(\bm{x})\mid \U, Y=1) \\
    & = \left(\frac{1-p}{1-p''} \right) 2^{-\depth} + \left( \frac{p(1-p'') - (1-p)p''}{(1-p'')} \right) \P(X \in L_{\U}(\bm{x})\mid \U, Y=1) \\
    & = \left(\frac{1-p}{1-p''} \right) 2^{-\depth} + \left( \frac{p -p''}{1-p''} \right) \P(X \in L_{\U}(\bm{x})\mid \U, Y=1).
\end{align}
As proved in \Cref{lem: pU},  
\begin{align*}
 & \P(X \in  L_{\U}(\bm{x})\mid \U, Y=1)\\
 & =\P(X'' \in  L_{\U}(\bm{x})\mid \U, Y''=1)\\
   & =  \dfrac {\mu''(\bm x)}p'' \P(X'' \in L_{\U}(\bm{x}) \mid \U ) + \frac{1}{p''} \textup{Diam}(L_{\U}(\bm{x})) \P(X'' \in L_{\U}(\bm{x}) \mid \U) \varepsilon'_{\U}({\bf x})\\
   &    =  \frac{ 2^{-\depth}}{p''} (\mu''({\bf x}) + \varepsilon''_{\U}({\bf x}) \textup{Diam}(L_{\U}(\bm{x})))
\end{align*}
where 
$$
\varepsilon''_{\U}(\bm{x}) = \frac{ \mu''_{\depth, \U}(\bm{x}) - \mu''(\bm x)}{\textup{Diam}(L_{\U}(\bm{x}))},
$$
which verifies, under Condition {\bf (H1)}, $|\varepsilon''_\U({\bf x})|\leq L''$ almost surely. 
Combining those results we get
\begin{align*}
   p_{\depth,\U}(\bm{x})
    & = \left(\frac{1-p}{1-p''} \right) 2^{-\depth} + \left( \frac{p -p''}{1-p''} \right) \P(X \in L_{\U}(\bm{x})\mid \U, Y=1) \\
   & = \left( \frac{1-p}{1-p''} +  \frac{p -p''}{1-p''} \frac{\mu''({\bm x})}{p''} + \frac{p -p''}{1-p''} \frac{\varepsilon''_{\U}({\bf x}) \textup{Diam}(L_{\U}(\bm{x}))}{p''}  \right)2^{-\depth}\\
   & = \frac{c({\bf x})}{2^{\depth}} \left( 1 + \alpha({\bf x}) \varepsilon''_{\U}({\bf x}) \textup{Diam}(L_{\U}(\bm{x})) \right),
\end{align*}
with 
\begin{align}
  c({\bf x}) & =   \frac{1-p}{1-p''} +  \frac{p -p''}{1-p''} \frac{\mu''({\bm x})}{p''},
\end{align}
and
\begin{align}
\alpha({\bf x}) = \frac{p-p''}{p''(1-p'') c({\bf x})}.
\end{align}
Letting 
\begin{align*}
   p_{\depth,\U}(\bm{x})
   & = \frac{c({\bf x})}{2^{\depth}} \left( 1 + W''_{\U}({\bf x})  \right),
\end{align*}
where $W_{\U}''{\bf x}= \alpha({\bf x}) \varepsilon''_{\U}({\bf x}) \textup{Diam}(L_{\U}({\bf x}))$, it holds that $p_{\depth,\U}(\bm{x}) \geq b_1 2^{-\depth}$. This concludes the proof since it  implies 
\begin{align}
    \frac{b_1}{c({\bf x})}  \leq 1 + W''_{\U}({\bf x}) \leq 1 + 2L''| \alpha({\bf x})|.
\end{align}

\end{proof}
 The next proposition provides upper and lower bounds of the kernel $\E [\P(X_1\in L_\U(\bm{x})|X_1)^2 ]$  extending  \Cref{prp:asequiv_bis} to this setting.

\begin{prp}[\textbf{Bounds of the kernel}]\label{prp:asequiv_ter}
Let $\bm x \in [0,1]^d$ with $d\ge 2$ and assume Condition \textbf{(H3)}  holds. Then, for all $\depth$ large enough, 
\begin{itemize}
   \item [i)]$ 
     \dfrac{C(d)b_1}{2^{\depth} \depth^{(d-1)/2}} \leq \E \Big[\P(X_1\in L_\U(\bm{x})|X_1)^2 \Big] \leq \dfrac{C(d)b_2}{2^{\depth} \depth^{(d-1)/2}},$

\item [ii)]$
    \E \Big[(\E[\textup{Diam}(L_{\U}(\bm{x}))\1(X_1\in L_\U(\bm{x})|X_1])^2 \Big]=o\Big(2^{-k} \alpha_2^k\Big)$.
\end{itemize}
The two statements hold also replacing $X_1$ by $X_1'$. 
\end{prp}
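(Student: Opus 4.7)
The plan is to follow the exact template of \Cref{prp:asequiv_bis}, replacing the density ratio bounds $(1-p')/(1-p)$ and $p'/p$ with the uniform bounds $b_1$ and $b_2$ that are available under \textbf{(H3)}. The crucial structural fact is that Assumption \textbf{(H3)} postulates the existence of $p''$ such that $X''$ (with conditional densities identical to $X$ but class prior $p''$) is uniformly distributed on $[0,1]^d$. Hence the reference uniform calculations of \Cref{prp:asequiv} can be invoked directly, and we only need to transfer those estimates to $X$ (and $X'$) via the pointwise density comparison $b_1 \leq f_X({\bf u}), f_{X'}({\bf u}) \leq b_2$ that was established just before the statement of \Cref{prp:asequiv_ter}.

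For assertion i), I would reproduce the split-distribution expansion of \citet{arnould2023interpolation}: writing ${\bf K}=(K_1,\ldots,K_d)$ for the multinomial split counts, one has
\begin{align*}
\E\!\left[\P(X_1\in L_\U(\bm{x})\mid X_1)^2\right]
= \sum_{\substack{{\bf k},{\bf l}\\ \sum k_j=\sum l_j=k}}
\frac{k!}{k_1!\cdots k_d!}\,\frac{k!}{l_1!\cdots l_d!}\Big(\tfrac{1}{d}\Big)^{2k}\P(X_1\in A_{{\bf k},{\bf l}}(\bm{x})),
\end{align*}
with the cells $A_{{\bf k},{\bf l}}({\bf x})$ defined exactly as in the proof of \Cref{prp:asequiv_bis}. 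Sandwiching $\P(X_1\in A_{{\bf k},{\bf l}}({\bf x}))$ between $b_1 \,|A_{{\bf k},{\bf l}}({\bf x})|$ and $b_2\,|A_{{\bf k},{\bf l}}({\bf x})|$ and recognising $|A_{{\bf k},{\bf l}}({\bf x})|=\P(X''\in A_{{\bf k},{\bf l}}({\bf x}))$ (since $f_{X''}\equiv 1$), one gets
\[
b_1\,\E\!\left[\P(X''\in L_\U(\bm{x})\mid X'')^2\right]\le \E\!\left[\P(X_1\in L_\U(\bm{x})\mid X_1)^2\right]\le b_2\,\E\!\left[\P(X''\in L_\U(\bm{x})\mid X'')^2\right],
\]
and the uniform estimate from \Cref{prp:asequiv} i) gives the claimed $C(d) b_1 / (2^\depth k^{(d-1)/2})$ lower and $C(d) b_2/(2^\depth k^{(d-1)/2})$ upper bounds for all $k$ large enough. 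Exactly the same chain applies when $X_1$ is replaced by $X_1'$, since $f_{X'}$ also lies in $[b_1,b_2]$.

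For assertion ii), the same strategy is used on the alternative expansion appearing in the proof of \Cref{prp:asequiv_bis} ii), namely
\begin{align*}
\E\!\left[(\E[\mathrm{Diam}(L_\U(\bm{x}))\1(X_1\in L_\U(\bm{x}))\mid X_1])^2\right]
\le d^2\sum_{\substack{{\bf k},{\bf l}}} \frac{k!}{k_1!\cdots k_d!}\frac{k!}{l_1!\cdots l_d!}\Big(\tfrac{1}{d}\Big)^{2k} 2^{-k_1-l_1}\P(X_1\in A_{{\bf k},{\bf l}}({\bf x})),
\end{align*}
after bounding the max of the $2^{-K_j({\bf x})}$ by the sum. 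Sandwiching $\P(X_1\in A_{{\bf k},{\bf l}}({\bf x}))\le b_2\,\P(X''\in A_{{\bf k},{\bf l}}({\bf x}))$ reduces the problem to the uniform multinomial moment already controlled via Cauchy--Schwarz by $d^2\,2^{-k}\E[2^{-2(K_1+L_1)}]^{1/2}\E[2^{-\sum_j|K_j-L_j|}]^{1/2}\le d^2\,2^{-k}\alpha_2^k$, yielding the $o(2^{-k}\alpha_2^k)$ bound. The same argument with $f_{X_1'}\le b_2$ handles the case of $X_1'$.

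The main technical ingredient is simply the observation that the density dominations $b_1\le f_X,f_{X'}\le b_2$ provided by \textbf{(H3)} play exactly the role that the constants $(1-p')/(1-p)$ and $p'/p$ played in the proof of \Cref{prp:asequiv_bis}, so no new probabilistic input (beyond the uniform-case computation of \Cref{prp:asequiv}) is required. I expect no genuine obstacle: the proof is essentially a copy of that of \Cref{prp:asequiv_bis} with the density-ratio bounds substituted.
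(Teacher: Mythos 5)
Your proposal matches the paper's proof, which the authors describe as a "straightforward extension of \Cref{prp:asequiv_bis}" using precisely the bounds $b_1 \leq f_X, f_{X'}, f_{X''} \leq b_2$ from \textbf{(H3)}. You correctly identify that the role of the uniform reference density is now played by $f_{X''}$, so that $\P(X\in A_{{\bf k},{\bf l}}({\bf x}))$ and $\P(X'\in A_{{\bf k},{\bf l}}({\bf x}))$ are sandwiched by $b_1\P(X''\in A_{{\bf k},{\bf l}}({\bf x}))$ and $b_2\P(X''\in A_{{\bf k},{\bf l}}({\bf x}))$, after which the estimates of \Cref{prp:asequiv} apply verbatim.
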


\begin{proof}[Proof of \Cref{prp:asequiv_ter}]
The proof is a straightforward extension of \Cref{prp:asequiv_bis} noticing that, for all $x \in [0,1]^d$,
\begin{align}
    b_1 \leq f_X(x), f_{X'}(x), f_{X''}(x) \leq b_2.
\end{align}
\end{proof}

We are now ready to prove \Cref{thm_comparison_variances}. 

\begin{proof}[Proof of \Cref{thm_comparison_variances}]

Using \Cref{lem:pUprimeprime} and \Cref{prp:asequiv_ter}, we can adapt the proof of \Cref{prop:v1prime} to show that the asymptotic variance of ICRF, denoted by $V_{1,s}$ satisfies 
\begin{align}
   &   C(d)b_1 \frac{ \mu({\bf x})(1-\mu({\bf x}))}{(1 + 2L |\alpha({\bf x})|)c({\bf x})^2} \leq  \dfrac{s^2 \depth^{(d-1)/2}}{2^{\depth}} V_{1,s}.
\end{align}
Similarly, one can adapt the proof of \Cref{prop:v1prime} to show that the asymptotic variance of IS-ICRF, denoted by $V'_{1,s} g'(\mu'({\bf x}))^2$, satisfies 
\begin{align}
  \dfrac{s^2 \depth^{(d-1)/2}}{2^{\depth}} V_{1,s}'  \leq 2 C(d) b_2 \frac{ \mu'({\bf x})(1-\mu'({\bf x}))}{ c'({\bf x})^2}.
\end{align}
We want to prove that $V_{1,s}' g'(\mu'({\bf x}))^2 < V_{1,s}$. We have
\begin{align}
    & \dfrac{V_{1,s}'}{V_{1,s}} g'(\mu'({\bf x}))^2 \\
    & = \dfrac{V_{1,s}'\dfrac{s^2 \depth^{(d-1)/2}}{2^{\depth}}}{V_{1,s}\dfrac{s^2 \depth^{(d-1)/2}}{2^{\depth}}} g'(\mu'({\bf x}))^2 \\
    & \leq \frac{2 C(d) b_2 \frac{ \mu'({\bf x})(1-\mu'({\bf x}))}{ c'({\bf x})^2}}{ C(d) b_1 \frac{ \mu({\bf x})(1-\mu({\bf x}))}{(1 + 2L |\alpha({\bf x})|)c({\bf x})^2}} \left(\frac{p'(1-p)}{p(1-p')}\right)^2 \left(\frac{\mu({\bf x})}{\mu'({\bf x})}\right)^4\\
    & \leq \frac{2b_2}{b_1}  \frac{ \mu'({\bf x})(1-\mu'({\bf x}))}{\mu({\bf x})(1-\mu({\bf x}))}   \frac{ c({\bf x})^2}{ c'({\bf x})^2} (1 + 2L |\alpha({\bf x})|) \left(\frac{p'(1-p)}{p(1-p')}\right)^2 \left(\frac{\mu({\bf x})}{\mu'({\bf x})}\right)^4 \\
    & \leq \frac{2b_2}{b_1} \frac{ (1-\mu'({\bf x}))}{\mu'({\bf x})^3(1-\mu({\bf x}))}   \frac{ c({\bf x})^2}{ c'({\bf x})^2} (1 + 2L |\alpha({\bf x})|) \left(\frac{p'(1-p)}{1-p'}\right)^2   \frac{\mu({\bm x})^3}{p^2}.
\end{align}
We have
\begin{align}
    \mu(\bm x) & = \frac{f_{X|Y=1}(\bm x) p}{f_X(\bm x)}  = \delta(\bm x) p,
\end{align}
with $\delta(\bm x) = f_{X|Y=1}(\bm x) p/f_X(\bm x)$. Thus, 
\begin{align}
    & \dfrac{V_{1,s}'}{V_{1,s}} g'(\mu'({\bf x}))^2 \\
    & \leq \frac{2b_2}{b_1} \frac{ (1-\mu'({\bf x}))}{\mu'({\bf x})^3(1-\mu({\bf x}))}   \frac{ c({\bf x})^2}{ c'({\bf x})^2} (1 + 2L |\alpha({\bf x})|) \left(\frac{p'(1-p)}{1-p'}\right)^2   \delta(\bm x)^3 p\\
    & = O(p),
\end{align}
which tends to zero, when $p$ tends to zero. 
\end{proof}

\section{Additional experiments}
\label{app_sec_additional_experiments}

In this additional numerical section, we compare $\widehat{\mu}_{s}^{\textup{ICRF}}(\bm x)$, $\widehat{\mu}_{\textup{RB},s}^{\textup{ICRF}}(\bm x)$ and $\widehat{\mu}_{\textup{IS},s}^{\textup{ICRF}}(\bm x)$  with the same setting as in \Cref{fig:hist_small_sample} but for a larger sample size $n=10.000$ and a smaller number of Monte Carlo repetitions $B=100$.

Similarly to  \Cref{fig:hist_small_sample}, we first observe that the average of the predictions (blue dashed line) is close to the theoretical value (red line), which is either $\mu({\bf x})$ for the first and third plot, or $\mu'({\bf x})$ for the second one (and again this holds even more so for the Breiman's forests trained on the original dataset (RF)).

Once more the prediction of Breiman's forest may be far from the correct value (predictions up to $0.35$ compared to the true value $0.17$) and the predictions of the IS-RF (the debiased forest) are closer to the true value. This is quantified by a smaller variance $0.039$ compared to the variance of the prediction of Breiman's forests $0.074$. This further supports, from an experimental perspective, the variance reduction established in \Cref{thm_comparison_variances} for small $p$.  

\begin{figure}[H]
    { \centering                                       
     \begin{subfigure}[b]{0.45\textwidth}
         \centering
         \includegraphics[width=1.3\textwidth]{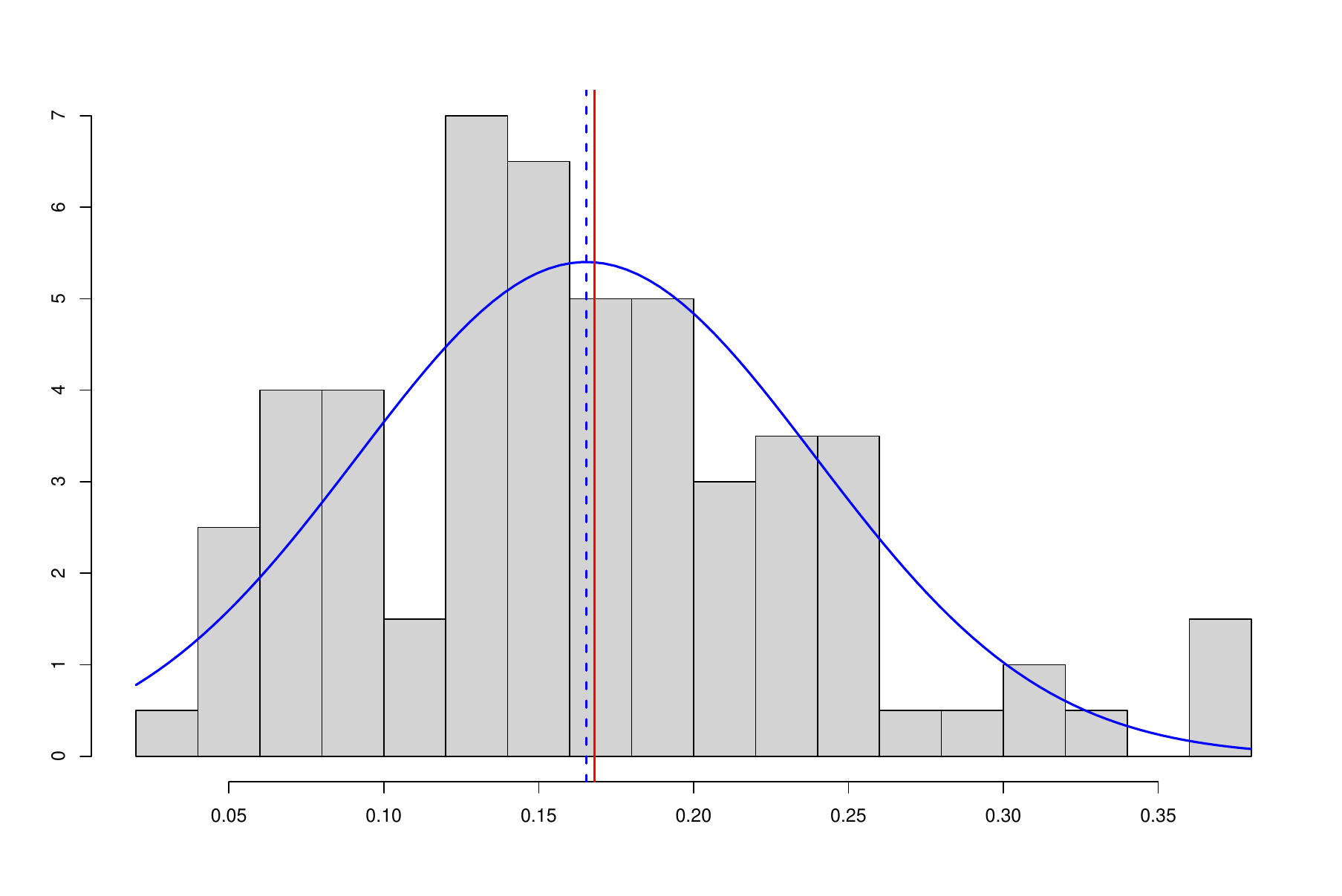}
         \caption{RF}
         \label{hist_large_sample_RF}
     \end{subfigure}
     \hfill
     \begin{subfigure}[b]{0.45\textwidth}
         \centering
         \includegraphics[width=1.3\textwidth]{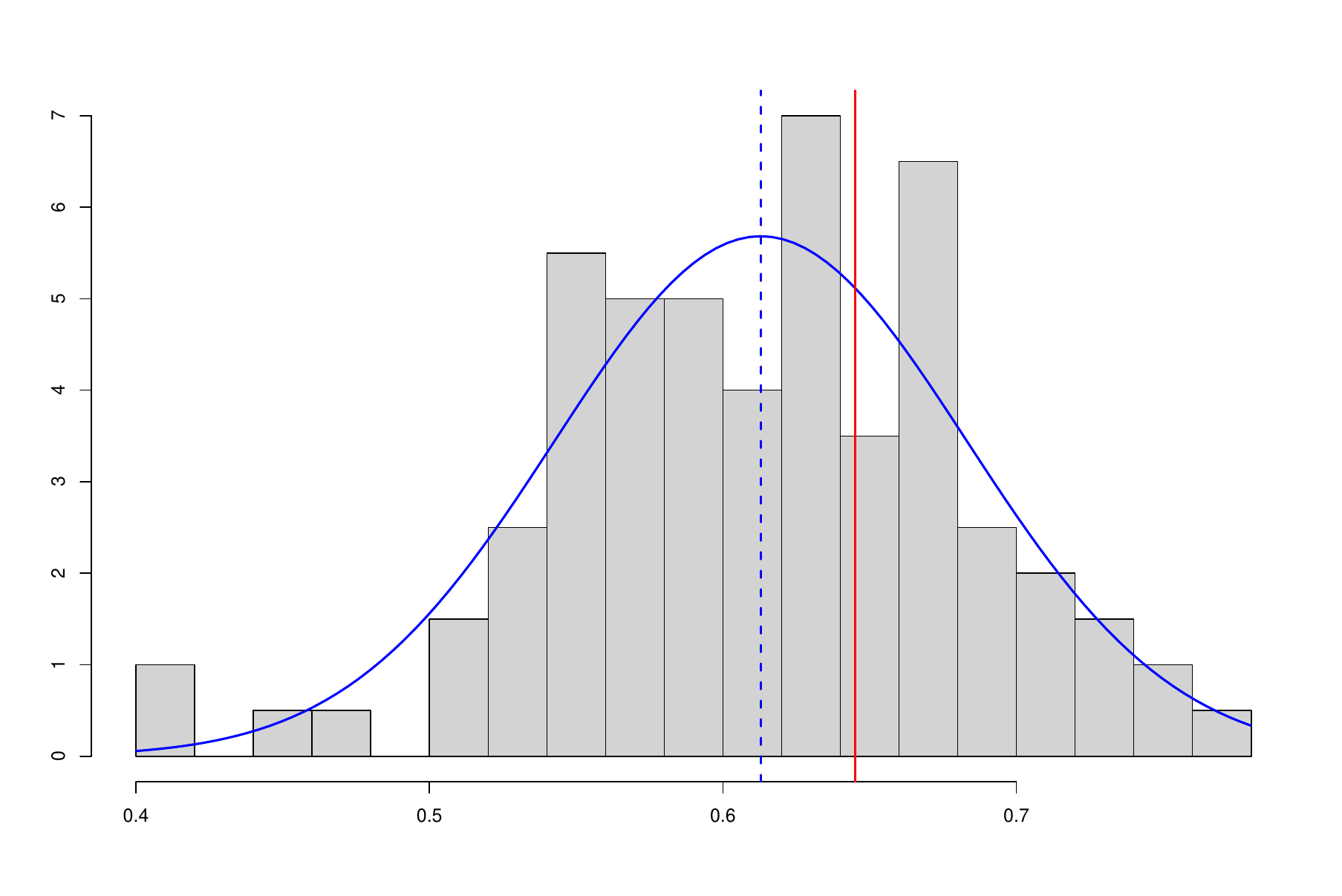}
         \caption{RB-RF}
         \label{hist_large_sample_RBRF}
     \end{subfigure}
      \hfill
     \begin{subfigure}[b]{0.45\textwidth}
         \centering
        \includegraphics[width=1.3\textwidth]{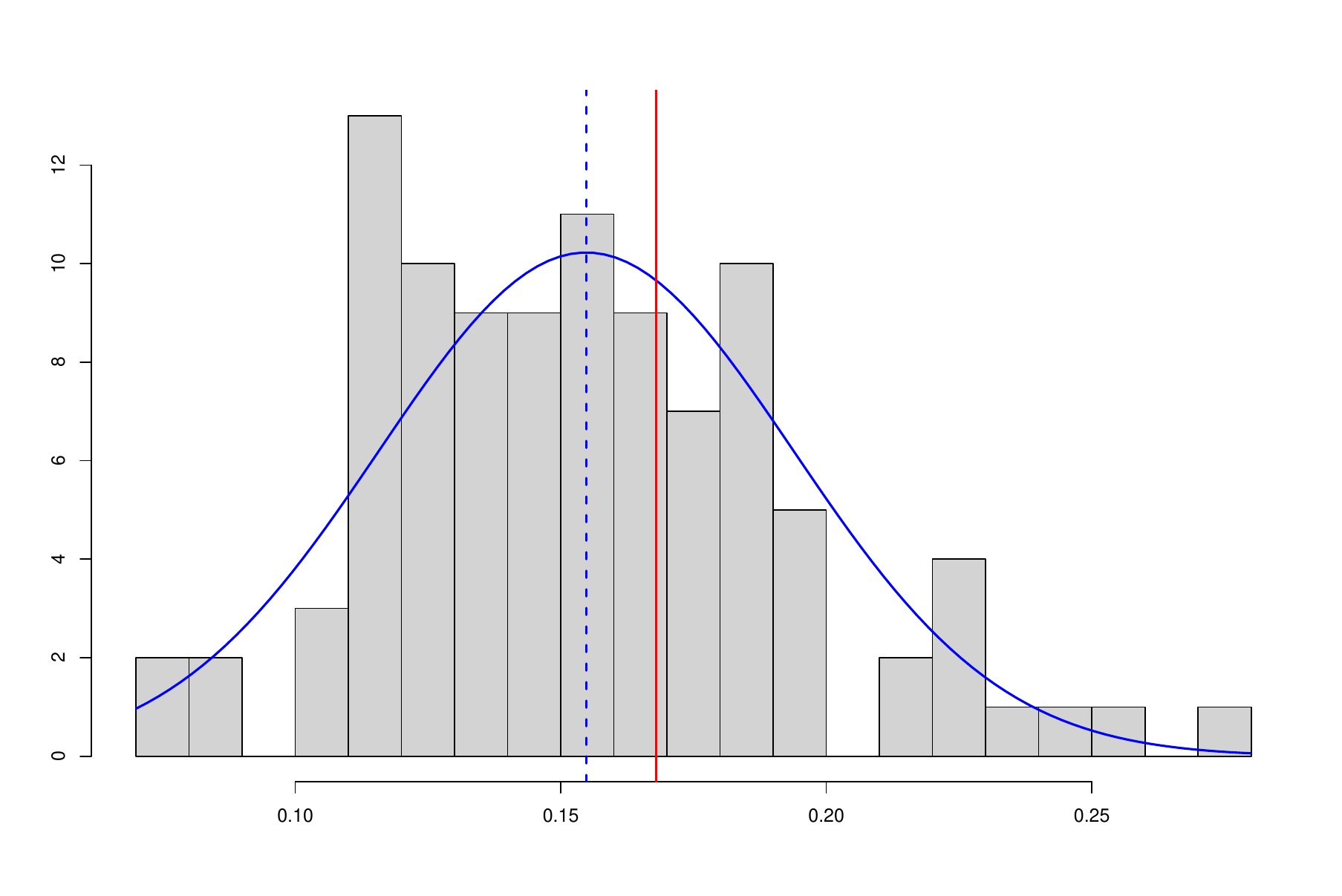}
        \caption{IS-RF}
         \label{hist_large_sample_ISRF}
    \end{subfigure}
        \caption{Histograms of predictions from different estimators in scenario \textbf{(ImB-Sc)} with $\alpha_1 = 0.9$, $\alpha_2 = 0.7$, $p'=0.5$, $n=10.000$ and $B=100$ replicates. The blue curve is that of a Gaussian density whose mean (blue dashed line) and variance are estimated from the data. The red line corresponds to the centering term provided by our theory ($\mu(x)$ for the first and third graph, $\mu'(x)$ for the second one). From left to right, empirical variances are: $0.074$, $0.070$, $0.039$.}
        \label{fig:hist_large_sample}}
\end{figure}

\bibliographystyle{plainnat}
\bibliography{mybiblio}	
\end{document}